\documentclass{article}

\usepackage[preprint]{neurips_2026}

\usepackage[utf8]{inputenc} 
\usepackage[T1]{fontenc}    
\usepackage{hyperref}       
\usepackage{url}            
\usepackage{booktabs}       
\usepackage{amsfonts}       
\usepackage{nicefrac}       
\usepackage{microtype}      
\usepackage{xcolor}         
\usepackage{amsmath}
\usepackage{amsthm}
\usepackage{graphicx}
\usepackage{bm}
\usepackage{subcaption}
\usepackage{etoc}

\newcommand{\best}[1]{{\boldmath\textbf{#1}}}
\newtheorem{proposition}{Proposition}[section]

\makeatletter
\newcommand{\seq}[1]{%
  \ifcat\noexpand#1a
    \mathbf{#1}%
  \else
    \bm{#1}%
  \fi
}
\makeatother

\usepackage{xspace}

\usepackage{algorithmicx}
\usepackage[ruled,vlined]{algorithm2e}

\usepackage{listings}

\definecolor{codecomment}{RGB}{50,120,120}
\definecolor{codefunction}{RGB}{220,30,110}
\definecolor{codeoperator}{RGB}{20,20,220}

\lstdefinestyle{pytorch}{
    language=Python,
    basicstyle=\ttfamily\small,
    commentstyle=\color{codecomment},
    keywordstyle=\color{codefunction},
    showstringspaces=false,
    columns=fullflexible,
    keepspaces=true,
    breaklines=true,
    aboveskip=0.6em,
    belowskip=0.2em,
    morekeywords={
        sample_t_r,
        randn_like,
        jvp,
        stopgrad,
        metric
    }
}

\usepackage{float}

\newcommand{\EqF}{\textsc{EqF}\xspace}
\newcommand{\ones}{\mathbf{1}}
\newcommand{\zeros}{\mathbf{0}}

\newcommand{\authorblock}[2]{%
  \begin{minipage}[t]{0.23\textwidth}
    \centering
    {\makebox[0pt][c]{\bfseries #1}}\\
    {\normalfont #2}
  \end{minipage}%
}

\title{Equilibrium Forcing: Adaptive Video Generation Without Noise Conditioning} 

\author{%
\makebox[\textwidth][c]{%
  \authorblock{
    Hansen Jin Lillemark\textsuperscript{1}
    \hspace{-0.5em}
    \thanks{Equal Contribution. Correspondence to
    \texttt{hlillemark@ucsd.edu}, \texttt{a5rojas@ucsd.edu}}
  }{}
  \hfill
  \authorblock{
    Alex Rojas\textsuperscript{1}\footnotemark[1]
  }{}
  \hfill
  \authorblock{
    Zachary Novack\textsuperscript{1}
  }{}
  \hfill
  \authorblock{
    Runqian Wang\textsuperscript{2}
  }{}
}\\[0.5em]
\makebox[\textwidth][c]{%
  \authorblock{
    Yilun Du\textsuperscript{3}
  }{}
  \hfill
  \authorblock{
    Yian Ma\textsuperscript{1}
  }{}
  \hfill
  \authorblock{
    Taylor Berg-Kirkpatrick\textsuperscript{1}
  }{}
  \hfill
  \authorblock{
    Rose Yu\textsuperscript{1}
  }{}
}\\[0.5em]
\makebox[\textwidth][c]{%
  \textsuperscript{1}UC San Diego
  \qquad
  \textsuperscript{2}MIT
  \qquad
  \textsuperscript{3}Harvard University
}
}

\begin{document}

\maketitle 

\vspace{-2em}

\begin{abstract}

Standard autoregressive video generation algorithms based on Diffusion and Flow Matching rely on rigid training objectives and static sampling schedules, limiting inference procedures from adapting to the data. We introduce Equilibrium Forcing (\EqF), a simplified framework for video denoising generative models \textit{without} noise level conditioning. \EqF pioneers modular training- and inference-time designs for noise-unconditional generation that decouple learning the denoising field from sampling. This flexibility allows for inference-time algorithms that operate in a closed loop by adapting to feedback from the sample, improving video quality and consistency on challenging autoregressive video generation benchmarks. Extensive analysis elucidates exactly how removing the noise level conditioning enables \EqF's data-dependent inference properties to surpass the performance of standard noise level-conditional denoising video methods. Project page: \href{https://equilibriumforcing.github.io/}{https://equilibriumforcing.github.io/}

\end{abstract}

\section{Introduction}

Video generation has emerged as a central problem in world modeling, simulation, and interactive content creation, where models must produce temporally coherent video rollouts under limited inference budgets. Algorithms for autoregressive inference generate future frames conditional on previously generated segments, leading to accumulation of local errors over time \citep{huang2025selfforcingbridgingtraintest}. We posit that effective samplers should operate in a closed loop with feedback from the state of the generation procedure, adjusting the sampling procedure to balance video refinement and the remaining budget.

Denoising generative models have become the standard approach for autoregressive video generation, but they currently rely on rigid inference schedules for noise level conditioning that preclude adaptive sampling \citep{song2025historyguidedvideodiffusion}. Thought to be necessary for stabilizing autoregressive video generation, these inference schedules predetermine how much noise is removed from the sample at each step through noise conditions passed directly to the model. Because the schedule is followed open loop, errors can accumulate between the scheduled noise conditions and the true noise level of the sample. In fact, we find that existing models trained only on pairs of noisy data and their exact noise levels suffer a systematic train-inference mismatch due to this divergence that compounds over autoregressive rollouts and degrades quality.

In this work, we challenge the prevailing assumption that explicit noise conditioning is necessary for stable, high quality autoregressive video generation. We introduce Equilibrium Forcing (\EqF), a framework for denoising generative video model training and inference unrestricted by noise level conditioning. \EqF improves sampling robustness by learning an \textit{equilibrium} velocity field unified across all noise levels for sampling video data. This contrasts with the brittleness of standard noise-conditional video generation, which uses a chain of nonequilibrium velocity fields calibrated only for samples whose true noise level matches an externally provided condition. 

An equilibrium sampling landscape naturally supports data-adaptive inference. Rather than relying on external schedules to determine sample update, \EqF uses closed loop feedback from the current sample to estimate the progress of denoising and modulate the sampling dynamics accordingly. This feedback mechanism can be leveraged for budget-adaptive inference, where the state of generation is used to reallocate the entire remaining sampling trajectory. By connecting transport fields with fixed-point attractor fields at inference time, \EqF enables gradient-based samplers to adaptively accelerate inference and converge to higher quality samples (see Figure~\ref{fig:figure1}) \citep{nesterov1983method}. 

Prior work on noise-unconditional generation is limited to image generation and suggests that multiple training-time hyperparameters are necessary to remove the reliance on noise level conditions \citep{wang2025equilibrium}. Our approach instead proposes to unify and simplify the design space of denoising models by training with a simpler equilibrium objective, and inducing favorable sampling landscapes at inference time. In addition to removing costly training-time hyperparameter search, we find that prior noise-unconditional methods introduce an unintended loss weighting that is suboptimal in practice. Furthermore, the inference-time decisions for video and image generation are disparate; we study the flexibility uniquely afforded to autoregressive video generation by \EqF.

Despite having a simpler training objective and strictly less information than noise-conditional models, \EqF improves generation quality and consistency on difficult autoregressive video generation datasets. Extensive experiments attribute these gains to the proposed data-adaptive inference mechanisms, including closed loop feedback, gradient-based samplers, and budget adaptivity. We additionally introduce analysis and experiments that elucidate how noise-unconditional models can indeed learn the same denoising field as noise-conditional models. Our modular framework scales to realistic video generation by finetuning pretrained Flow Matching models at 1.3B and 5B parameters, demonstrating that the advantages of equilibrium samplers can be elicited from existing Flow Matching models through the \EqF objective with only a fraction of the pretraining compute \citep{wan2025wanopenadvancedlargescale}.

\begin{figure}[t]
    \centering
    \includegraphics[width=0.8\linewidth]{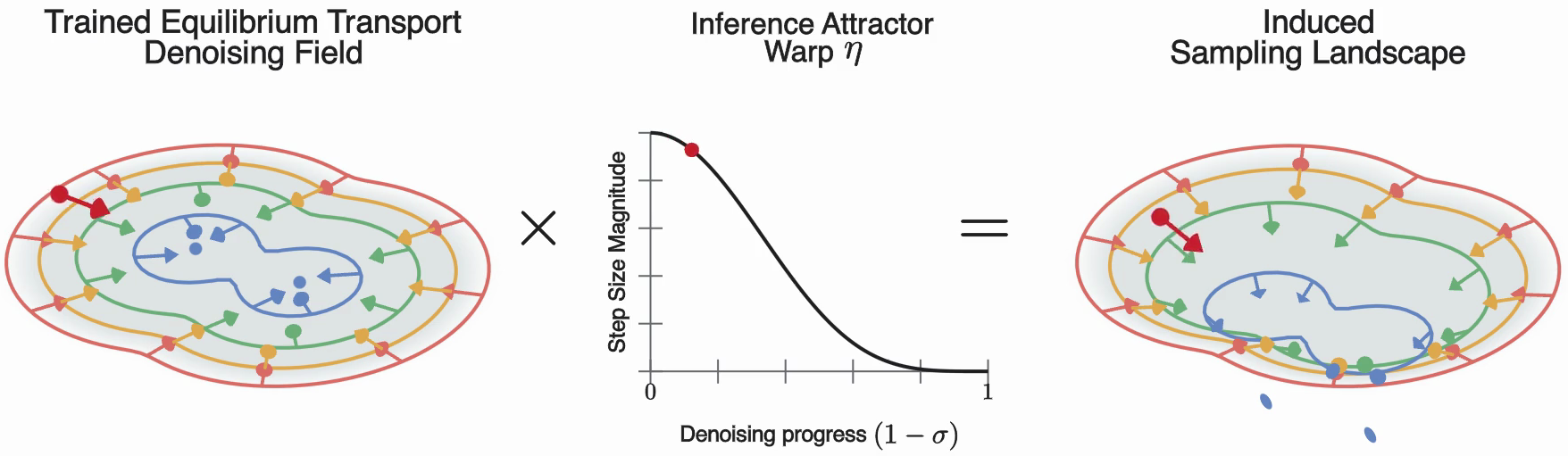}
    \caption{\textbf{Equilibrium Forcing Overview.} \EqF is trained with a simple unmodulated equilibrium denoising objective (left), which can be shaped at inference time through the $\eta$ warp (middle). The sampling landscape here is shaped to be an attractor, with the magnitude decreasing as the sample descends and converges (right).}
    \label{fig:figure1}
\end{figure}

\section{Background}

We begin with an overview of standard denoising generative models for video generation, using Flow Matching as a representative method.
We denote sequence variables that have a time dimension with boldface, e.g. the full video $\seq{x} = [x_1, x_2,\dots,x_T]$ is a concatenation of $T$ individual video frames $x_t \in \mathbb{R}^{C \times H \times W}$.

\subsection{Flow Matching with Noise Level Conditioning for Video Generation}
\label{2.1:flow_matching}

Flow Matching \citep{lipman2022flow, liu2022flow} models a data distribution $p(\seq{x})$ by learning a noise level-dependent velocity field that transports samples from a simple prior distribution $q(\seq{\epsilon})$ to the data distribution. In this paper, we operate within the Diffusion Forcing framework for video denoising, where each video frame has its own noise level $\seq{\sigma} := [\sigma_1, \sigma_2, \dots, \sigma_T] \in [0,1]^T$ \citep{chen2024diffusionforcingnexttokenprediction}. Samples are generated by solving the probability flow ODE 
${d \seq{x}^{\seq{\sigma}}}/{d \seq{\sigma}} = f_{\mathrm{FM}}^{\star}(\seq{x}^{\seq{\sigma}}, \seq{\sigma})$, with the velocity field approximated with the neural network $f_{\mathrm{FM}}$. 
The noising process is a linear interpolation of Gaussian noise $\seq{\epsilon} \sim q(\seq{\epsilon})$ and clean data $\seq{x} \sim p(\seq{x})$ applied framewise (denoted $\odot$):
\begin{align}
\seq{x}^{\seq{\sigma}}
= (\mathbf{1} - \seq{\sigma}) \odot \seq{x}
+ \seq{\sigma} \odot \seq{\epsilon}, 
\qquad {\epsilon}_t \stackrel{\mathrm{i.i.d.}}{\sim} \mathcal{N}(0, I), 
\qquad {\sigma}_t \stackrel{\mathrm{i.i.d.}}{\sim} \mathcal{U}[0,1].
\label{eq:data_gen_process}
\end{align}

The loss restricts the training distribution of the learned field to pairs of noisy data and the exact corresponding noise level. The network $f_{\text{FM}}$ is conditioned on the noise level of the data to directly predict a conditional velocity $\seq{v}$, providing the standard Flow Matching loss:
\begin{align}
\seq{v} := \seq{\epsilon} - \seq{x},
\qquad
\mathcal{L}_{\textrm{FM}}
= \operatorname*{\mathbb{E}}_{\seq{x}, \seq{\epsilon}, \seq{\sigma}}
\left[
\left\|
f_{\textrm{FM}}(\seq{x}^{\seq{\sigma}}, \seq{\sigma})
- \seq{v}
\right\|_2^2
\right].
\label{eq:flow_matching_target_loss}
\end{align}

For simplicity, we explain inference for non-autoregressive video generation where all frames have the same noise level. Inference initializes from pure noise $\seq{x}^{(0)} \sim \mathcal N(0, I)$ at $\seq{\sigma}^{(0)} = \seq{1}$ and integrates along a schedule $\{\seq{\sigma}^{(i)} \}_{i=0,\dots,N}$ until termination at $\seq{\sigma}^{(N)} = \seq{0}$. ODE solvers determine the step size through the difference between noise levels in the schedule \citep{karras2022elucidating, zhao2023unipc}. Data at the $i$-th iteration and noise level $\seq{\sigma}^{(i)}$ are updated to subsequent noise levels as:
\begin{align}
\seq{x}^{(i+1)} = \seq{x}^{(i)} - (\seq{\sigma}^{(i)} - \seq{\sigma}^{(i+1)}) \odot f_{\textrm{FM}}(\seq{x}^{(i)}, \seq{\sigma}^{(i)}).
\label{eq:flow_matching_sampling}
\end{align}

\subsection{Autoregressive Video Denoising}
\label{2.2:diffusion_forcing}

Given ground-truth context or previously generated frames $\seq{c}$, autoregressive video generation utilizes a sliding window approach by repeatedly sampling from the distribution $p(\seq{x} \vert \seq{c})$. Early attempts relied on bespoke fixed-length conditioning schemes to inject $\seq{c}$ as a separate signal \citep{yang2025cogvideoxtexttovideodiffusionmodels}. Instead, Diffusion Forcing-style training covers the case where $\seq{x}^{\seq{\sigma}}$ contains noisy and clean frames simultaneously. This enables context-conditional sampling from $p(\seq{x} \vert \seq{c})$ where the clean $\seq{c}$ and noisy $\seq{x}$ are processed within a unified self-attention window \citep{song2025historyguidedvideodiffusion}.

Autoregressive inference generalizes the non-autoregressive schedule by assigning each sampling iteration $i$ and temporal index $t$ its own noise level, forming a scheduling matrix over sampling iterations and time \citep{chen2024diffusionforcingnexttokenprediction, ruhe2024rolling}. To support temporal causality, these schedules typically place earlier frames at lower noise levels than later frames, as in Figure~\ref{fig:figure2} (right). Once a frame reaches zero noise, it is committed to the context $\seq{c}$, the active window slides forward, and fresh noise is appended for future frames. A full algorithm is given in Appendix~\ref{apd:inference_sched_math}. While such predefined schedules stabilize autoregressive generation, they force the model to follow a fixed denoising path that cannot adapt to the state of the inference procedure \citep{chen2024diffusionforcingnexttokenprediction}.

\begin{figure}[t]
    \centering
    \includegraphics[width=1\linewidth]{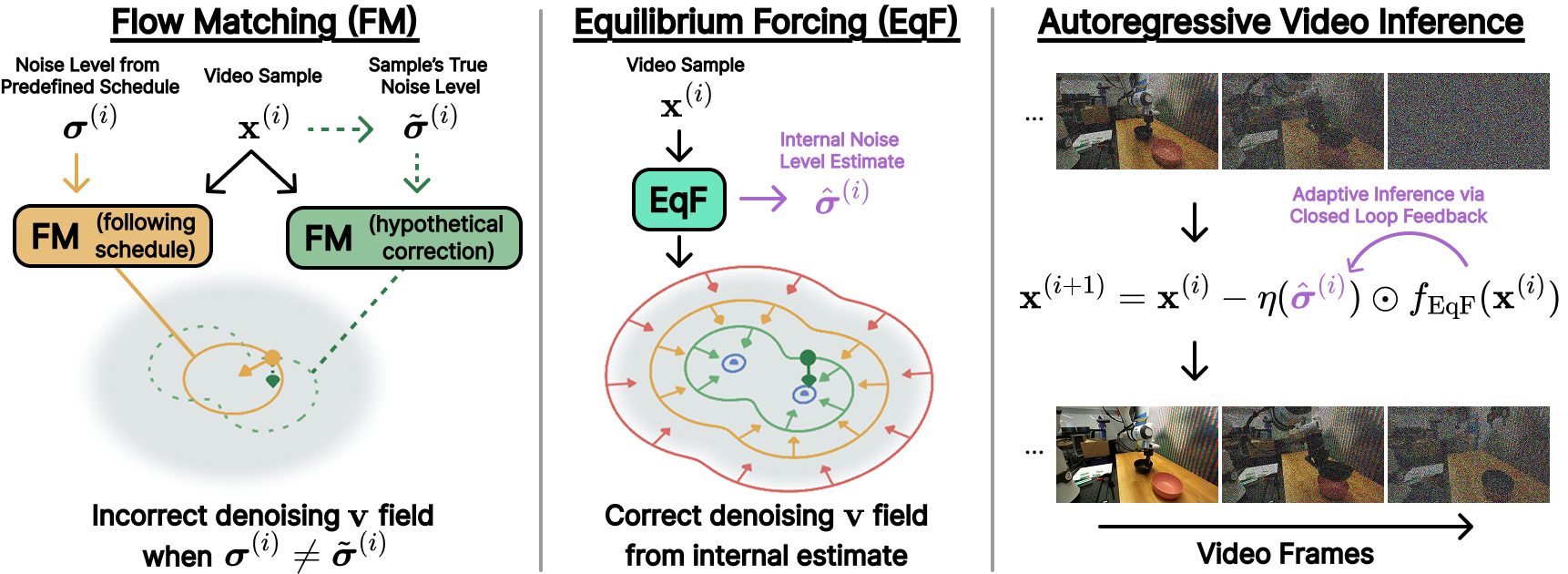}
    \caption{\textbf{Left:} FM follows externally-defined noise schedule conditioning, which diverges from the true denoising field. 
    \textbf{Center:} 
    By training without explicit noise level conditioning, \EqF learns a correct denoising field everywhere by estimating the noise level internally.
    \textbf{Right:} \EqF leverages the internal noise level estimate for adaptive autoregressive video generation inference.}
    \label{fig:figure2}
\end{figure}

\section{Divergent Noise Level Conditions in Flow Matching}
\label{3:flow_analysis}

In this section, we establish that the current design of noise-conditional video denoising models suffers from a lack of data adaptivity, attributable to the open-loop nature of strict noise conditioning schedules. We find that the true noise level diverges from the scheduled noise level during inference, compounding error over autoregressive rollouts. These results suggest that \textit{removing noise conditioning altogether} could unlock data adaptivity that stabilizes inference (see Figure~\ref{fig:figure2}).

\paragraph{Scheduled Noise Levels Diverge During Flow Matching Inference.}
Standard noise-conditional inference algorithms are inherently open loop, unable to respond if the true noise level of the sample $\tilde{\seq{\sigma}}^{(i)}$ diverges from the scheduled noise level $\seq{\sigma}^{(i)}$ over an autoregressive rollout. 
We measure this divergence by training a small framewise noise level predictor $g_\phi: \mathbb{R}^{C \times H \times W} \mapsto [0,1]$ on the task of regressing the noise level $\sigma$ from an individually noised frame $x^\sigma$.
The prediction $\tilde{\sigma} \approx g_\phi(x^\sigma)$ estimates the mode of a sample's true noise level, which we find to estimate the correct value in practice due to posterior concentration of the noise level in high dimensions (see Appendix~\ref{apd:noise_from_data}).

Under our standard baseline Flow Matching setting on the Minecraft dataset (detailed further in Section~\ref{5:experiments}), we generate by autoregressively sliding a 50-frame window consisting of 25 actively predicted frames and 25 clean context frames. Starting with 25 ground truth context frames, we generate 275 frames and compare the scheduled noise level with the predictor-estimated noise level, plotting the absolute residual between the values in Figure~\ref{fig:abs_residual_and_fvd_over_time}~(left). If the noise condition is accurate, the divergence should be $0$ across the trajectory; we find it is nonzero and increasing over time.

\paragraph{Noise Level Divergence Causes Compounding Error Over Autoregressive Video Rollouts.}
For static data, an integration error is confined to one sample. However, for sequence data such as videos, accumulated errors can cause the model to be further out of distribution for the next round of generation. We measure the Frechet Video Distance (FVD, \cite{unterthiner2018towards}) over 25-frame chunks of the same 275-frame rollouts with respect to 25-frame clips from ground-truth videos. Figure~\ref{fig:abs_residual_and_fvd_over_time} (right) demonstrates that autoregressive errors accumulate steeply in noise conditional inference, especially as generated frames become context. We elaborate on the causal relationship between noise level divergence and compounding error, in addition to further analysis, in Appendix~\ref{apd:additional_analysis}.
We further experiment with attempting to close the loop for Flow Matching with feedback from the current sample during noise-conditional inference. To do so, we condition the Flow Matching model on the predictor-estimated noise level $\tilde{\sigma}$ during inference, with results represented by the dotted line. The error is only partially mitigated, suggesting that fully data-adaptive inference may require removing the reliance on \textit{external} noise conditioning altogether. 

\begin{figure}[t]
    \centering
    \begin{subfigure}[t]{0.495\linewidth}
        \centering
        \includegraphics[width=\linewidth]{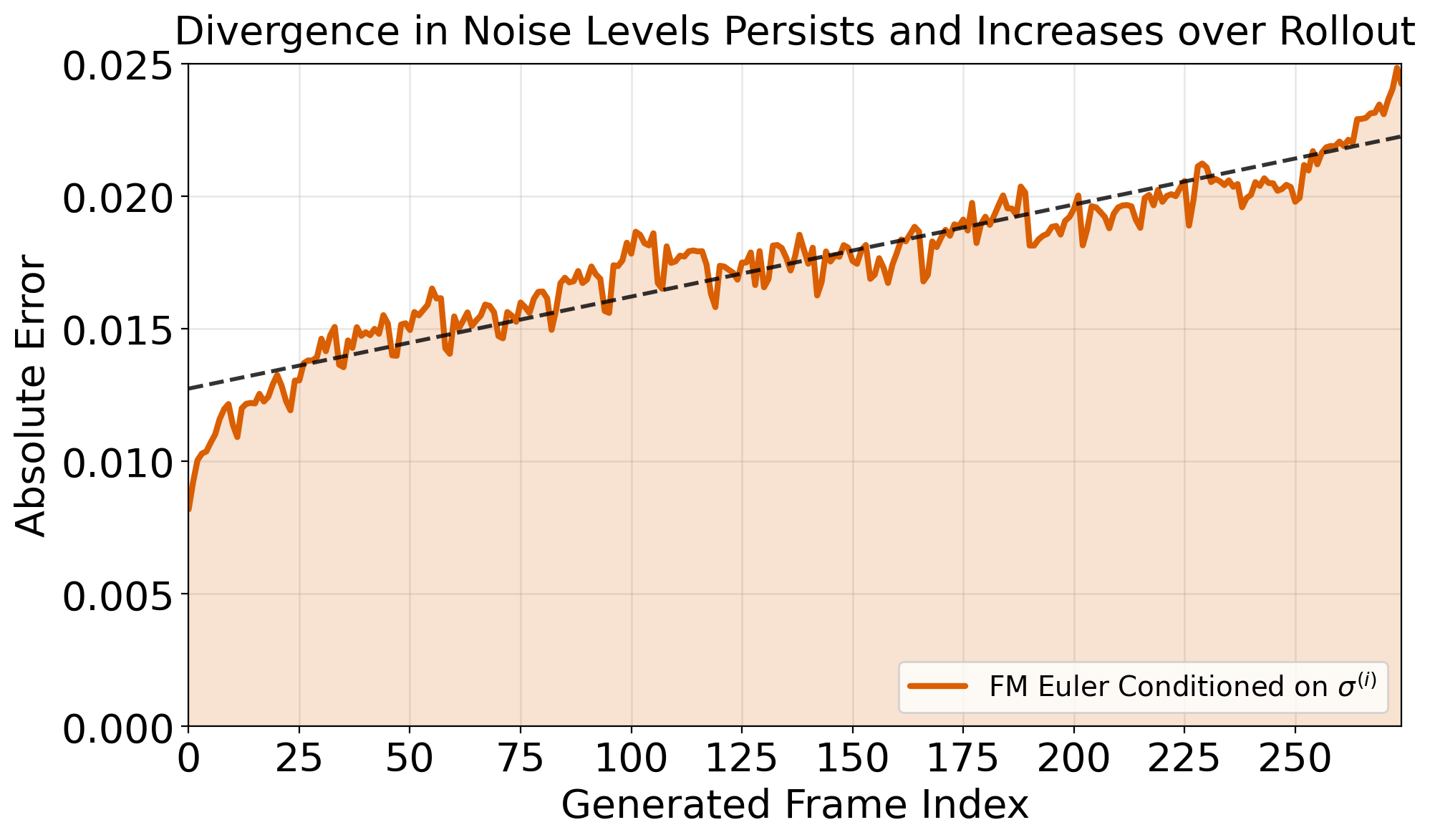}
    \end{subfigure}
    \hfill
    \begin{subfigure}[t]{0.495\linewidth}
        \centering
        \includegraphics[width=\linewidth]{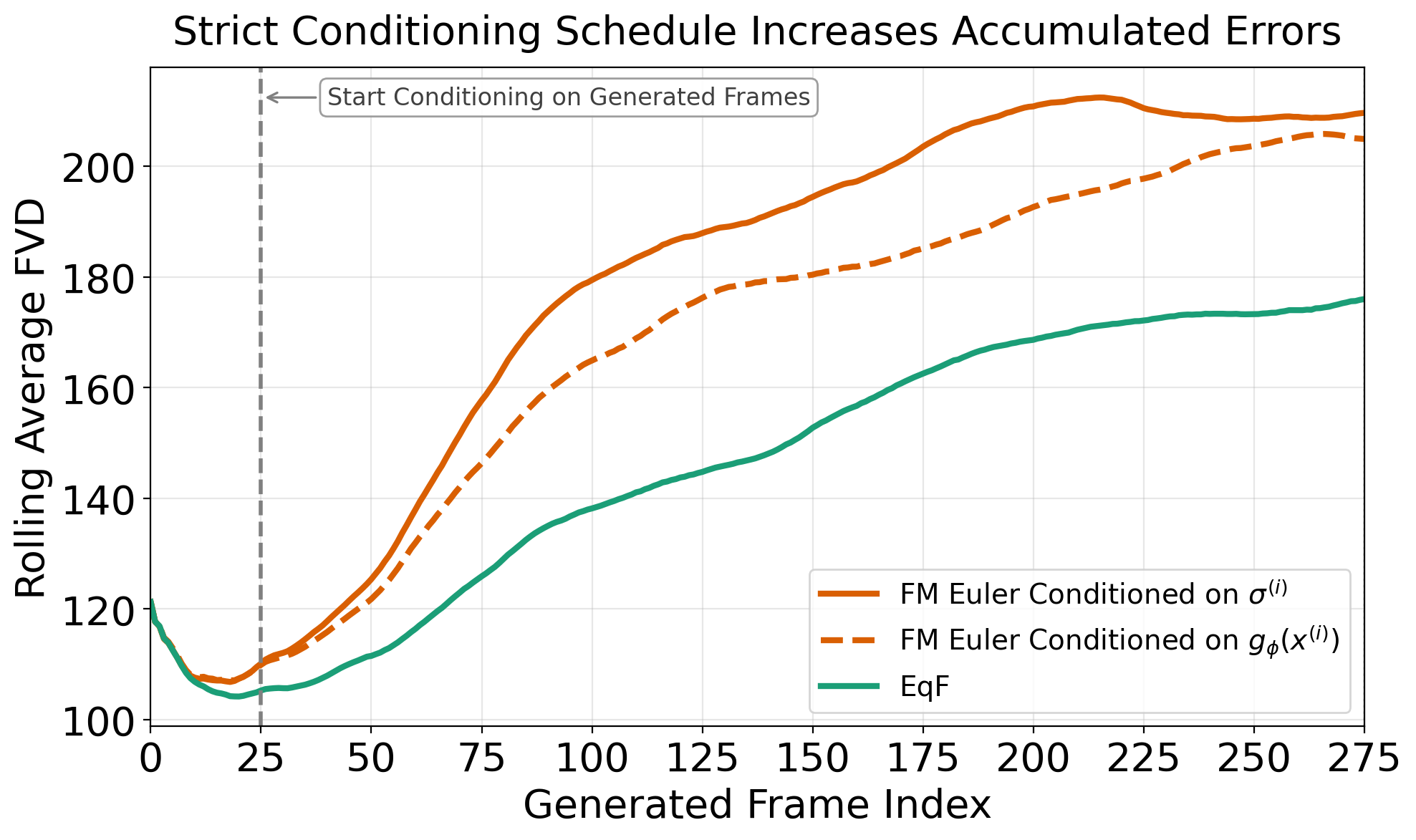}
    \end{subfigure}
    \caption{\textbf{Left}: Absolute noise conditioning error between scheduled $\sigma^{(i)}$ and the predicted $\tilde{\sigma}^{(i)} \approx g_\phi(x^{(i)})$ increases over 256 autoregressive rollouts. \textbf{Right}: FVD worsens over 256 autoregressive rollouts with FM due to externally prescribed noise conditions. Conditioning on estimated noise levels (dotted) helps slightly. Removing the noise condition altogether (\EqF) performs best.}
    \label{fig:abs_residual_and_fvd_over_time}
\end{figure}

\section{Equilibrium Forcing}

The preceding analysis suggests that inference tied to externally prescribed noise conditioning leads to noise level divergence, resulting in degraded performance over long autoregressive rollouts. \EqF addresses this by removing noise conditioning entirely, allowing the denoising process to be adaptively driven by the current sample rather than by a fixed schedule. In this section, we unify recent theory of noise-unconditional denoising with training and inference recipes designed to leverage the unique benefits of an equilibrium sampler for autoregressive video generation.

\subsection{Equilibrium Denoising Objective}

We introduce \EqF, a framework for modeling probability distributions $p(\seq{x})$. Instead of learning a separate velocity field for each value of $\seq{\sigma}$, \EqF learns a single field over the data itself, mitigating the effect of the noise conditioning-based error demonstrated in Section~\ref{3:flow_analysis}. Samples are generated by integrating the noise-autonomous ODE ${d \seq{x}^{\seq{\sigma}} } /{d \seq{\sigma}} = f_\EqF^\star(\seq{x}^{\seq{\sigma}})$, with the field approximated by a neural network $f_\EqF$.
Training examples are generated from the same noising process as Equation~\ref{eq:data_gen_process} with independent noise levels per frame. The loss simply drops the noise level conditioning for video data and regresses against the conditional velocity target $\seq{v} = \seq{\epsilon} - \seq{x}$: 
\begin{align}
\mathcal{L}_{\EqF} &= \operatorname*{\mathbb{E}}_{\seq{x},\seq{\epsilon},\seq{\sigma}}
\left [
\left |
\left |
f_{\EqF}(\seq{x}^{\seq{\sigma}}) - \seq{v}
\right |
\right |^2_2
\right ].
\label{eq:eqf_loss}
\end{align}


\begin{figure}[t]
\centering

\begin{minipage}[t]{0.51\linewidth}
\vspace{0pt}
\begin{algorithm}[H]
\caption{EqF Budget-Adaptive Inference. GD and no autoregressive state for simplicity}
\label{alg:budget_adaptive}

\begin{lstlisting}[style=pytorch]
# f_eqf(x): trained eqf v field
# h_omega(z): noise level estimator
# N: number of sampling steps
sigma_hat = 1
x = randn(sample_shape)
for i in range(N):
    v_hat, activations = f_eqf(x)
    sigma_hat = h_omega(activations)
    # dynamic rescaling
    step_size = sigma_hat / (N - i)
    x = x - step_size * v_hat
return x
\end{lstlisting}
\end{algorithm}
\end{minipage}
\hfill
\begin{minipage}[t]{0.47\linewidth}
\vspace{0pt}
\centering
\includegraphics[width=\linewidth]{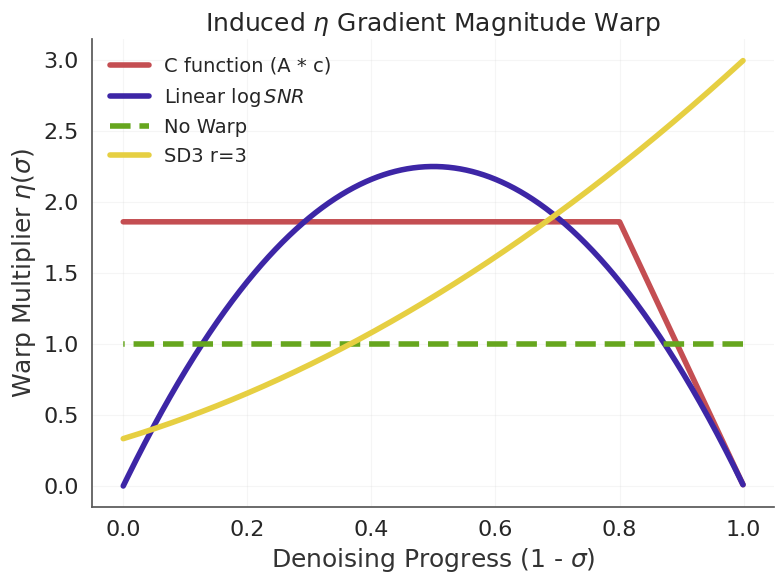}
\caption{$\eta$ warp and induced gradient multiplier. $\eta$ functions that vanish as $\sigma \to 0$ induce an attraction field.}
\label{fig:inference_schedule_vis}
\end{minipage}
\end{figure}

\subsection{Bayes Optimal Denoisers Coincide Under Noise Level Concentration}
\label{4.2:noise_conc}
We next introduce a loss decomposition that establishes how \EqF can learn the same denoising field as Flow Matching without conditioning on the noise level, ameliorating the off-manifold integration error established in Section~\ref{3:flow_analysis}. We denote the minimizer of the Flow Matching loss as $f_{\textrm{FM}}^\star$ and the minimizer of the \EqF loss as $f_{\EqF}^\star$. 

\begin{proposition}
\label{prop:eqf_risk_decomposition}
The optimal \EqF risk decomposes as
\begin{align}
\mathcal L_{\EqF}^\star
=
\mathcal L_{\mathrm{FM}}^\star
+
\operatorname*
{\mathbb{E}}_{\seq{x},\seq{\epsilon},\seq{\sigma}}
\left[
\left\|
f_{\mathrm{FM}}^\star(\seq{x}^{\seq{\sigma}},\seq{\sigma})
-
f_{\EqF}^\star(\seq{x}^{\seq{\sigma}})
\right\|_2^2
\right].
\end{align}
\end{proposition}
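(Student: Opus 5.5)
The approach is the standard orthogonality (Pythagorean) argument for squared-error risks, applied with the conditional expectation $f_{\mathrm{FM}}^\star$ as the pivot. I first identify the two minimizers as $L^2$ projections. Since $\mathcal L_{\mathrm{FM}}$ is a squared-error regression of $\seq{v}$ on the pair $(\seq{x}^{\seq{\sigma}},\seq{\sigma})$, its minimizer over all measurable functions is
\begin{align}
f_{\mathrm{FM}}^\star(\seq{x}^{\seq{\sigma}},\seq{\sigma}) = \mathbb{E}\bigl[\seq{v}\mid \seq{x}^{\seq{\sigma}},\seq{\sigma}\bigr].
\end{align}
Likewise, the minimizer of $\mathcal L_{\EqF}$ in Equation~\ref{eq:eqf_loss} is $f_{\EqF}^\star(\seq{x}^{\seq{\sigma}})=\mathbb{E}[\seq{v}\mid \seq{x}^{\seq{\sigma}}]$. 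Both are well defined because $\seq{v}=\seq{\epsilon}-\seq{x}$ is square integrable, assuming the data has finite second moment.

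Next I evaluate the \EqF risk at its optimum and split the residual around the pivot:
\begin{align}
\seq{v}-f_{\EqF}^\star = \bigl(\seq{v}-f_{\mathrm{FM}}^\star\bigr) + \bigl(f_{\mathrm{FM}}^\star - f_{\EqF}^\star\bigr).
\end{align}
Expanding the square gives three terms:
\begin{itemize}
\item $\mathbb{E}\|\seq{v}-f_{\mathrm{FM}}^\star\|^2$, which is exactly $\mathcal L_{\mathrm{FM}}^\star$;
\item $\mathbb{E}\|f_{\mathrm{FM}}^\star-f_{\EqF}^\star\|^2$, which is the claimed gap term;
\item the cross term $2\,\mathbb{E}\langle \seq{v}-f_{\mathrm{FM}}^\star,\, f_{\mathrm{FM}}^\star-f_{\EqF}^\star\rangle$.
\end{itemize}

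The only real step is showing that the cross term vanishes. I condition on the $\sigma$-algebra generated by $(\seq{x}^{\seq{\sigma}},\seq{\sigma})$ and use the tower property. The factor $f_{\mathrm{FM}}^\star(\seq{x}^{\seq{\sigma}},\seq{\sigma})-f_{\EqF}^\star(\seq{x}^{\seq{\sigma}})$ is measurable with respect to this $\sigma$-algebra, because $f_{\EqF}^\star$ depends only on $\seq{x}^{\seq{\sigma}}$, which is coarser. It can therefore be pulled out of the inner expectation. What remains is $\mathbb{E}[\seq{v}-f_{\mathrm{FM}}^\star\mid \seq{x}^{\seq{\sigma}},\seq{\sigma}]=0$ by the definition of $f_{\mathrm{FM}}^\star$.

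To make this rigorous, the inner products need to be integrable. This follows from Cauchy--Schwarz, since every term is in $L^2$: conditional expectations are $L^2$ contractions. The nesting of information, where the \EqF input is a function of the FM input, is precisely what makes the decomposition work.

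I expect no genuine obstacle. The main care point is interpreting $f^\star$ as the minimizer over unrestricted measurable functions rather than over a network class. With that reading, the identity holds exactly and the three-term expansion completes the proof.
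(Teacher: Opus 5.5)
Your proposal is correct and follows the paper's proof essentially step for step. Both add and subtract $f_{\mathrm{FM}}^\star$, expand the square, and kill the cross term by conditioning on $(\seq{x}^{\seq{\sigma}},\seq{\sigma})$, pulling out the measurable factor $f_{\mathrm{FM}}^\star - f_{\EqF}^\star$, and using that the Flow Matching residual has zero conditional mean. Your remarks on square integrability, and on reading $f^\star$ as the minimizer over all measurable functions, make explicit what the paper leaves implicit without changing the argument.
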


The proof is given in Appendix~\ref{loss_decom}. The second term in the decomposition measures the excess risk incurred by replacing the noise-conditional Flow Matching Bayes-optimal target $f_{\mathrm{FM}}^\star(\seq{x}^{\seq{\sigma}},\seq{\sigma})$ with the noise-unconditional \EqF Bayes-optimal target $f_{\EqF}^{\star}(\seq{x}^{\seq{\sigma}})
    =
    \operatorname*{\mathbb{E}}_{\seq{\sigma}}
    \left[
        \operatorname*{\mathbb{E}}_{\seq{x},\seq{\epsilon}}
        \left[
            \seq{v}
            \mid
            \seq{x}^{\seq{\sigma}}, \seq{\sigma}
        \right]
        \mid
        \seq{x}^{\seq{\sigma}}
    \right]$. Since \EqF does not receive $\seq{\sigma}$ explicitly, its Bayes-optimal denoiser averages the fixed-noise targets over the posterior $p(\seq{\sigma}\mid \seq{x}^{\seq{\sigma}})$. 
    
Recent theoretical work shows that, in high dimensions, this posterior concentrates around a single noise level \citep{kadkhodaie2026blind,sahraee2026geometry}. Under this concentration, the posterior average collapses to the target corresponding to the identified noise level, making the squared discrepancy in the decomposition small. This suggests that \EqF can approximate the FM target pointwise for data in high dimensions, which we provide empirical evidence for in Section~\ref{5.5:evidence_for_decomposition}.

\subsection{Closed Loop Inference with Noise Level Estimation and $\eta$ Warp-Induced Dynamics}
\label{4.3:eta_schedule_explanation}

As minimizing the denoising loss in high dimensions requires the \EqF model to internally represent an input's current noise level, an estimate of that noise level provides a natural proxy for sampling progress toward the data manifold. Specifically, we train a lightweight framewise readout model $h_\omega$ to predict the mode of the noise level posterior from the \EqF model's activations $\seq{z}$. This mirrors the prediction task in Section~\ref{3:flow_analysis}, but uses internal model features rather than the raw noisy input $\seq{x}^{\seq{\sigma}}$, which we find to be more performant in closing the loop (see Appendices~\ref{apd:noise_from_data},\ref{apd:fixed_fm_with_g}). Driving integration with step sizes modulated by $h_\omega$'s estimate of the noise level allows \EqF sampling procedures to operate in a closed loop with respect to the status of the sample. We adjust the inference procedure by using a warp function $\eta$, defined as a framewise map from the noise level estimate to a step size:
\begin{align}
(\widehat{\seq{v}}^{(i)}, \seq{z}^{(i)}) &= f_\theta(\seq{x}^{(i)}),
\qquad 
\widehat{\seq{\sigma}}^{(i)} = h_\omega(\seq{z}^{(i)}), \\
\seq{x}^{(i+1)} &= \seq{x}^{(i)} - \eta(\widehat{\seq{\sigma}}^{(i)}) \odot \widehat{\seq{v}}^{(i)}.
\label{eq:sampler_update}
\end{align}

$\eta$ includes a rescaling factor inversely proportional to the total number of sampling steps $N$, while the shape of $\eta$ determines the sampling dynamics.
Choosing $\eta(\seq{\sigma}) \to \seq{0} $ as $\seq{\sigma} \to \seq{0}$ warps the sampling problem into a fixed-point \textbf{attractor} search for $\eta(\seq{\sigma}) \odot f_{EqF}(\seq{x}) = \seq{0}$.
This contrasts with \textbf{transport-style} sampling (such as standard FM Euler integration), where the general nonzero constraint $\eta(\seq{\sigma}) \odot f_{\EqF}(\seq{x}) = - \eta(\seq{0}) \seq{x}$ holds as $\seq{\sigma} \to \seq{0}$.

\subsection{Choice of $\eta$ Warp Function}
\label{4.4:choices}

Applying attractor-style $\eta$ warp functions on top of \EqF's trained transport field is powerful for unlocking the toolkit of gradient-based optimization solvers that converge to a fixed point. In particular, we implement Nesterov Accelerated Gradient (NAG) within this closed loop sampling framework, with the full algorithm in Appendix~\ref{apd:inference_details}. The usage of these solvers under open-loop, noise-conditional sampling was prohibitive: steps with adaptive, unscheduled step sizes would make the sample jump to unknown noise levels, leading to significantly divergent noise-level conditioning and off-manifold errors. 

DPM-Solver's linear log-SNR schedule and the $c$-function of EqM are examples of $\eta$ warps that induce attractor-style dynamics, while the SD3 reparameterization and identity map (using evenly spaced steps) are transport warps \citep{lu2022dpm, esser2024scaling, wang2025equilibrium}. We visualize these warp function options in Figure~\ref{fig:inference_schedule_vis} and evaluate these warps experimentally in Section~\ref{5:experiments}. The $\eta$ warp function is closely related to established noise level-reparameterizations of sampling ODEs for denoising generative models, differing in that $\eta$ explicitly specifies the derivative of the reparameterization rather than a grid of noise levels; further interpretation is in Appendix~\ref{apd:sampler_warp_schedules}.

\subsection{Budget-Adaptive Sampling Algorithm}
\label{4.5:budget_adaptive_algorithm}

We additionally propose Algorithm~\ref{alg:budget_adaptive}, a sampler designed to leverage \EqF's equilibrium field and noise level estimation capabilities in a budget-aware manner. Standard samplers specify a schedule before inference begins, meaning the update at step $i$ is determined by the precomputed grid in $\eta$-warp space. While \EqF closed loop-only inference (Equation ~\ref{eq:sampler_update}) has the ability to reindex into the predefined $\eta$-function with the noise level estimate, it retains the rescaling implied by the total original budget $N$. Instead, Algorithm~\ref{alg:budget_adaptive} incorporates both the estimated noise level and the remaining sampling budget $N-i$ to reallocate the denoising trajectory at each iteration. This budget-adaptivity can be seen as a version of the $\eta$ function that defines a simple update rule including the number of steps remaining. It avoids overshooting as the sample approaches the data manifold when accelerated with methods like NAG. We present this algorithm as just one instantiation of the myriad possible adaptive algorithms enabled by \EqF; we describe an additional adaptive early stopping algorithm that leverages the noise level to reduce spurious additional denoising steps in Appendix~\ref{apd:adaptive_early_stopping}.

\begin{figure}[t]
    \centering
    \includegraphics[width=0.95\linewidth]{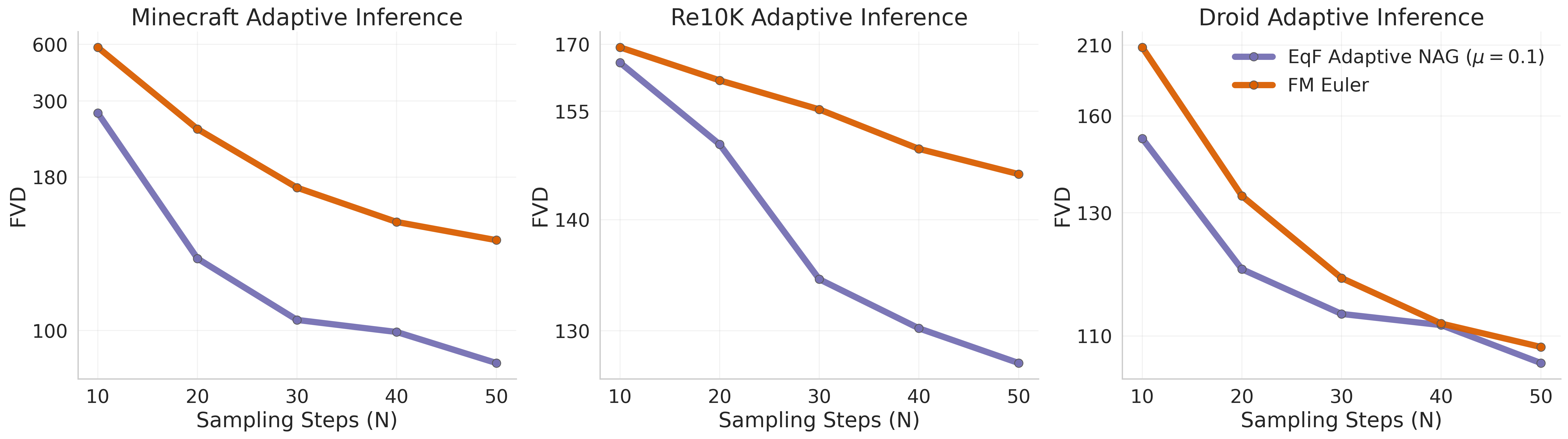}
    \caption{\textbf{Budget-adaptive inference results.} \EqF with budget-adaptive sampling achieves the lowest FVD across Minecraft, Re10K, and Droid.}
    \label{fig:minecraft_budget_adaptive}
\end{figure}

\section{Experiments}
\label{5:experiments}

Here, we present results on autoregressive video generation experiments, demonstrating how \EqF's data adaptivity leads to performance gains over noise-conditional and noise-unconditional baselines.

\subsection{Experimental Setup and Baselines}

For all video generation experiments, we follow standard practice and process the video through a VAE to retrieve compressed video latents, which are then noised independently per-frame (see \cite{rombach2022highresolutionimagesynthesislatent} and Section~\ref{2.2:diffusion_forcing}). We consider a set of baselines representative of current methods for video denoising generative models: \textbf{1) Flow Matching (FM)} is implemented as explained in Section~\ref{2.1:flow_matching}. We use standard Euler integration with warped noise levels. \textbf{2) Diffusion} uses the DDPM training objective from \cite{ho2020denoising} with the DDIM sampler \citep{song2022denoisingdiffusionimplicitmodels}. \textbf{3) Equilibrium Matching (EqM)} drops noise conditioning and trains with a modulated velocity magnitude to bake in a warped sampling landscape during training \citep{wang2025equilibrium}. \textbf{4) Noise Unconditional-Flow Matching (NU-FM)} follows \cite{sun2025noise}'s training and inference recipe without noise level conditioning, adapted to video.

All baseline models have matched compute and data budgets. We evaluate with EMA weights and autoregressive context conditioning (see Section~\ref{2.2:diffusion_forcing} and Appendix~\ref{apd:inference_sched_math}). We report FVD \citep{unterthiner2018towards} to measure generated video quality and diversity, and include VBench scores \citep{huang2024vbench} that provide several metrics for video consistency. Extended information on the experimental setting and evaluation is provided in Appendix~\ref{apd:expt_details}.

\subsection{Training Equilibrium Video Denoising Models From Scratch}

We train models from scratch on clips of 50 frames from the Minecraft dataset, a video dataset with per-frame action conditions \citep{yan2023temporally}. All models use the same CogVideoX-based backbone denoising model architecture with an active generation window of 50 frames but with different training objectives \citep{yang2025cogvideoxtexttovideodiffusionmodels}. We drive closed loop inference for \EqF with a readout predictor $h_\omega$ as explained in Section~\ref{4.3:eta_schedule_explanation}. All training details are elaborated upon in Appendix~\ref{apd:expt_details}. Each experiment evaluates on action-conditioned video generation of 256 videos, with 275 future frames generated given 25 ground-truth context frames; strong performance on this long autoregressive rollout requires generations to remain stable and consistent over many shifts of the active window.

The main results on Minecraft are presented in Table~\ref{tab:minecraft_main} with 250 sampling steps per frame and a sliding window of 25 active sampling frames and 25 context frames. Qualitative results are visualized in Figure~\ref{fig:minecraft_vis} and on the project page \href{https://equilibriumforcing.github.io/}{here}. \EqF achieves the best FVD and VBench scores, with the best setting combining closed-loop inference, a vanishing $\eta$ function, and NAG. 

Across various computational budgets with the standard sampling algorithm, \EqF sets a frontier in performance on the same experimental setting in Minecraft as shown by Figure~\ref{fig:fixed_compute_scaling}. We find that larger inference budgets permit increased acceleration through a higher setting of the momentum parameter $\mu$. We hypothesize that with excessively large updates under a coarser discretization, higher momentum carries stale gradient information to low noise levels and risks overshooting the data manifold. 
Additional inference-time decisions, including the use of the budget-adaptive sampling algorithm and the effect of closed-loop acceleration with NAG are studied further below.

\begin{table}[t!]
\centering
\caption{Autoregressive Minecraft 300-frame rollout results across 5 seeds. \EqF with NAG sampling achieves the best FVD and VBench scores.}
\label{tab:minecraft_main}
\resizebox{0.9 \textwidth}{!}{
\begin{tabular}{llccc}
\toprule
\textbf{Method} & \textbf{Sampler} & \textbf{Inference Warp} & \textbf{FVD$\downarrow$} & \textbf{VBench$\uparrow$} \\
\midrule
\EqF (Ours) & NAG, $\mu=0.3$ & $c$-function & \best{$64.34 \pm 1.78$} & \best{$0.7733 \pm 0.0008$} \\
EqM \citep{wang2025equilibrium} & NAG $\mu = 0.1$ & -- & $77.02 \pm 5.21$ & $0.7679 \pm 0.0009$ \\
\EqF & GD & $c$-function & $77.63 \pm 3.31$ & $0.7721 \pm 0.0005$ \\
FM & Euler & $c$-function & $104.13 \pm 4.33$ & $0.7703 \pm 0.0012$ \\
NU-FM \citep{sun2025noise} & GD & -- & $101.16 \pm 3.57$ & $0.7698 \pm 0.0011$ \\
Diffusion & DDIM & -- & $106.00 \pm 1.22$ & $0.7681 \pm 0.0008$ \\
\bottomrule
\end{tabular}
}
\end{table}

\begin{figure}[t]
    \centering
    \includegraphics[width=1.0\linewidth]{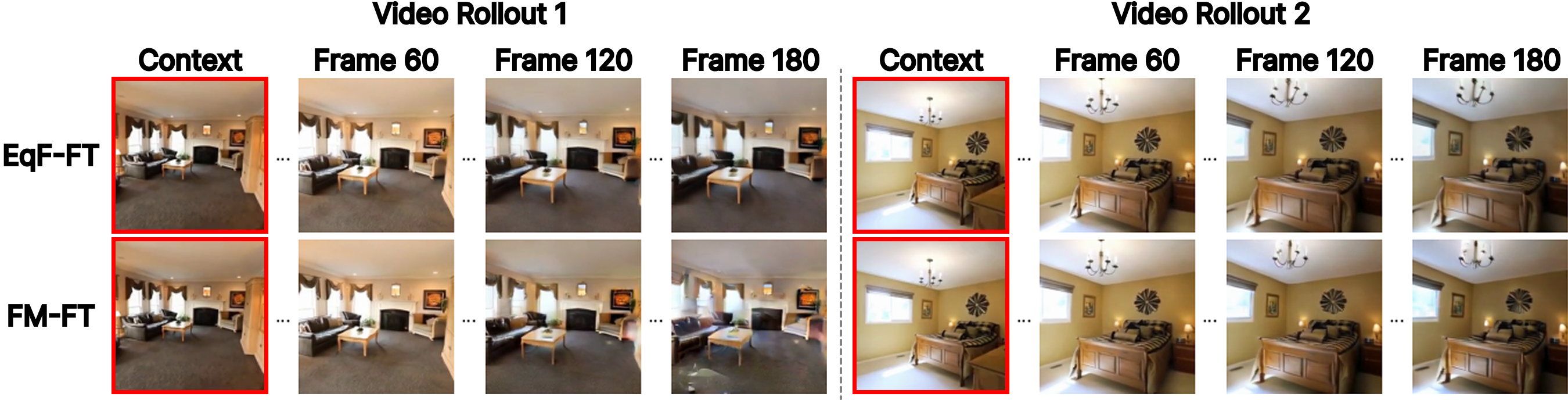}
    \caption{Visualizations of autoregressively generated RealEstate10K videos using \EqF-FT and FM-FT finetuned models from Wan 2.1.}
    \label{fig:re10k_vis}
\end{figure}

\begin{figure}[t]
    \centering
    \includegraphics[width=1\linewidth]{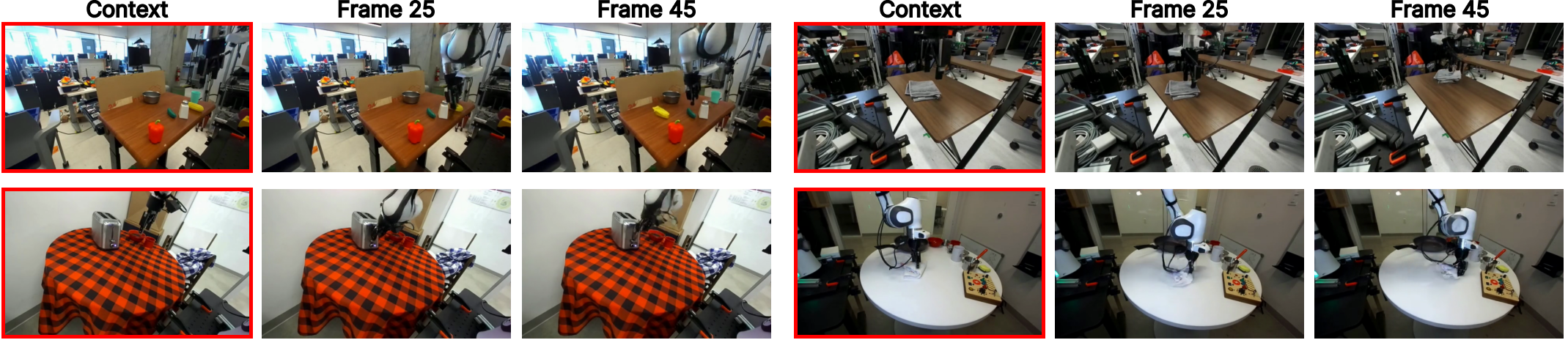}
    \caption{Generated videos from the Droid dataset using \EqF-FT finetuned from Wan 2.2.}
    \label{fig:droid_vis}
\end{figure}

\subsection{Finetuning Large-Scale Flow Matching Models to \EqF}

Beyond training \EqF models from scratch, we investigate whether equilibrium denoising fields are already present inside large pretrained Flow Matching models. As established in Section~\ref{4.2:noise_conc}, noise level posteriors concentrate in high dimensions, implying that the Bayes-optimal \EqF and Flow Matching denoisers coincide pointwise. This suggests that finetuning with the \EqF objective can induce an internal estimator of the noise level, transforming the noise-conditional field into an equilibrium field. We test this hypothesis through \textit{equilibrium finetuning} on two datasets.

\paragraph{Equilibrium Finetuning on the RealEstate10K Dataset.}
Starting from Wan 2.1 T2V 1.3B \citep{wan2025wanopenadvancedlargescale}, a large scale video model trained with the Flow Matching objective, we remove the external noise-conditioning pathway by zeroing the noise level condition and continue training with the \EqF objective. Specifically, we finetune on 49-frame clips from RealEstate10K (Re10K), a realistic video dataset of house tours with camera annotations \citep{zhou2018stereo}. Afterwards, we train a simple $\sigma$-readout predictor following a similar approach to Minecraft. Models converge within 100k steps; we denote the \EqF-finetuned model as \EqF-FT, and the FM finetuned model as FM-FT. Further finetuning information can be found in Appendices~\ref{apd:expt_details},~\ref{apd:eqfft}. 

We evaluate on camera-conditioned autoregressive generation of 256 videos, where 152 future frames are generated from 37 ground-truth context frames. Sampling uses 50 steps per frame with a sliding window of 12 active frames and 37 context frames, with the stride also set to 12. Quantitative results are reported in Table~\ref{tab:re10k_autoregressive} and qualitative results are visualized both in Figure~\ref{fig:re10k_vis} and on the project page \href{https://equilibriumforcing.github.io/}{here}.
\EqF-FT with closed loop sampling and NAG improves upon FM, corroborating our hypothesis about the limiting nature of noise conditioning, and demonstrating that equilibrium finetuning can unlock data-adaptive sampling even at scale.
Results on an additional setting where less initial context is given (29 frames) using the budget-adaptive algorithm are reported in the next section.

\paragraph{Equilibrium Finetuning on the Droid Dataset.}
We perform the same equilibrium finetuning procedure on an additional robot manipulation dataset at higher resolution with a larger scale model to demonstrate these results are not unique to a single model/dataset pair. Specifically, we finetune the Wan 2.2 TI2V 5B model on 49-frame clips from the Droid dataset, a language-conditioned video dataset of robots performing diverse tasks \citep{khazatsky2024droid}. The Droid model represents approximately a 4$\times$ increase in both model parameters and data resolution over the Re10K model. We evaluate on prompt-conditioned generation of 256 videos, where 36 video frames are generated from 13 video frames of context. Results are discussed in the next subsection.

\subsection{Inference-Time Decisions for Equilibrium Forcing}
\label{5.3:inference_minecraft}

\begin{figure}[t!]
    \centering
    \captionsetup{
        font=small,
        justification=raggedright,
        singlelinecheck=false,
        skip=4pt
    }

    \begin{minipage}[t]{0.32\textwidth}
        \vspace{0pt}
        \centering
        \includegraphics[width=\linewidth]{
            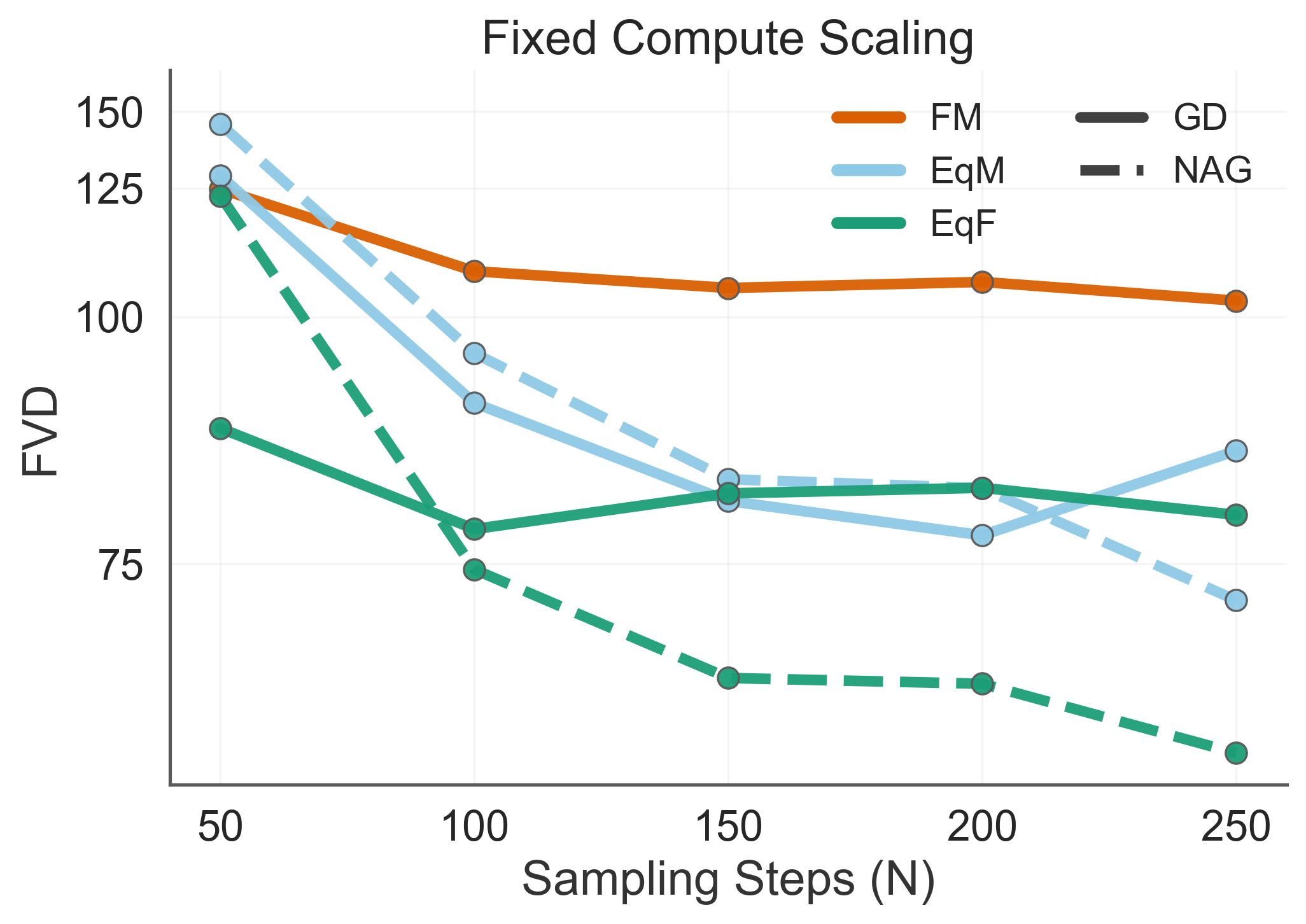
        }
        \captionof{figure}{
            \EqF is optimal across compute scales on Minecraft compared with noise-conditional and -unconditional baselines.
        }
        \label{fig:fixed_compute_scaling}
    \end{minipage}\hfill%
    \begin{minipage}[t]{0.32\textwidth}
        \vspace{0pt}
        \centering
        \includegraphics[width=\linewidth]{
            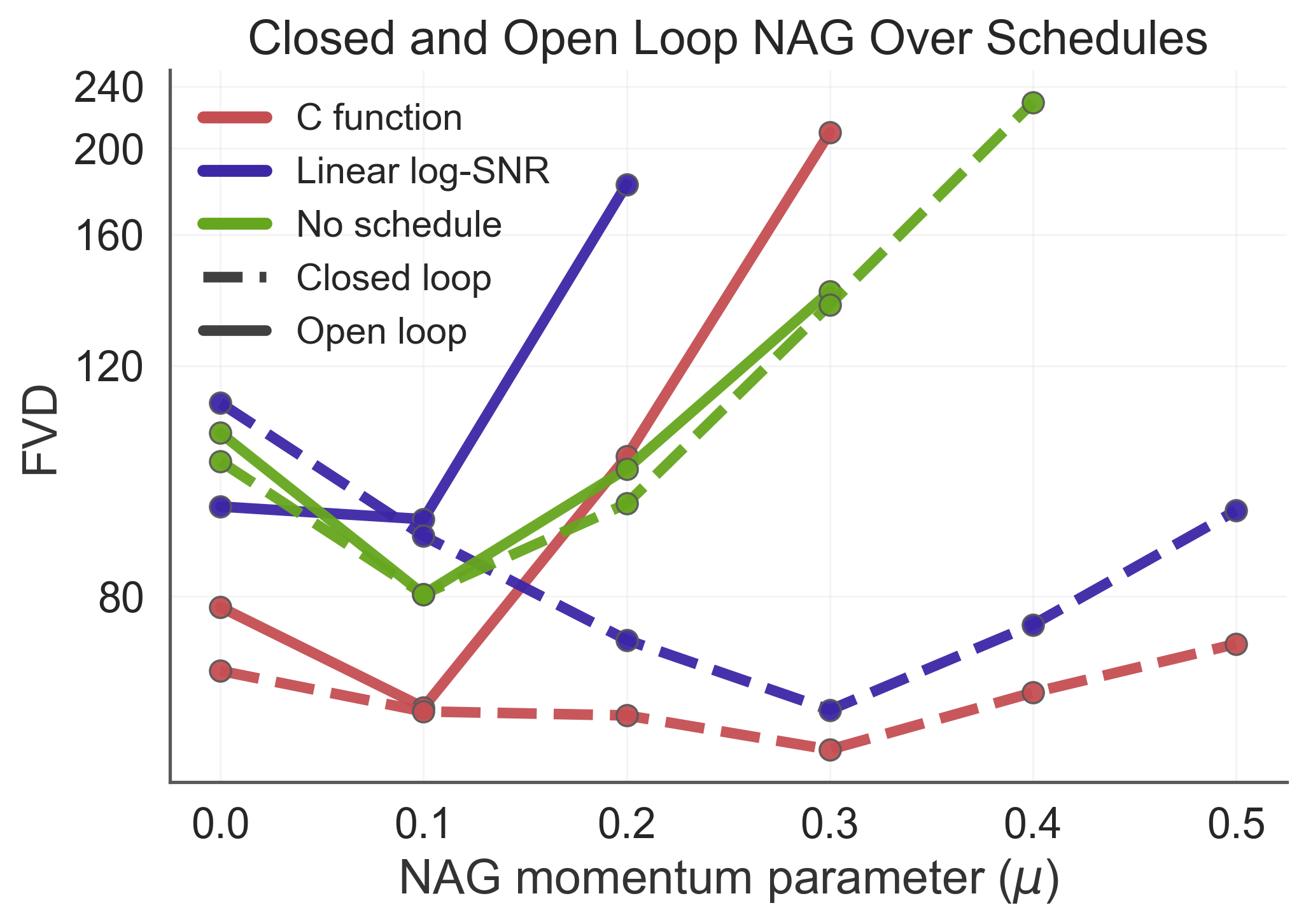
        }
        \captionof{figure}{
            NAG is optimal with closed-loop and attractor ($c$-function) $\eta$ warp, compared to open-loop and transport warps.
        }
        \label{fig:closed_open_nag}
    \end{minipage}\hfill%
    \begin{minipage}[t]{0.32\textwidth}
        \vspace{0pt}
        \centering
        \includegraphics[width=\linewidth]{
        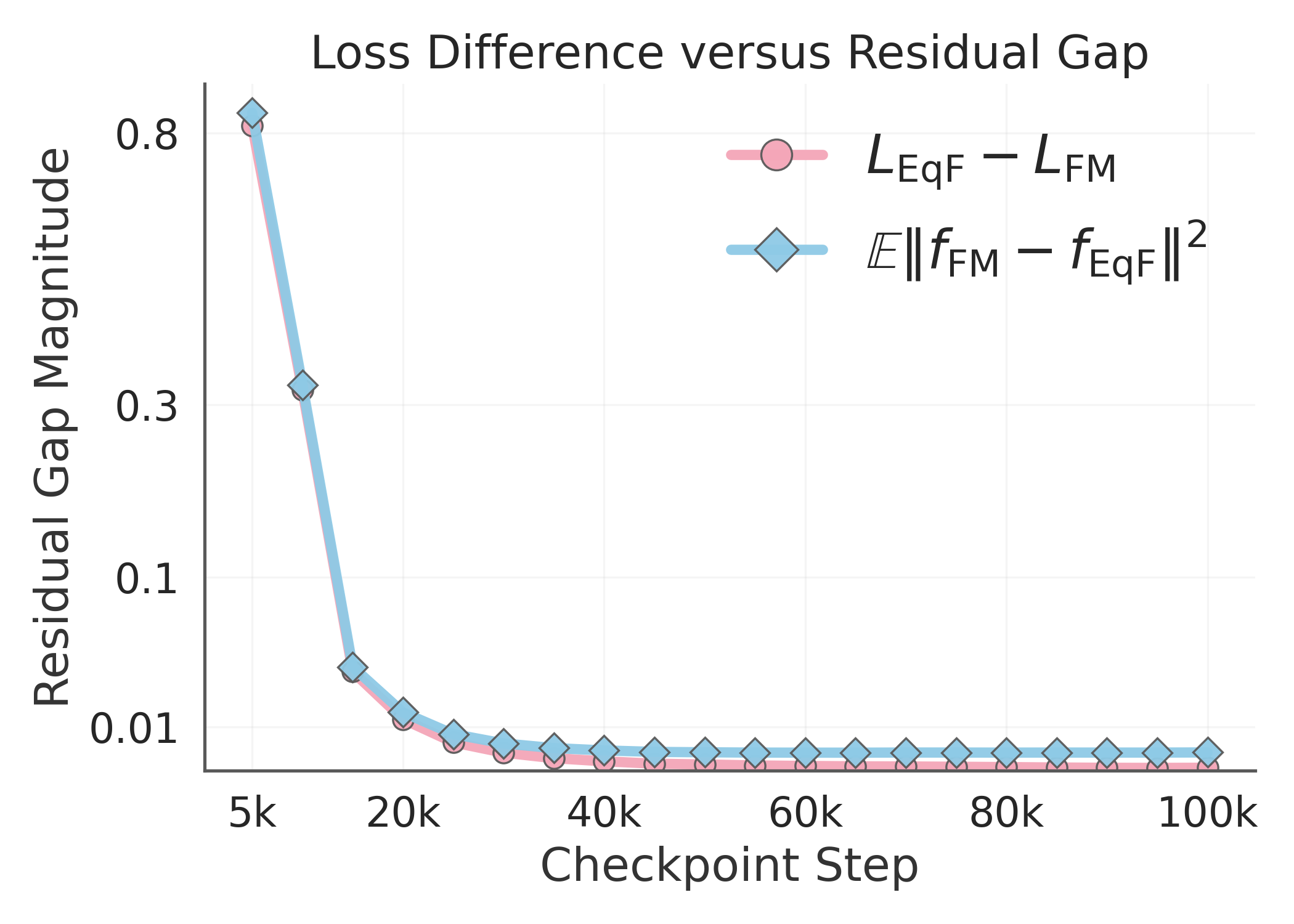
        }
        \captionof{figure}{
            The FM/\EqF loss difference and the expected residual gap between the denoising field matches across Re10K finetuning.
        }
        \label{fig:re10k_ft_residual_gap}
    \end{minipage}

\end{figure}

\paragraph{Budget-Adaptive Inference.}
We evaluate the budget-adaptive sampling algorithm presented in Section~\ref{4.5:budget_adaptive_algorithm} on the three datasets, with the inference setting described in the previous sections. Standard \EqF is run with NAG and $\mu=0.1$; budget-adaptive \EqF uses an identity map $\eta$ function, and is also run with NAG and $\mu=0.1$. Results are available in Table~\ref{tab:main_budget_adaptive} and Figure~\ref{fig:minecraft_budget_adaptive}. Qualitative results for the Droid dataset under this setting are available in Figure~\ref{fig:droid_vis}. Under a low computational budget of 50 sampling steps or fewer, we find that the adaptive algorithm with NAG consistently sets a lower frontier of performance compared to FM and standard \EqF with NAG, demonstrating the practical utility of \EqF's data-adaptive flexibility across model sizes and datasets.

\paragraph{Closed-Loop Acceleration with NAG.}
We study the interplay between inference acceleration, $\eta$ function properties, and closed loop noise level feedback with $h_\omega$ on the same Minecraft dataset setup in Table~\ref{tab:minecraft_inference_expts}. We test three choices for $\eta$ with differing properties in addition to an identity map. Our results confirm that shaping the landscape into an attractor at inference time with $\eta(\seq{\sigma}) \odot f_{\EqF}(\seq{x}) = \seq{0}$ at the data manifold is necessary for NAG to converge due to its minimum-seeking property. The best setting combines attractor-like schedules (either linear log-SNR or $c$-function), closed-loop prediction, and the higher acceleration parameter $\mu$ that closed-loop prediction enables, displayed in Figure~\ref{fig:closed_open_nag}.
Conversely, the SD3 warp is a non-attractor landscape with $\eta(\seq{\sigma}) \odot f_{EqF}(\seq{x}) = -r \seq{x}$ at the data manifold (see Appendix~\ref{apd:sampler_warp_schedules}); as such, NAG-based sampling with SD3 is prone to overshooting the fixed-point, exhibiting unreliable performance and lagging behind other schedules.
 
\paragraph{Online Replanning Inference.}
We design a novel experimental setting to evaluate the robustness of \EqF and the baselines to reutilize previous predictions when new ground truth context is observed, mirroring a realistic world modeling setting. We find that the flexibility of \EqF allows it to better utilize previous predictions without needing to fully recompute the entire future prediction. 
Specifically, to reach the same FVD as Adaptive \EqF for each computation threshold, Flow Matching requires between 13\% and 29\% more compute. The full experimental results and details are in Appendix~\ref{apd:online_replanning_inference}.

\subsection{Empirical Evidence for Vanishing Gap in Proposition 4.1}
\label{5.5:evidence_for_decomposition}

Proposition~\ref{prop:eqf_risk_decomposition} predicts that for \EqF to achieve loss parity with FM, it must reproduce FM’s field from $\seq{x}^{\seq{\sigma}}$ alone. As evidence that the gap closes, we compare the predictions made by FM and \EqF models on identical held-out noisy inputs. We find that across datasets, the predictions are highly correlated ($>0.98$) and the residual gap accounts for as little as $2.4\%$ of the denoiser output magnitude. Figure~\ref{fig:re10k_ft_residual_gap} further shows that over Re10K finetuning, the gap between the FM and \EqF losses closely tracks the empirical difference in the learned denoising field. Thus \EqF approximately recovers the FM
velocity field, meaning any performance gains are from differences in the inference procedure. By doing so without noise level conditioning, this substantiates our hypothesis that \EqF implicitly identifies the relevant noise level from $\seq{x}^{\seq{\sigma}}$. See Appendix~\ref{apd:residual_gap} for more details.

\begin{figure}[tb]
\centering

\begin{minipage}[t]{0.52\textwidth}
\vspace{0pt}
\centering
\captionof{table}{Ablations of the \EqF model on 300-frame Minecraft generation across warp schedules, samplers, and $\mu$ for one seed. The reported run for each row corresponds to the best setting of $\mu$. Closed loop inference is necessary for higher settings of $\mu$.}
\label{tab:minecraft_inference_expts}
\resizebox{\linewidth}{!}{
\begin{tabular}{lllcc}
\toprule
\textbf{$\eta$ Warp} & \textbf{Feedback} & \textbf{Sampler} & \textbf{$\mu$} & \textbf{FVD$\downarrow$} \\
\midrule
$c$ Function & Closed Loop & NAG & 0.3 & \textbf{64.12} \\
 & Open Loop & NAG & 0.1 & 67.94 \\
 & Open Loop & GD & 0 & 78.69 \\
\midrule
Linear $\log$-SNR & Closed Loop & NAG & 0.3 & 67.70 \\
 & Open Loop & NAG & 0.1 & 90.59 \\
 & Open Loop & GD & 0 & 92.54 \\
\midrule
SD3 $r=3$ & Closed Loop & NAG & 0.1 & 203.73 \\
 & Open Loop & NAG & 0.1 & 114.11 \\
 & Open Loop & GD & 0 & 137.85 \\
\midrule
No Schedule & Closed Loop & NAG & 0.1 & 80.25 \\
 & Open Loop & NAG & 0.1 & 80.25 \\
 & Open Loop & GD & 0 & 105.44 \\
\bottomrule
\end{tabular}
}
\end{minipage}
\hfill
\begin{minipage}[t]{0.42\textwidth}
\vspace{0pt}
\centering
\includegraphics[width=0.86\linewidth]{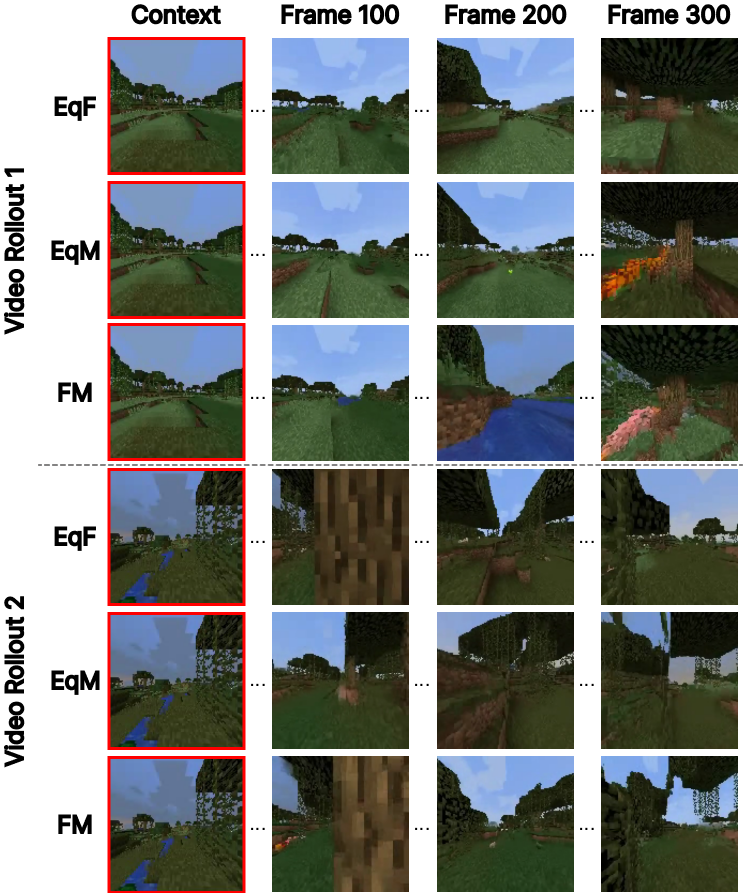}
\captionof{figure}{300-frame autoregressively generated Minecraft videos.}
\label{fig:minecraft_vis}
\end{minipage}

\end{figure}

\section{Related Work}
\label{6:related_work}

\subsection{Autoregressive Video Generation}

Denoising-based autoregressive generative models have already shown tremendous potential for video generation  \citep{brooks2024videogenerationmodelsworldsimulators}. The predominant challenge in autoregressive video generation lies in producing stable rollouts that adhere to the context, currently addressed by introducing strict inference schedules \citep{feng2024matrixinfinitehorizonworldgeneration,song2025historyguidedvideodiffusion}. Our results suggest that in fact, these schedules may not be necessary for stable autoregressive generation, and that the noise conditioning they rely upon may be a key hindrance to the model at inference time. 
Another approach for stabilizing autoregressive rollouts conditioned on generated frames post-trains directly on these generated frames to ameliorate a general train-inference mismatch \citep{huang2025selfforcingbridgingtraintest}. As analyzed in Section~\ref{3:flow_analysis} and Appendix~\ref{sec_bowls}, we have addressed one aspect of train-inference mismatch by removing the noise level condition; we see other post-training approaches as coming from a different, but synergistic angle.

\subsection{Generative Modeling without Noise Level Conditioning}

While a few prior works have studied generative modeling without noise level conditioning, they have primarily focused on image generation as the setting. In addition to being the first work to our knowledge to explore the implications of removing noise level conditioning for temporal data such as video, we clarify the contributions we make over these prior works here.

\paragraph{Noise-Unconditional Flow Matching.}
\cite{sun2025noise} introduced a training recipe for noise-unconditional Flow Matching (NU-FM), though their experiments did not improve over Flow Matching on large-scale image datasets, nor did they explore data-adaptive sampling. NU-FM's training recipe, when adapted to video generation through the framework presented in Section~\ref{2.2:diffusion_forcing}, can be instantiated as a base inference setting of \EqF without any data-adaptive sampling applied on top. Our results with NU-FM substantiate the hypothesis that data-adaptive sampling is the key aspect that allows noise-unconditional models to improve over traditional models.

\paragraph{Equilibrium Matching.}
Equilibrium Matching (EqM) removes explicit noise conditioning from Flow Matching and introduces a modulated training target designed to make the magnitude of the learned field vanish at clean data, enabling attractor-style sampling for image generation \citep{wang2025equilibrium}. However, EqM requires a training-time hyperparameter search over both the shape $c$ and magnitude $\lambda$ of the field. In this work, we demonstrate that separating the noise-unconditional denoising objective from the sampling dynamics is advantageous: \EqF learns an unmodified velocity field and only induces an attractor landscape at inference time through the $\eta$ warp. In addition to offering the simplification of separating training and inference-time decisions, this enables stronger performance via data-adaptive sampling. Our framework further removes an unintended loss weighting introduced by EqM's target modulation, which we find to be suboptimal in practice. In fact, we analyze further how our model subsumes all possible EqM training-time hyperparameters with inference-time decisions in Appendices~\ref{apd:eqmloss},~\ref{apd:eqmsampl}.

\paragraph{Comparison on ImageNet.} Though \EqF is designed for video data, our insights can be applied on ImageNet to compare against prior noise-unconditional image models \citep{5206848}. We train an \EqF model using the same settings as the EqM paper, following the SiT architecture, and train the XL$/2$ size model to 80 epochs \citep{ma2024sit,wang2025equilibrium}. The results in Table~\ref{tab:imagenet_main}, using the inference-time $c$-function $\eta$ warp across 3 seeds, demonstrate that image generation also benefits from data-driven adaptivity and modular training- and inference-time decisions.

\begin{table}[t]
\centering
\caption{
Comparison of FVD and VBench at 50 sampling steps across Minecraft, RealEstate10K, and Droid datasets using the budget-adaptive \EqF algorithm. NAG $\mu=0.1$.
}
\label{tab:main_budget_adaptive}

\renewcommand{\arraystretch}{1.10}

\resizebox{\linewidth}{!}{%
\begin{tabular}{@{}l cc cc cc@{}}
\toprule
& \multicolumn{2}{c}{Minecraft}
& \multicolumn{2}{c}{Re10K}
& \multicolumn{2}{c}{Droid} \\
\cmidrule(lr){2-3}
\cmidrule(lr){4-5}
\cmidrule(lr){6-7}

Method
& FVD $\downarrow$
& VBench $\uparrow$
& FVD $\downarrow$
& VBench $\uparrow$
& FVD $\downarrow$
& VBench $\uparrow$ \\
\midrule

\EqF Adaptive NAG
& $\mathbf{91.45 \pm 0.92}$
& $0.771 \pm 0.000$
& $\mathbf{127.69 \pm 1.57}$
& $\mathbf{0.761 \pm 0.000}$
& $\mathbf{107.38 \pm 0.82}$
& $\mathbf{0.812 \pm 0.002}$ \\

\EqF NAG
& $92.08 \pm 0.30$
& $\mathbf{0.774 \pm 0.000}$
& $166.73 \pm 2.98$
& $0.759 \pm 0.000$
& $113.58 \pm 3.51$
& $0.810 \pm 0.002$ \\

FM
& $135.37 \pm 1.74$
& $0.769 \pm 0.000$
& $145.42 \pm 5.93$
& $\mathbf{0.761 \pm 0.000}$
& $108.69 \pm 3.71$
& $0.809 \pm 0.000$ \\

\bottomrule
\end{tabular}%
}
\end{table}


\begin{table}[t]
\centering

\begin{minipage}[t]{0.595\textwidth}
\centering
\captionof{table}{Autoregressive rollout results on RealEstate10K, finetuned from pretrained Flow Matching model Wan 2.1 1.3B (denoted with -FT). NAG is run with $\mu=0.1$.}
\label{tab:re10k_autoregressive}
\resizebox{\linewidth}{!}{
\begin{tabular}{llcc}
\toprule
\textbf{Method} & \textbf{Sampler} & \textbf{FVD$\downarrow$} & \textbf{VBench$\uparrow$} \\
\midrule
\EqF-FT & Adaptive, NAG  & \best{$139.5 \pm 3.93$} & \best{$0.7472 \pm 0.001$} \\
\EqF-FT & NAG  & {$142.0 \pm 5.80$} & \best{$0.7472 \pm 0.001$} \\
\EqF-FT & GD               & $149.5 \pm 6.06$ & $0.7470 \pm 0.000$ \\
FM-FT   & Euler            & $174.3 \pm 2.17$ & $0.7429 \pm 0.001$ \\
\bottomrule
\end{tabular}
}
\end{minipage}
\hfill
\begin{minipage}[t]{0.38\textwidth}
\centering
\captionof{table}{ImageNet with \EqF's framework ($c$ Function $\eta$)}
\vspace{0.58em}
\label{tab:imagenet_main}
\resizebox{\linewidth}{!}{
\begin{tabular}{llc}
\toprule
\textbf{Method} & \textbf{Sampler} & \textbf{FID$\downarrow$} \\
\midrule
\EqF & NAG, $\mu=0.3$ & \best{$15.15 \pm 0.09$} \\
\EqF & GD & $15.39 \pm 0.09$ \\
EqM & NAG, $\mu=0.3$ & $15.74 \pm 0.10$ \\
EqM & GD & $15.91 \pm 0.08$ \\
FM & Euler & $18.91 \pm 0.10$ \\
\bottomrule
\end{tabular}
}
\end{minipage}

\end{table}

\section{Conclusion}

By training on a simplified noise-unconditional objective, \EqF leverages unique adaptivity and closed-loop inference capabilities to improve quality and consistency on challenging autoregressive video generation tasks. It further elucidates the advantages that come from a modular decoupling of noise-unconditional training and attractor-like inference, surpassing the performance of prior noise-unconditional frameworks. More broadly, our results challenge the assumption that explicit noise conditioning and strict inference schedules are necessary for modern video denoising models. As video becomes an increasingly central modality for world modeling, simulation, and interactive content generation, it is important to distinguish the components that are necessary for stable generation from those adding unnecessary complexity that ultimately constrain performance.

\EqF as presented in this paper carries some limitations. Though \EqF's framework extends to general spatiotemporal denoising generative models, we only consider video in the paper's primary experiments. 
We establish that the error in the noise level readout model is small in Appendix~\ref{apd:noise_from_data}, but it could still contribute a source of error, such as if the partially noised sample is off the training distribution. Finally, while the datasets considered here are large, they are not on the general video domain. Future work pretraining \EqF at a larger scale could support more general conclusions; the purpose of this present work is to compare the performance of \EqF and the baselines under a matched training budget. Training for longer or using data mixtures as is done in \cite{chen2025large} could yield additional improvements in quality and instruction following on our datasets.

\newpage

\begin{ack}

This work is supported in part by the U.S. Army Research Office under Army-ECASE award W911NF-07-R-0003-03; the U.S. Department of Energy, Office of Science, ARPA-H-SOL-24-101 program; IARPA HAYSTAC Program; DARPA YFA; NSF Grants \#2146151, \#2205093, \#2146343, \#2134274, \#2441832, and CCF-2112665; and CDC-RFA-FT-23-0069.
This work has been made possible in part by a gift from the Chan Zuckerberg Initiative Foundation to establish the Kempner Institute for the Study of Natural and Artificial Intelligence at Harvard University.


\end{ack}


\bibliography{example_paper}
\bibliographystyle{neurips_2026}


\newpage 
\appendix

\addcontentsline{toc}{part}{Appendix}

\begingroup
\etocsettocdepth{section}
\etocsettocstyle{\section*{Appendix Contents}}{}
\localtableofcontents
\endgroup

\newpage

\newpage
\section{Analysis on Noise Unconditional Denoising}
\label{apd:apdx_extended_eqm}

In this section, we elaborate on how \EqF subsumes EqM as a training and inference time framework for noise unconditional denoising. 


\subsection{Effect of EqM Modulation on Target}
\label{apd:eqmloss}

In the denoising generative modeling literature, the objective used during training under finite compute has been observed to impact the perceptual quality of samples generated by the learned denoiser \citep{karras2022elucidating, esser2024scaling}. We demonstrate here that the EqM objective explored in \cite{wang2025equilibrium} induces a weighting over noise levels in the corresponding clean-data prediction objective due to the extra modulation term. For $c(\sigma)$ considered in EqM, the weighting places comparatively more emphasis on high-noise samples than the \EqF objective.

\begin{proposition}[Induced $x$-prediction loss weightings]
\label{prop:eqm_eqf_xpred_weightings}
The EqM and \EqF velocity losses are equivalent to the following weighted clean data prediction losses:
\begin{align}
\mathcal{L}_{\mathrm{EqM}}
&=
\operatorname*{\mathbb{E}}_{x,\sigma,\epsilon}
\left[
\frac{c(\sigma)^2}{\sigma^2}
\left\|
x_{\mathrm{EqM}}(x^\sigma,\sigma) - x
\right\|_2^2
\right],
\\
\mathcal{L}_{\EqF}
&=
\operatorname*{\mathbb{E}}_{x,\sigma,\epsilon}
\left[
\frac{1}{\sigma^2}
\left\|
x_{\EqF}(x^\sigma,\sigma) - x
\right\|_2^2
\right].
\end{align}
\end{proposition}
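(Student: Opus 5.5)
Both identities follow from a change of variables between velocity prediction and clean-data prediction, carried out pointwise in $(x,\sigma,\epsilon)$ before taking expectations. For \EqF, the noisy sample satisfies $x^\sigma = (1-\sigma)x + \sigma\epsilon$, so $\epsilon = (x^\sigma - (1-\sigma)x)/\sigma$. The target velocity is therefore
$$v = \epsilon - x = \frac{x^\sigma - x}{\sigma}.$$
Given any network output $f_{\EqF}(x^\sigma)$, I define the induced clean-data prediction $x_{\EqF}(x^\sigma,\sigma)$ by inverting this same affine relation,
$$f_{\EqF}(x^\sigma) = \frac{x^\sigma - x_{\EqF}(x^\sigma,\sigma)}{\sigma},$$
that is, $x_{\EqF} := x^\sigma - \sigma f_{\EqF}(x^\sigma)$. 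Subtracting the two displayed expressions gives $f_{\EqF} - v = (x - x_{\EqF})/\sigma$. Squaring and taking the expectation yields the weight $1/\sigma^2$, which is the second identity.

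For EqM, I first restate its objective as defined in \citet{wang2025equilibrium}. The target is the modulated velocity $c(\sigma)\,v$, where $c(\sigma)$ possibly includes a magnitude factor $\lambda$ that can be absorbed into $c$. The loss is $\mathbb{E}\|f_{\mathrm{EqM}}(x^\sigma) - c(\sigma)(\epsilon - x)\|^2$. Since $c(\sigma)$ is a nonzero scalar for $\sigma>0$, I define the induced clean prediction by
$$f_{\mathrm{EqM}}(x^\sigma) = c(\sigma)\,\frac{x^\sigma - x_{\mathrm{EqM}}(x^\sigma,\sigma)}{\sigma},$$
that is, $x_{\mathrm{EqM}} := x^\sigma - \sigma f_{\mathrm{EqM}}/c(\sigma)$. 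The residual then becomes $\tfrac{c(\sigma)}{\sigma}(x - x_{\mathrm{EqM}})$, and squaring gives the weight $c(\sigma)^2/\sigma^2$. The same argument applies framewise to video with per-frame $\sigma_t$, since the norm splits as a sum over frames. I would note this, but write the proof for a single frame, as in the statement.

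The main point needing care is what ``equivalent'' means. The induced $x$-predictors depend on $\sigma$ even though the networks do not. I would therefore state the result as an equality of loss values under the bijective reparameterization $f \leftrightarrow x(\cdot,\sigma)$, restricted to the noise-unconditional family. Because the map is a bijection for each $\sigma$, minimizers also correspond. I would also handle the measure-zero events $\sigma=0$ and $c(\sigma)=0$: these do not affect the expectation, since $\sigma \sim \mathcal{U}[0,1]$ and $c$ vanishes only at isolated points. Finally, I would remark that because the EqM weighting $c(\sigma)^2$ is decreasing in $\sigma$ for EqM's $c$-functions, EqM's weighting relative to \EqF's is $c(\sigma)^2$. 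This emphasizes higher-noise samples, which is the interpretive claim made before the proposition.
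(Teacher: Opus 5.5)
Your proposal is correct and follows essentially the same route as the paper: define the induced clean-data predictors $x_{\EqF} = x^\sigma - \sigma f_{\EqF}(x^\sigma)$ and $x_{\mathrm{EqM}} = x^\sigma - \tfrac{\sigma}{c(\sigma)} f_{\mathrm{EqM}}(x^\sigma)$. Then use $x = x^\sigma - \sigma v$ to rewrite each velocity residual as a multiple of the $x$-residual, with factor $-\tfrac{1}{\sigma}$ for \EqF and $-\tfrac{c(\sigma)}{\sigma}$ for EqM, and square.

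One slip in your closing remark, which is not part of the proof: EqM's $c(\sigma) = \lambda\min\{1, \sigma/(1-\alpha)\}$ is increasing in $\sigma$, not decreasing. That is exactly why the relative factor $c(\sigma)^2$ up-weights high-noise samples.
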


\begin{proof}
For simplicity, we analyze the case where there is only a single noise level in the sample, $\sigma$, though the argument generalizes to when $\seq{\sigma}$ is $T$-dimensional. Let
\begin{align}
    v := \epsilon - x,
    \qquad
    x^\sigma := (1-\sigma)x + \sigma \epsilon.
\end{align}

Using the definition of $v$, the noised sample can equivalently be written as
\begin{align}
    x^\sigma
    &= (1-\sigma)x + \sigma \epsilon \notag \\
    &= x + \sigma(\epsilon - x) \notag \\
    &= x + \sigma v.
\end{align}

Therefore, the clean data can be recovered from the noised sample and the velocity target:
\begin{align}
    x = x^\sigma - \sigma v.
    \label{eq:x_from_velocity}
\end{align}

EqM trains a noise-unconditional network to predict a modulated velocity target $c(\sigma)v$:
\begin{align}
\mathcal{L}_{\mathrm{EqM}}
=
\operatorname*{\mathbb{E}}_{x,\sigma,\epsilon}
\left[
\left\|
f_{\mathrm{EqM}}(x^\sigma)
-
c(\sigma)v
\right\|_2^2
\right].
\end{align}

To understand the effect that $c(\sigma)$ has on the loss weighting, we can convert this objective to $x$-prediction. Since $f_{\mathrm{EqM}}(x^\sigma)$ is trained to estimate $c(\sigma) v$, the corresponding estimate of $v$ is $f_{\mathrm{EqM}}(x^\sigma)/c(\sigma)$. We can then define the clean-data prediction induced by EqM by substituting back into Equation~\ref{eq:x_from_velocity}. To convert, $x_{\mathrm{EqM}}$ becomes reliant on $\sigma$ as well as the noised data; $x_{\mathrm{EqM}}$ is an induced $x$-prediction model based on the trained $v$-prediction function. 
\begin{align}
    x_{\mathrm{EqM}}(x^\sigma,\sigma)
    :=
    x^\sigma
    -
    \frac{\sigma}{c(\sigma)}
    f_{\mathrm{EqM}}(x^\sigma).
    \label{eq:eqm_x_prediction_def}
\end{align}

Using Equations~\ref{eq:x_from_velocity}~and~\ref{eq:eqm_x_prediction_def}, we can now rewrite the EqM loss in terms of the error between the induced $x$-prediction and the true $x$:
\begin{align}
    x_{\mathrm{EqM}}(x^\sigma,\sigma) - x
    &=
    \left(
    x^\sigma
    -
    \frac{\sigma}{c(\sigma)}
    f_{\mathrm{EqM}}(x^\sigma)
    \right)
    -
    \left(
    x^\sigma - \sigma v
    \right) \notag \\
    &=
    \sigma v
    -
    \frac{\sigma}{c(\sigma)}
    f_{\mathrm{EqM}}(x^\sigma) \notag \\
    &=
    -\frac{\sigma}{c(\sigma)}
    \left(
    f_{\mathrm{EqM}}(x^\sigma)
    -
    c(\sigma)v
    \right).
\end{align}

Rearranging gives
\begin{align}
    f_{\mathrm{EqM}}(x^\sigma)
    -
    c(\sigma)v
    =
    -\frac{c(\sigma)}{\sigma}
    \left(
    x_{\mathrm{EqM}}(x^\sigma;\sigma) - x
    \right).
\end{align}

The original loss for the modulated target prediction becomes the following reweighted $x$-loss
\begin{align}
\mathcal{L}_{\mathrm{EqM}}
&=
\operatorname*{\mathbb{E}}_{x,\sigma,\epsilon}
\left[
\left\|
f_{\mathrm{EqM}}(x^\sigma)
-
c(\sigma)v
\right\|_2^2
\right] \notag \\
&=
\operatorname*{\mathbb{E}}_{x,\sigma,\epsilon}
\left[
\frac{c(\sigma)^2}{\sigma^2}
\left\|
x_{\mathrm{EqM}}(x^\sigma,\sigma) - x
\right\|_2^2
\right].
\label{eq:eqm_general_xpred_weight}
\end{align}

The noise-level weight is given by $w_{\mathrm{EqM}} = \frac{c(\sigma)^2}{\sigma^2}$. For example, if $c(\sigma) = \sigma$, then the weighting term becomes $w_{\mathrm{EqM}}=1$, yielding an unweighted clean-data prediction objective:
\begin{align}
\mathcal{L}_{\mathrm{EqM}}
&=
\operatorname*{\mathbb{E}}_{x,\sigma,\epsilon}
\left[
\left\|
x_{\mathrm{EqM}}(x^\sigma) - x
\right\|_2^2
\right].
\end{align}

In contrast, \EqF trains directly on the unmodulated velocity target:
\begin{align}
\mathcal{L}_{\EqF}
=
\operatorname*{\mathbb{E}}_{x,\sigma,\epsilon}
\left[
\left\|
f_{\EqF}(x^\sigma)
-
v
\right\|_2^2
\right].
\end{align}

The corresponding clean-data estimate is
\begin{align}
    x_{\EqF}(x^\sigma,\sigma)
    :=
    x^\sigma - \sigma f_{\EqF}(x^\sigma).
\end{align}

Repeating the same calculation,
\begin{align}
    x_{\EqF}(x^\sigma,\sigma) - x
    &=
    \left(x^\sigma - \sigma f_{\EqF}(x^\sigma)\right)
    -
    \left(x^\sigma - \sigma v\right) \notag \\
    &=
    -\sigma
    \left(
    f_{\EqF}(x^\sigma) - v
    \right),
\end{align}

which implies
\begin{align}
\mathcal{L}_{\EqF}
&=
\operatorname*{\mathbb{E}}_{x,\sigma,\epsilon}
\left[
\left\|
f_{\EqF}(x^\sigma)
-
v
\right\|_2^2
\right] \notag \\
&=
\operatorname*{\mathbb{E}}_{x,\sigma,\epsilon}
\left[
\frac{1}{\sigma^2}
\left\|
x_{\EqF}(x^\sigma,\sigma) - x
\right\|_2^2
\right].
\label{eq:eqf_xpred_weight}
\end{align}
\end{proof}

\vspace{-1em}

Comparing Equations~\ref{eq:eqm_general_xpred_weight} and~\ref{eq:eqf_xpred_weight}, EqM and \EqF differ by the factor $c(\sigma)^2$ in their induced $x$-prediction weighting. \EqF's loss weighting more heavily weights low-noise regions, similar to Flow Matching. 

Theoretically, all such weightings across noise levels $\sigma$ do not change the pointwise Bayes-optimal denoising target for noise conditional models \citep{kingma2023understandingdiffusionobjectiveselbo}. The same theoretical conclusion holds for noise-unconditional models under the assumption that the noise level posterior concentrates in high dimensions and is thus directly identifiable (studied more in Appendix~\ref{apd:eqmsampl}). However, in practice, finite capacity models are sensitive to how the training signal is distributed across noise levels; state-of-the-art denoising generative models for images have found weightings with increased emphasis on high noise levels (such as the one in EqM) to be less performant \citep{karras2022elucidating,esser2024scaling}. Similar results have been found under close examination of noise-sampling distributions in autoregressive denoising generative modeling for complex dynamical systems \citep{cachay2025elucidated}. Our experimental results in Section~\ref{5:experiments} comparing samples from EqM to \EqF corroborate these findings: the \EqF loss places less relative emphasis on the highest-noise regions compared to EqM and achieves better results.


\subsection{EqM $c$-Function Inference Relationship to \EqF}
\label{apd:eqmsampl}

Here, we continue the discussion from Section~\ref{4.3:eta_schedule_explanation} on how \EqF modularly decouples training and inference. We show that, under concentration of the noise-level posterior established in \cite{sahraee2026geometry,kadkhodaie2026blind}, an EqM model trained with target modulation $c(\seq{\sigma})$ is equivalent to applying the same $c$-function as an inference-time warp to the \EqF velocity field. Thus, in the Bayes-optimal and posterior concentration limit, a trained \EqF model can subsume EqM-style target modulation by moving the $c$-function to the sampler via the warp $\eta$, as in Equation~\ref{eq:sampler_update}.

\begin{proposition}[EqM training-time modulation can be rewritten as an inference-time \EqF warp]
\label{prop:eqm_modulation_as_eqf_warp}
The EqM Bayes-optimal denoiser satisfies
\begin{align}
f_{\textnormal{EqM}}^\star(\seq{x}^{\seq{\sigma}})
&\approx
c(\widehat{\seq{\sigma}}) \odot 
f_{\EqF}^{\star}
\left(
\seq{x}^{\seq{\sigma}}
\right)  \notag \\
&\propto \eta(\widehat{\seq{\sigma}}) \odot 
f_{\EqF}^{\star}
\left(
\seq{x}^{\seq{\sigma}}
\right),
\label{eq:eqm_to_eqf_concentration}
\end{align}
where the final proportionality holds when the inference-time warp $\eta$ is chosen to have the same shape as $c$, up to normalization.
\end{proposition}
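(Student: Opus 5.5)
The proof starts from the Bayes-optimal characterization of each noise-unconditional model as a posterior average. EqM minimizes $\mathbb{E}\|f_{\mathrm{EqM}}(\seq{x}^{\seq{\sigma}}) - c(\seq{\sigma})\odot\seq{v}\|_2^2$ over functions of $\seq{x}^{\seq{\sigma}}$ alone. Its pointwise minimizer is therefore the conditional expectation
\begin{align}
f_{\mathrm{EqM}}^\star(\seq{x}^{\seq{\sigma}})
=
\mathbb{E}\left[c(\seq{\sigma})\odot\seq{v}\mid\seq{x}^{\seq{\sigma}}\right]
=
\mathbb{E}_{\seq{\sigma}\mid\seq{x}^{\seq{\sigma}}}\left[c(\seq{\sigma})\odot f_{\mathrm{FM}}^\star(\seq{x}^{\seq{\sigma}},\seq{\sigma})\right],
\end{align}
where the second equality uses the tower property. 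We first condition on $(\seq{x}^{\seq{\sigma}},\seq{\sigma})$, under which $c(\seq{\sigma})$ is deterministic. Then the inner expectation $\mathbb{E}[\seq{v}\mid\seq{x}^{\seq{\sigma}},\seq{\sigma}]$ is exactly the Flow Matching optimum. Because $c$ acts framewise and $\odot$ is framewise, this factoring is valid frame by frame. The same computation with $c\equiv 1$ gives the expression for $f_{\EqF}^\star$ stated after Proposition~\ref{prop:eqf_risk_decomposition}, namely $f_{\EqF}^\star(\seq{x}^{\seq{\sigma}})=\mathbb{E}_{\seq{\sigma}\mid\seq{x}^{\seq{\sigma}}}[f_{\mathrm{FM}}^\star(\seq{x}^{\seq{\sigma}},\seq{\sigma})]$.

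Next we invoke posterior concentration, as discussed in Section~\ref{4.2:noise_conc} and in \cite{kadkhodaie2026blind,sahraee2026geometry}. Let $\widehat{\seq{\sigma}}=\widehat{\seq{\sigma}}(\seq{x}^{\seq{\sigma}})$ be the posterior mode, and assume $p(\seq{\sigma}\mid\seq{x}^{\seq{\sigma}})$ concentrates at $\widehat{\seq{\sigma}}$. Since $c$ and $f_{\mathrm{FM}}^\star(\seq{x}^{\seq{\sigma}},\cdot)$ are continuous in $\seq{\sigma}$, both posterior averages collapse to evaluation at $\widehat{\seq{\sigma}}$:
\begin{align}
f_{\mathrm{EqM}}^\star \approx c(\widehat{\seq{\sigma}})\odot f_{\mathrm{FM}}^\star(\seq{x}^{\seq{\sigma}},\widehat{\seq{\sigma}}),
\qquad
f_{\EqF}^\star \approx f_{\mathrm{FM}}^\star(\seq{x}^{\seq{\sigma}},\widehat{\seq{\sigma}}).
\end{align}
Substituting the second approximation into the first gives the first line of Equation~\ref{eq:eqm_to_eqf_concentration}.

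The second line is then immediate. The paper defines $\eta$ to equal $c$ up to a normalizing constant, namely the rescaling by the step budget $N$ from Section~\ref{4.3:eta_schedule_explanation}. So $c(\widehat{\seq{\sigma}})=\kappa\,\eta(\widehat{\seq{\sigma}})$ for a scalar $\kappa>0$, which yields the proportionality. This also identifies the EqM sampler update with the \EqF update in Equation~\ref{eq:sampler_update}, up to a step-size constant.

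The main obstacle is making ``$\approx$'' quantitative. The error in the first line is
\begin{align}
\mathbb{E}_{\seq{\sigma}\mid\seq{x}^{\seq{\sigma}}}\left[\left(c(\seq{\sigma})-c(\widehat{\seq{\sigma}})\right)\odot f_{\mathrm{FM}}^\star\right]
+c(\widehat{\seq{\sigma}})\odot\left(\mathbb{E}_{\seq{\sigma}\mid\seq{x}^{\seq{\sigma}}}[f_{\mathrm{FM}}^\star]-f_{\EqF}^\star\right),
\end{align}
where the second bracket is in fact exactly zero, so only the first term contributes. I would bound this term by Cauchy--Schwarz: it is at most the posterior standard deviation of $c(\seq{\sigma})$ times the posterior second moment of $f_{\mathrm{FM}}^\star$. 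If $c$ is Lipschitz, this is controlled by the posterior spread of $\seq{\sigma}$, which the cited concentration results make vanish as the dimension grows. A subtlety is that $f_{\mathrm{FM}}^\star$ scales like $1/\sigma$ in its denoising part as $\sigma\to 0$. I would therefore either restrict to $\sigma$ bounded away from $0$ or note that attractor-style $c$ vanishes at $0$, which tames the product. I would present the result as holding in the concentration limit rather than with explicit rates.
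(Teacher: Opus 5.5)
Your argument is correct and follows the paper's proof. Both write $f_{\mathrm{EqM}}^\star$ and $f_{\EqF}^\star$ via the tower property as posterior averages over $p(\seq{\sigma}\mid\seq{x}^{\seq{\sigma}})$ of $\mathbb{E}[\seq{v}\mid\seq{x}^{\seq{\sigma}},\seq{\sigma}]$, with $c(\seq{\sigma})$ pulled out for EqM, then collapse both at $\widehat{\seq{\sigma}}$ under concentration, substitute, and take $\eta$ with the shape of $c$. Your exact error term $\mathbb{E}_{\seq{\sigma}\mid\seq{x}^{\seq{\sigma}}}[(c(\seq{\sigma})-c(\widehat{\seq{\sigma}}))\odot f_{\mathrm{FM}}^\star]$ goes beyond the paper, but the Cauchy--Schwarz bound needs the root-mean-square deviation of $c(\seq{\sigma})$ about $c(\widehat{\seq{\sigma}})$ (not the posterior standard deviation, as $\widehat{\seq{\sigma}}$ is the mode) and the square root of the posterior second moment of $f_{\mathrm{FM}}^\star$.
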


\begin{proof}
Recall the noising process and velocity target from the main text:
\begin{align}
    \seq{v}
    :=
    \seq{\epsilon} - \seq{x},
    \qquad
    \seq{x}^{\seq{\sigma}}
    :=
    (1-\seq{\sigma}) \odot \seq{x}
    +
    \seq{\sigma} \odot \seq{\epsilon}. \notag 
\end{align}

For a squared loss regression objective, the Bayes-optimal denoiser is the conditional expectation of the target given the model input. For \EqF, the model input is only $\seq{x}^{\seq{\sigma}}$, and the training target is the unmodulated velocity $\seq{v}$, yielding the following Bayes-optimal denoiser:
\begin{align}
\mathcal{L}_{\EqF} &= \operatorname*{\mathbb{E}}_{\seq{x},\seq{\epsilon},\seq{\sigma}}
\left [
\left |
\left |
f_{\EqF}(\seq{x}^{\seq{\sigma}}) - \seq{v}
\right |
\right |^2_2
\right ], \qquad
    f_{\EqF}^{\star}(\seq{x}^{\seq{\sigma}})
    =
    \operatorname*{\mathbb{E}}_{\seq{x},\seq{\epsilon},\seq{\sigma}}
    \left[
        \seq{v}
        \mid
        \seq{x}^{\seq{\sigma}}
    \right].
\end{align}

Applying the law of total expectation by first conditioning on $\seq{\sigma}$ gives an outer expectation over possible noise levels under $p(\seq{\sigma} | \seq{x}^{\seq{\sigma}})$, and an inner expectation over possible clean data and noise pairs under $p(\seq{x}, \seq{\epsilon} | \seq{x}^{\seq{\sigma}}, \seq{\sigma})$:
\begin{align}
    f_{\EqF}^{\star}(\seq{x}^{\seq{\sigma}})
    &=
    \operatorname*{\mathbb{E}}_{\seq{\sigma}}
    \left[
        \operatorname*{\mathbb{E}}_{\seq{x},\seq{\epsilon}}
        \left[
            \seq{v}
            \mid
            \seq{x}^{\seq{\sigma}}, \seq{\sigma}
        \right]
        \mid
        \seq{x}^{\seq{\sigma}}
    \right].
    \label{eq:eqf_bayes_expanded}
\end{align}

EqM has the same model input, but the training target is the modulated velocity $c(\seq{\sigma})\odot \seq{v}$.
\begin{align}
    f_{\textrm{EqM}}^{\star}(\seq{x}^{\seq{\sigma}})
    &=
    \operatorname*{\mathbb{E}}_{\seq{x},\seq{\epsilon},\seq{\sigma}}
    \left[
        c(\seq{\sigma}) \odot \seq{v}
        \mid
        \seq{x}^{\seq{\sigma}}
    \right] \notag \\
    &=
    \operatorname*{\mathbb{E}}_{\seq{\sigma}}
    \left[
        c(\seq{\sigma}) \odot 
        \operatorname*{\mathbb{E}}_{\seq{x},\seq{\epsilon}}
        \left[
            \seq{v}
            \mid
            \seq{x}^{\seq{\sigma}}, \seq{\sigma}
        \right]
        \mid
        \seq{x}^{\seq{\sigma}}
    \right].
    \label{eq:eqm_bayes_expanded}
\end{align}

Under concentration in high dimensions \citep{sahraee2026geometry, kadkhodaie2026blind}, the posterior estimate of the noise level given noisy data collapses to a Dirac delta function at a point mass at $\widehat{\seq{\sigma}}$. Thus, any outer expectation over $\seq{\sigma}$ for any function $g$ when conditioned on $\seq{x}^{\seq{\sigma}}$ collapses to evaluation at $\widehat{\seq{\sigma}}$:
\begin{align}
    p(\seq{\sigma} \mid \seq{x}^{\seq{\sigma}})
    \approx
    \delta(\seq{\sigma} - \widehat{\seq{\sigma}}), \qquad 
    \operatorname*{\mathbb{E}}_{\seq{\sigma}}
    \left[
        g(\seq{\sigma})
        \mid
        \seq{x}^{\seq{\sigma}}
    \right]
    \approx
    g(\widehat{\seq{\sigma}}).
    \label{eq:noise_post} 
\end{align}

Applying this to Equations~\ref{eq:eqf_bayes_expanded} and \ref{eq:eqm_bayes_expanded} and combining, with a final assumption of the \EqF sampler using an inference-time warp $\eta$ with the same shape as $c$, gives
\begin{align}
    f_{\EqF}^{\star}(\seq{x}^{\seq{\sigma}})
    &\approx
    \operatorname*{\mathbb{E}}_{\seq{x},\seq{\epsilon}}
    \left[
        \seq{v}
        \mid
        \seq{x}^{\seq{\sigma}}, \widehat{\seq{\sigma}}
    \right],
    \label{eq:eqf_concentrated} \\
    f_{\textrm{EqM}}^{\star}(\seq{x}^{\seq{\sigma}})
    &\approx
    c(\widehat{\seq{\sigma}}) \odot
    \operatorname*{\mathbb{E}}_{\seq{x},\seq{\epsilon}}
    \left[
        \seq{v}
        \mid
        \seq{x}^{\seq{\sigma}}, \widehat{\seq{\sigma}}
    \right]
     \notag \\
    &\approx
    c(\widehat{\seq{\sigma}}) \odot 
    f_{\EqF}^{\star}
    \left(
    \seq{x}^{\seq{\sigma}}
    \right)  \notag \\
    &\propto
    \eta(\widehat{\seq{\sigma}}) \odot 
    f_{\EqF}^{\star}
    \left(
    \seq{x}^{\seq{\sigma}}
    \right).
    \label{eq:eqm_concentrated} 
\end{align}
\end{proof}

As described in Appendix~\ref{apd:sampler_warp_schedules}, a valid transport warp must integrate to 1, so the proportionality between $c$ and $\eta$ should not be interpreted as saying that every EqM $c$-function directly defines a valid probability-flow transport. The EqM paper chose to define the $c$-function generally such that $c(\sigma) \to 0$ as $\sigma \to 0$, which does not in general define a valid transport. EqM introduced another training-time hyperparameter $\lambda$ to modulate the magnitude of the $c$-function. When taken together with the inference-time step size, EqM approximates this constraint empirically, falling short of identifying the theoretically correct value and requiring large, expensive training-time sweeps over both $\lambda$ and the $c$-function that are unnecessary in our framework.

Together with Appendix~\ref{apd:eqmloss}, this demonstrates that EqM's training-time target modulation is \textit{theoretically unnecessary} under the stated noise-level concentration and introduces an undesirable weighting over noise levels during training, as we confirm experimentally in Section~\ref{5:experiments}. In high dimensions, this suggests that an equilibrium landscape together with attractor-style sampling are better treated as modular inference time decisions rather than being baked into the training objective.


\subsection{Proof of Proposition~\ref{prop:eqf_risk_decomposition}}
\label{loss_decom}

In this section we expand the loss decomposition in Proposition~\ref{prop:eqf_risk_decomposition}. This decomposition demonstrates that the loss for a denoiser without noise level conditioning is lower bounded by the one with noise level conditioning, and that if the \EqF denoiser can minimize the pointwise denoiser error for any particular noise level, it can achieve the same optima.

\begin{proof}
The Bayes-optimal denoiser for Flow Matching is the conditional expectation of the target given the model input:
\begin{align*}
    f_{\textrm{FM}}^\star(\seq{x}^{\seq{\sigma}},\seq{\sigma})
    =
    \operatorname*{\mathbb{E}}_{\seq{x},\seq{\epsilon}}
    \left[
        \seq{v}
        \mid
        \seq{x}^{\seq{\sigma}},\seq{\sigma}
    \right].
\end{align*}
For brevity, throughout this proof we write
\begin{align*}
    f_{\textrm{FM}}^\star
    &:=
    f_{\textrm{FM}}^\star(\seq{x}^{\seq{\sigma}},\seq{\sigma}),
    \qquad
    f_{\EqF}^\star
    :=
    f_{\EqF}^\star(\seq{x}^{\seq{\sigma}}).
\end{align*}
By the definition of $f_{\textrm{FM}}^\star$, the Flow Matching residual has zero conditional mean:
\begin{align}
    \operatorname*{\mathbb{E}}_{\seq{x},\seq{\epsilon}}
    \left[
        \seq{v}
        -
        f_{\textrm{FM}}^\star
        \mid
        \seq{x}^{\seq{\sigma}},\seq{\sigma}
    \right]
    &=
    \operatorname*{\mathbb{E}}_{\seq{x},\seq{\epsilon}}
    \left[
        \seq{v}
        \mid
        \seq{x}^{\seq{\sigma}},\seq{\sigma}
    \right]
    -
    f_{\textrm{FM}}^\star = \seq{0}
\end{align}

We now expand the optimal \EqF risk by adding and subtracting $f_{\textrm{FM}}^\star$:
\begin{align}
\mathcal L_{\EqF}^\star
&=
\operatorname*{\mathbb{E}}_{\seq{x},\seq{\epsilon},\seq{\sigma}}
\left[
\left\|
\seq{v}
-
f_{\EqF}^\star
\right\|_2^2
\right]
\notag
\\
&=
\operatorname*{\mathbb{E}}_{\seq{x},\seq{\epsilon},\seq{\sigma}}
\left[
\left\|
\left(\seq{v}-f_{\textrm{FM}}^\star\right)
+
\left(f_{\textrm{FM}}^\star-f_{\EqF}^\star\right)
\right\|_2^2
\right]
\notag
\\
&=
\operatorname*{\mathbb{E}}_{\seq{x},\seq{\epsilon},\seq{\sigma}}
\left[
\left\|
\seq{v}-f_{\textrm{FM}}^\star
\right\|_2^2
\right]
+
\operatorname*{\mathbb{E}}_{\seq{x},\seq{\epsilon},\seq{\sigma}}
\left[
\left\|
f_{\textrm{FM}}^\star-f_{\EqF}^\star
\right\|_2^2
\right]
\notag
\\
&\qquad
+
2\operatorname*{\mathbb{E}}_{\seq{x},\seq{\epsilon},\seq{\sigma}}
\left[
\left\langle
\seq{v}-f_{\textrm{FM}}^\star,
f_{\textrm{FM}}^\star-f_{\EqF}^\star
\right\rangle
\right]
\notag
\\
&=
\mathcal L_{\textrm{FM}}^\star
+
\operatorname*{\mathbb{E}}_{\seq{x},\seq{\epsilon},\seq{\sigma}}
\left[
\left\|
f_{\textrm{FM}}^\star-f_{\EqF}^\star
\right\|_2^2
\right]
\notag
\\
&\qquad
+
2\operatorname*{\mathbb{E}}_{\seq{x}^{\seq{\sigma}},\seq{\sigma}}
\left[
\operatorname*{\mathbb{E}}_{\seq{x},\seq{\epsilon}}
\left[
\left\langle
\seq{v}-f_{\textrm{FM}}^\star,
f_{\textrm{FM}}^\star-f_{\EqF}^\star
\right\rangle
\mid
\seq{x}^{\seq{\sigma}},\seq{\sigma}
\right]
\right]
\notag
\\
&=
\mathcal L_{\textrm{FM}}^\star
+
\operatorname*{\mathbb{E}}_{\seq{x},\seq{\epsilon},\seq{\sigma}}
\left[
\left\|
f_{\textrm{FM}}^\star-f_{\EqF}^\star
\right\|_2^2
\right]
\notag
\\
&\qquad
+
2\operatorname*{\mathbb{E}}_{\seq{x}^{\seq{\sigma}},\seq{\sigma}}
\left[
\left\langle
\operatorname*{\mathbb{E}}_{\seq{x},\seq{\epsilon}}
\left[
\seq{v}-f_{\textrm{FM}}^\star
\mid
\seq{x}^{\seq{\sigma}},\seq{\sigma}
\right],
f_{\textrm{FM}}^\star-f_{\EqF}^\star
\right\rangle
\right]
\notag
\\
&=
\mathcal L_{\textrm{FM}}^\star
+
\operatorname*{\mathbb{E}}_{\seq{x},\seq{\epsilon},\seq{\sigma}}
\left[
\left\|
f_{\textrm{FM}}^\star-f_{\EqF}^\star
\right\|_2^2
\right]
+
2\operatorname*{\mathbb{E}}_{\seq{x}^{\seq{\sigma}},\seq{\sigma}}
\left[
\left\langle
\seq{0},
f_{\textrm{FM}}^\star-f_{\EqF}^\star
\right\rangle
\right]
\notag
\\
&=
\mathcal L_{\textrm{FM}}^\star
+
\operatorname*{\mathbb{E}}_{\seq{x},\seq{\epsilon},\seq{\sigma}}
\left[
\left\|
f_{\textrm{FM}}^\star(\seq{x}^{\seq{\sigma}},\seq{\sigma})
-
f_{\EqF}^\star(\seq{x}^{\seq{\sigma}})
\right\|_2^2
\right].
\label{eq:eqf_risk_decomposition}
\end{align}
\end{proof}

Considering the form of the Bayes-optimal denoisers for Flow Matching and \EqF (Equation~\ref{eq:eqf_bayes_expanded}), recall that \EqF marginalizes Flow Matching's Bayes-optimal denoisers over $p(\seq{\sigma} \mid \seq{x}^{\seq{\sigma}})$. As in Equation~\ref{eq:noise_post} if this posterior concentrates \citep{sahraee2026geometry, kadkhodaie2026blind}, the Bayes-optimal denoiser is approximately: 

\begin{align*}
    p(\seq{\sigma} \mid \seq{x}^{\seq{\sigma}})
    \approx
    \delta(\seq{\sigma} - \widehat{\seq{\sigma}})
    \implies
    f_{\EqF}^\star(\seq{x}^{\seq{\sigma}})
    \approx
    \operatorname*{\mathbb{E}}_{\seq{x},\seq{\epsilon}}
    \left[
        \seq{v}
        \mid
        \seq{x}^{\seq{\sigma}}, \widehat{\seq{\sigma}}
    \right]
\end{align*}

Taking into account that the gap $\mathcal L_{\EqF}^\star - \mathcal L_{\textrm{FM}}^\star$ is the expected difference in regression targets, under concentration the \EqF model must drive the gap to zero to minimize the objective, suggesting that to do so it must learn to implicitly estimate the noise level posterior.

\subsection{FM-to-\EqF Residual Gap}
\label{apd:residual_gap}

Here, we provide the full table for the experimental analysis provided in Section~\ref{5.5:evidence_for_decomposition}. We evaluate the paired FM and \EqF predictions on identical held-out noisy inputs. 
Table~\ref{tab:fm_eqf_field_agreement} shows that \EqF obtains a denoising loss only slightly larger than FM, matching the lower bound prediction from the main text. The comparatively small FM/\EqF residual explains the gap in the loss, especially so for Re10K and Droid. 
Their predicted velocity fields obtain low normalized discrepancy and high Pearson correlations ($>0.98$) across all three datasets. We additionally show the convergence of the output denoising field Pearson correlation across training of the Re10K model in Figure~\ref{fig:re10k_train_pearson_correlations}. Droid is evaluated on only 1,417 samples since that is the size of the validation dataset. These results support the conclusion that \EqF approximately recovers the FM velocity field without explicit noise conditioning, meaning the gains in performance come from getting rid of the drifted noise conditions and from the application of adaptive algorithms.

\begin{figure}[t]
    \centering
    \includegraphics[width=0.5\linewidth]{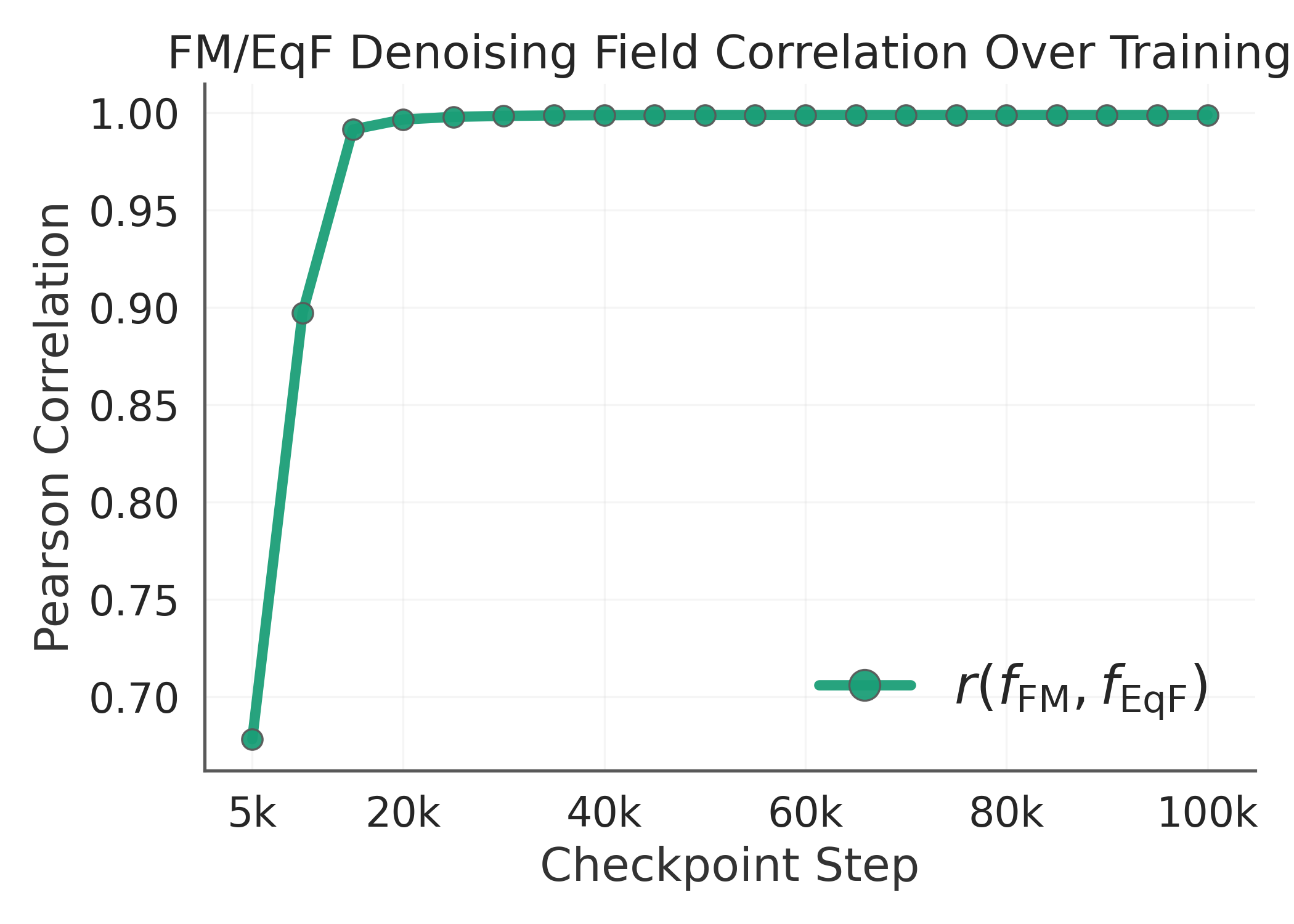}
    \caption{Pearson Correlation convergence across training of the Re10K FM and \EqF models.}
    \label{fig:re10k_train_pearson_correlations}
\end{figure}

\begin{table}[t!]
\centering
\caption{
\textbf{Empirical agreement between trained FM and \EqF denoisers.}
All quantities are evaluated on matched held-out samples using the same
clean inputs, noise levels, and Gaussian noise.
Relative field MSE denotes the magnitude of 
$\mathbb{E}\|f_{\textrm{FM}}-f_{\EqF}\|_2^2$
normalized by
$\tfrac{1}{2}
\bigl(
\mathbb{E}\|f_{\textrm{FM}}\|_2^2+
\mathbb{E}\|f_{\EqF}\|_2^2
\bigr)$.
Across all three domains, the trained fields have small normalized
discrepancy and high Pearson correlation despite \EqF receiving no
explicit noise-level condition.
}
\label{tab:fm_eqf_field_agreement}
\resizebox{\textwidth}{!}{
\begin{tabular}{@{}lcccccc@{}}
\toprule
\textbf{Dataset}
& \textbf{\# Samples}
& \shortstack{\textbf{FM/\EqF}\\\textbf{Field MSE}}
& \multicolumn{2}{c}{\textbf{Validation $v$-Loss}}
& \shortstack{\textbf{Relative}\\\textbf{Field MSE} $(\%)$}
& \shortstack{\textbf{Pearson}\\\textbf{Correlation}} \\
\cmidrule(lr){4-5}
&
&
$\mathbb{E}\|f_{\textrm{FM}}-f_{\EqF}\|_2^2$
&
$\mathbb{E}\|f_{\textrm{FM}}-\seq{v}\|_2^2$
&
$\mathbb{E}\|f_{\EqF}-\seq{v}\|_2^2$
&
&
\\
\midrule
Minecraft
& $2{,}084$
& $0.0425 \pm 0.00505$
& $0.576 \pm 0.00270$
& $0.576 \pm 0.00271$
& $7.378$
& $0.984$
\\
Re10K
& $4{,}096$
& $0.00347 \pm 0.00003$
& $0.142 \pm 0.00146$
& $0.143 \pm 0.00147$
& $2.432$
& $0.999$
\\
Droid
& $1{,}417$
& $0.0135 \pm 0.000385$
& $0.307 \pm 0.00972$
& $0.318 \pm 0.00993$
& $4.325$
& $0.996$
\\
\bottomrule
\end{tabular}
}
\end{table}



\section{Additional Analysis on the Effect of Divergent Noise Levels in Flow Matching}
\label{apd:additional_analysis}

In this section, we provide further analysis on the effect of noise conditioning divergence during autoregressive inference in Flow Matching. Our analysis suggests a causal relationship between this noise level divergence and resulting degraded performance at inference time.

\subsection{Flow Matching Predictions Frequently Terminate Scheduled Sampling at Nonzero Noise}
\label{sec_nonzero}

In Section~\ref{3:flow_analysis}, we observe a persistent residual between the estimated true noise level of the sample and the scheduled noise level during Flow Matching inference. We use the same experimental setup over 300-frame Minecraft autoregressive rollouts here and visualize the progress of the predicted noise level of the sample during inference with a fixed number of denoising steps. The solid line in Figure~\ref{fig:unfinished_denoising} represents the mean of the distribution of framewise noise levels over inference, and the dotted line is the scheduled noise level at a particular solver step using the $c$-function as $\eta$. Any gap between the lines at end of the figure demonstrates that sampling with Flow Matching ends with a certain amount of noise remaining, and the miss rates plotted indicate the frequency with which a sample terminates below a specified noise level. Zooming into the end of the denoising process, we see that scheduled inference consistently terminates with a nonzero noise level, indicating that insufficient denoising is a source of error during Flow Matching inference. This further impacts autoregressive inference in that the conditioning signal for \textit{generated context} is also inaccurate, not just over the horizon as analyzed in Section~\ref{3:flow_analysis}.

\begin{figure}[tb]
    \centering
    \includegraphics[width=\linewidth]{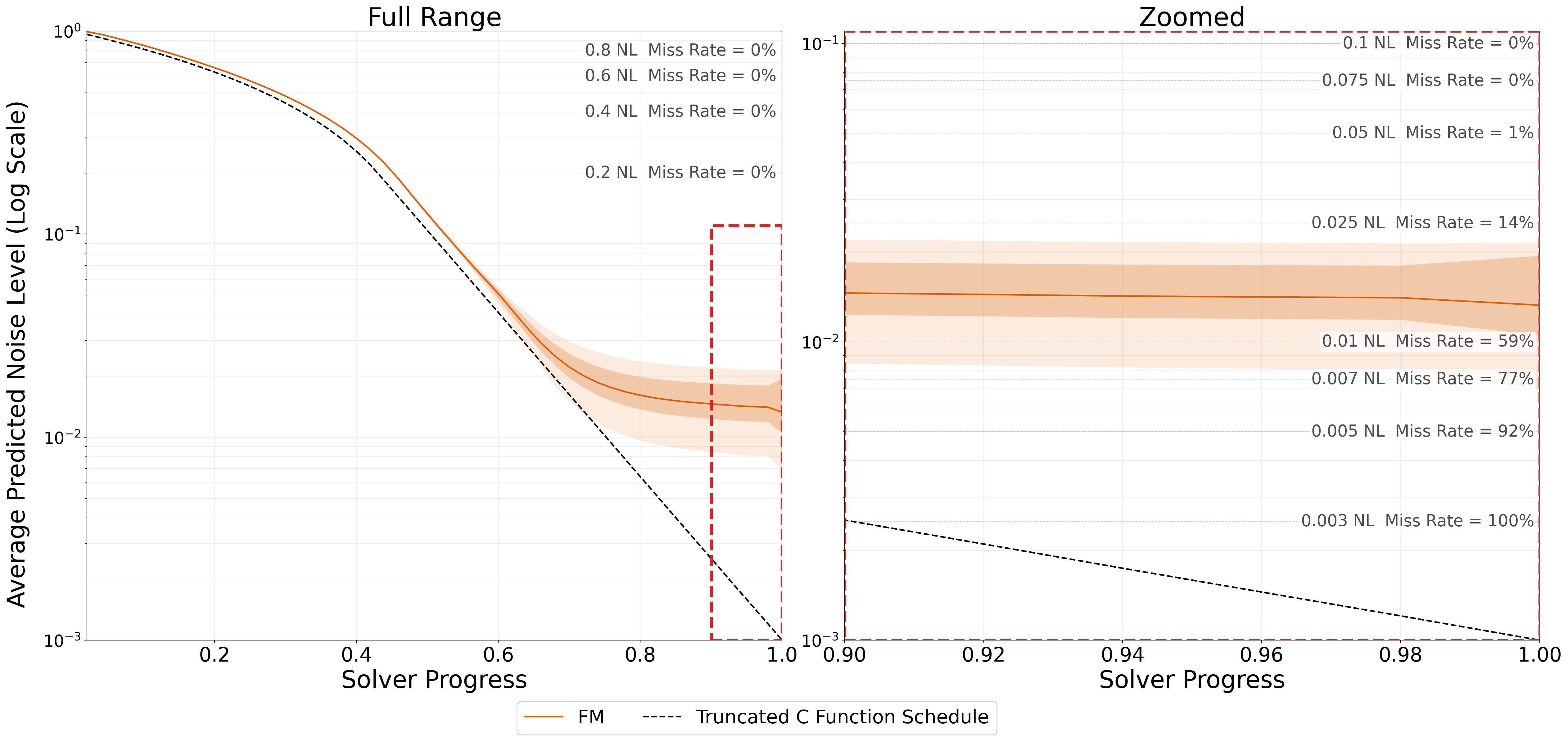}
    \caption{Noise level readout across sampling steps. Ideally, sampling ends at noise level $0$, but Flow Matching samples terminate at a significant, nonzero noise level, which is a potential contributing factor to autoregressive error accumulation. The right shows a zoomed view of the red box from the left plot.}
    \label{fig:unfinished_denoising}
\end{figure}

\subsection{Noise Conditioning Divergence Directly Causes Increased Denoising Loss}
\label{sec_bowls}

To measure the direct effect of divergent noise level conditions per inference step on Flow Matching, we noise data to a particular level using the forward process, and then condition the model on an artificially perturbed noise level that is not the same as the true noise level of the data point. Matching our autoregressive inference setting (Section~\ref{3:flow_analysis} and Section~\ref{5:experiments}), we consider denoising windows of 50 frames from the Minecraft dataset. We apply the forward process with noise level $\sigma_{\text{stable}}$ to the latter 25 frames, constituting the prediction horizon, while retaining a clean context of length 25 frames. We study the effect of incorrect noise levels on the denoising loss when the noise level is correct on different combinations of the 25 active horizon frames and the 25 context frames.

\paragraph{Noise Level Divergence on the Active Horizon Causes Denoising Error.} We first simulate the noise level divergence only over the prediction horizon. The presence of such errors, where the scheduled noise level is different than the estimated true noise level during inference, is established by Section~\ref{3:flow_analysis}. We give the correct noise level of $\seq{0}$ for the 25 context frames, and we give a simulated diverged noise level condition $({\sigma_{\text{stable}}} + \delta) \cdot \ones$ for the prediction horizon. Specifically, the true noise level of the data is $\tilde{\seq{\sigma}} = [\seq{0}, \sigma_\text{stable} \cdot \ones]$, and the actual conditioning signal is $\seq{\sigma} = [\seq{0}, (\sigma_\text{stable} + \delta) \cdot \ones ]$. 

The discrepancy between $\tilde{\seq{\sigma}}$ and $\seq{\sigma}$ causes a significant increase in the denoising $v$-loss for the Flow Matching model, as shown in Figure~\ref{fig:bowls} (left). At the bottom of each `bowl' is the denoising loss when the condition matches the data, and the surrounding perturbed points are normalized as the increase in the loss caused by the mismatch. We thus see that as the conditioned noise level diverges further from the truth, the error monotonically increases. This discrepancy, attributable to rigid noise conditioning during Flow Matching training, is one reason for the decreased downstream performance once noise levels diverge.

\paragraph{Inaccurate Context Noise Level Causes Further Denoising Loss.} In Subsection~\ref{sec_nonzero}, we observe that when following a scheduled path of noise levels with Flow Matching, a sample often terminates at nonzero noise. During autoregressive rollouts, the inference procedure relies on the conditioning signal of generated frames to be accurate. This suggests that any further inaccuracy in the context conditioning signal may further harm performance. 

To simulate the effect of generated context terminating at nonzero noise, we measure the $v$-loss when the noise level condition is corrupted for both the context and the prediction horizon. We measure the loss of a Flow Matching model when paired with the incorrect noise level of $\delta \cdot \ones$ for the context and $({\sigma_{\text{stable}}} + \delta) \cdot \ones$ for the prediction horizon. Specifically, the true noise level of the noisy data is $\tilde{\seq{\sigma}} = [\seq{0}, \sigma_\text{stable} \cdot \ones  ]$, and the actual conditioning signal is $\seq{\sigma} = [\delta \cdot \ones, (\sigma_\text{stable} + \delta) \cdot \ones ]$.

When comparing the loss from this experiment to the previous case when only the prediction horizon's signal is corrupted, we find that the loss with mismatched noise level conditions for both the context and prediction horizon is strictly higher than the case where only the horizon is corrupted, as visualized in Figure~\ref{fig:bowls} (right). 

\begin{figure}[tb]
    \centering
    \includegraphics[width=0.99\linewidth]{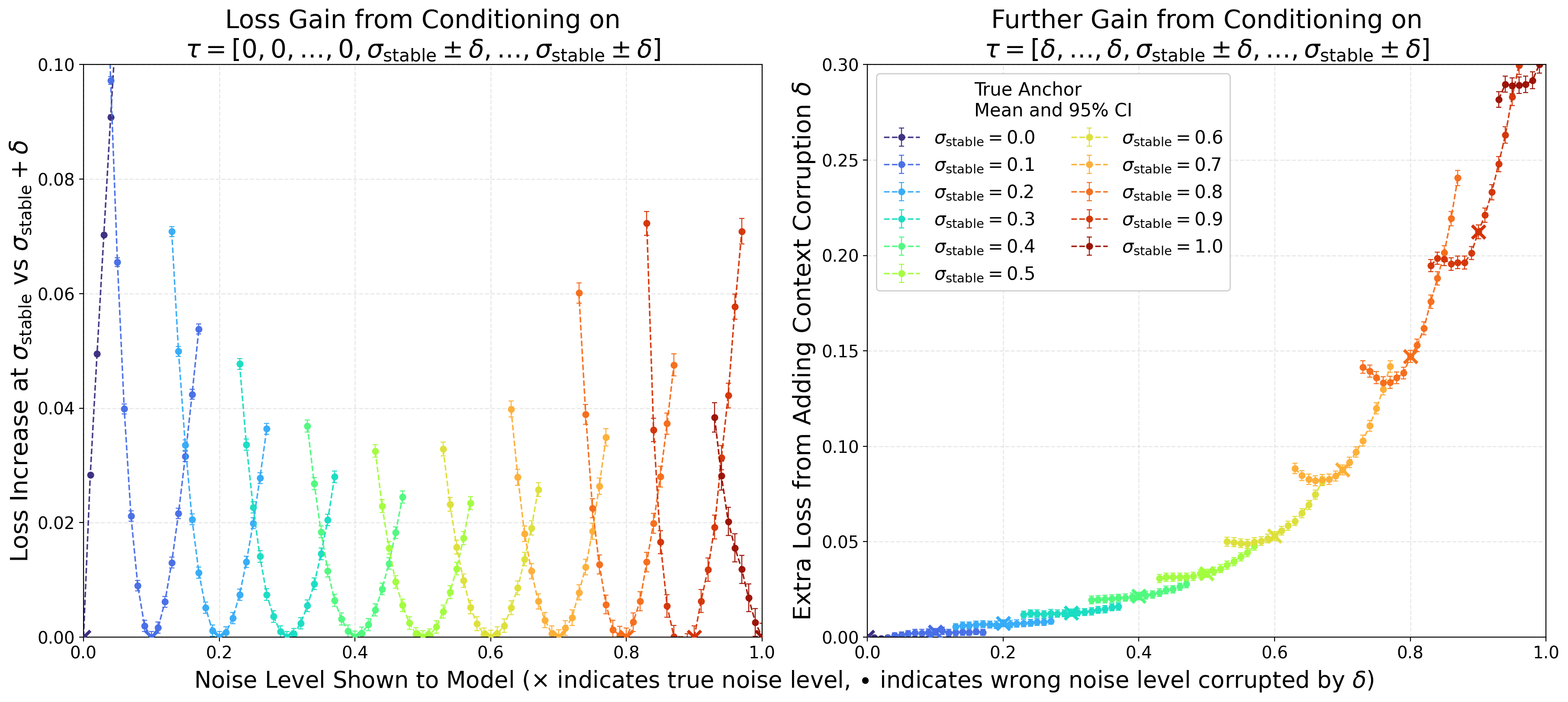}
    \caption{Noise conditioning error introduced artificially, with the loss calculated only on the active sampling horizon. Left: horizon only. Right: gain from adding context corruption. Noise conditions are clamped to $[0,1]$, and errors are standardized with respect to $\delta \in [-0.07, 0.07]$.}
    \label{fig:bowls}
\end{figure}

\subsection{Simulated Closed-Loop Flow Matching Sampling Improves Sample Quality}
\label{apd:fixed_fm_with_g}

Here, we demonstrate that using the predicted noise level from $\tilde{\sigma} \approx g_\phi(x^\sigma)$ for Flow Matching inference can improve performance of Flow Matching, but that directly conditioning on noise levels still produces worse quality compared to \EqF. More details on the noise level estimator $g_\phi$ can be found in Appendix~\ref{apd:noise_from_data}. The evaluation setting is identical to Sections~\ref{3:flow_analysis} and~\ref{5:experiments}, with 300-frame rollouts on Minecraft. 

Accelerated gradient-based samplers such as NAG typically do not work for noise-conditional models, since the sample would step to unknown noise levels, causing severe conditioning divergence and highly inaccurate denoising. As shown in Table~\ref{tab:fm_conditioning_ablation}, using the predicted noise levels to close the sampling loop leads to significant improvements. This is especially the case for NAG, where the sample diverges completely without the closed-loop correction. While these results for closed-loop sampling with Flow Matching demonstrate improvements over naive open-loop sampling schedules, we find that these fixes are just extra steps that can be removed when using \EqF's framework. Specifically, closing the loop by \textit{removing noise conditioning altogether} and relying on the model's internal estimates of progress produces much stronger results, as our main results in Section~\ref{5:experiments} demonstrate \EqF's improvements in the same evaluation setting.

\begin{table}[tb]
\centering
\caption{Flow Matching autoregressive generation results under different inference schedules and conditioning choices. Conditioning on the inferred noise level $g_\phi(x^{(i)})$ substantially improves FVD, especially when combined with NAG under the $c$ Function inference schedule.}

\label{tab:fm_conditioning_ablation}
\resizebox{0.75 \textwidth}{!}{
\begin{tabular}{ccccc}
\toprule
\textbf{Method} & \textbf{Sampler} & \textbf{Inference Schedule} & \textbf{Conditioning} & \textbf{FVD$\downarrow$} \\
\midrule
Flow Matching & Euler & Identity & $\sigma^{(i)}$ & $126.633$ \\
Flow Matching & Euler & Identity & $\tilde{\sigma}^{(i)} \approx g_\phi(x^{(i)})$ & $121.145$ \\
\midrule
Flow Matching & Euler & $c$-Function & $\sigma^{(i)}$ & $103.610$ \\
Flow Matching & Euler & $c$-Function & $\tilde{\sigma}^{(i)} \approx g_\phi(x^{(i)})$ & $94.488$ \\
\midrule
Flow Matching & NAG, $\mu=0.2$ & $c$-Function & $\sigma^{(i)}$ & $1045.633$ \\
Flow Matching & NAG, $\mu=0.2$ & $c$-Function & $\tilde{\sigma}^{(i)} \approx g_\phi(x^{(i)})$ & \best{$81.193$} \\
 \bottomrule
\end{tabular}
}
\end{table}

\subsection{Closed Loop Sampling with \EqF Produces the Lowest Final Noise Level}
\label{sec_nonzero_comparison}

In Appendix~\ref{sec_nonzero}, we observe that Flow Matching samples consistently terminate at nonzero noise. Here, we provide additional comparisons between the noise level termination values of samples from closed-loop \EqF sampling, and of samples from Flow Matching and from NU-FM \citep{sun2025noise}. In Figure~\ref{fig:unfinished_denoising_comparison}, the black dashed line represents the scheduled noise level, and the result for each sampler is shown by different colors.

The closed-loop samples with \EqF terminate at lower noise levels (almost an order of magnitude lower than Flow Matching), exhibiting the robustness of closing the loop with a noise-unconditional sampler via its internal estimator. Furthermore, we can see that sampling with NAG lowers the noise level that the denoised sample reaches, as well as accelerates the denoising progress overall (the noise level readout curve falls significantly below the schedule). This analysis provides further intuition for how closed-loop sampling with NAG produces higher quality samples by converging better to $\eta(\widehat{\seq{\sigma}}) f_{\EqF}(\seq{x}) = \seq{0}$, as demonstrated experimentally in Section~\ref{5:experiments}.

\begin{figure}[tb]
    \centering

    \includegraphics[width=\linewidth]{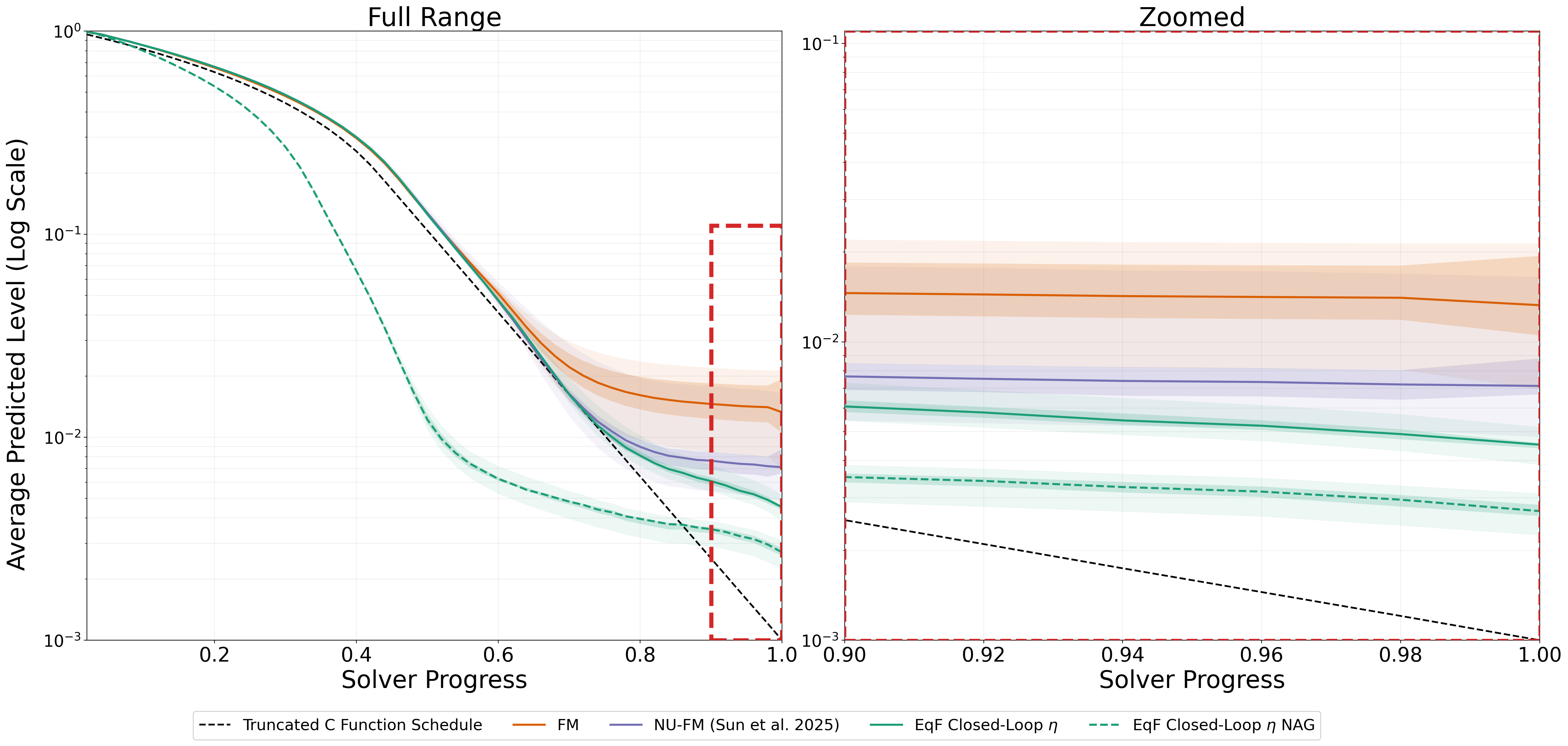}
    \caption{Comparison of the noise level that sampling terminates at across FM, NU-FM, \EqF, and \EqF with NAG. Ideally, sampling ends at a noise level of $0$. Closed-loop sampling with \EqF terminates at a lower noise level, and NAG accelerates the schedule and ends at an even lower noise level. The right shows a zoomed view of the red box from the left plot.}
    \label{fig:unfinished_denoising_comparison}
\end{figure}

\section{Additional Results}
\label{apd:additional_results}

In this section, we describe an additional data-adaptive early stopping algorithm and results on Minecraft. Further, we present full tables for the budget-adaptive results in the main text, and an additional inpainting task on the Re10K dataset.

\subsection{Adaptive Early Stopping Algorithm}
\label{apd:adaptive_early_stopping}
The noise level estimation capabilities of \EqF provide a natural convergence criterion for a data-adaptive early stopping algorithm, presented in Algorithm~\ref{alg:adaptive_early_stopping}. Once the estimated noise level falls below a threshold, $\widehat{\sigma}< \sigma_\text{thresh}$, the sampler enters a short refinement period with a decaying step size before completing denoising for those indices. This refinement period implements the budget-adaptive algorithm presented in Section~\ref{4.5:budget_adaptive_algorithm} for a few additional `cleanup' steps, removing the final amount of noise. Noise-driven adaptive early stopping contrasts with how prior work defines stopping criteria for noise-unconditional inference based on the gradient magnitude falling below a threshold \citep{wang2025equilibrium}. Previously, it was unclear how this threshold should be chosen with relation to the convergence of the data; the magnitude of the update itself is relatively arbitrary, and would have to be tuned for each dataset.

On the Minecraft dataset, we demonstrate that the adaptive early stopping algorithm effectively reduces the number of steps taken under different values of the stopping criteria $\sigma_\text{thresh}$, and in some cases also improves the performance over constant-budget samples. The results are presented in Figure~\ref{fig:minecraft_adaptive_early_stopping} and Table~\ref{tab:minecraft_adaptive_early_stopping}. These results highlight the flexibility of \EqF's inference framework: compute requirements can be derived from the sample itself rather than from a fixed hyperparameter of the sampler. We hypothesize that it can perform better than standard \EqF algorithm in some cases due to the ability to ``finish'' the denoising based on the noise estimate and the number of steps left, whereas the standard \EqF can leave too much noise in the sample, especially when taking a lower number of steps.

The inference setting is described in Appendix~\ref{apd:minecraft_dataset_details}; it is different than the other experiments, so the FVD numbers are not directly comparable. We use a number of cleanup steps $C_s=8$, and the value of $\sigma_\text{thresh}$ ranges from $0$ (no early stopping) to $0.7$. The detection of $\sigma_\text{thresh}$ triggers based on the highest noise level of a frame within the next chunk of frames to be emitted, and the number of cleanup steps is run for that chunk of frames; multiple chunks of frames can be in `cleanup mode' simultaneously.

\begin{figure}[t]
    \centering
    \includegraphics[width=0.5\linewidth]{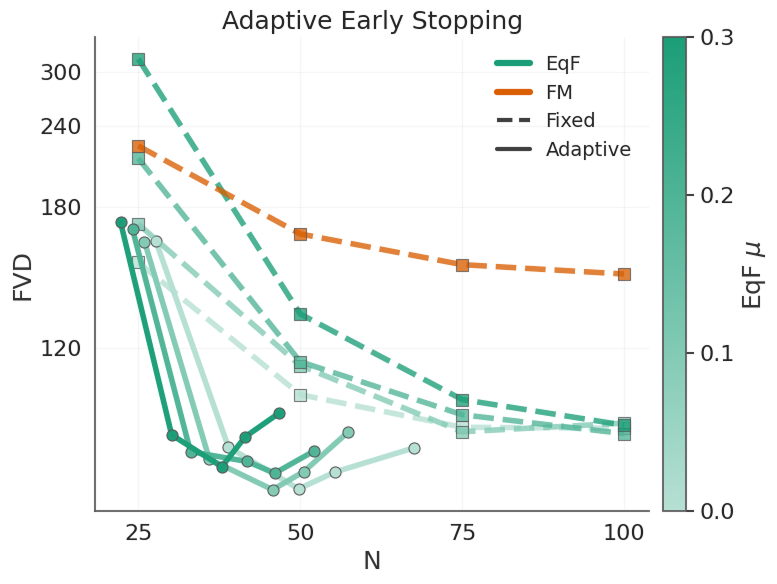}
    \caption{\textbf{Early Stopping on Minecraft with \EqF.} The early stopping algorithm using noise level readouts optimizes the compute/performance tradeoff. Each value of $\mu$ is evaluated with stopping thresholds $\sigma_\text{thresh}=0.1,0.2,0.3,0.5,0.7$}
    \label{fig:minecraft_adaptive_early_stopping}
\end{figure}

\begin{algorithm}[t]
\caption{\EqF Adaptive Early Stopping Inference}
\label{alg:adaptive_early_stopping}

\begin{lstlisting}[style=pytorch]
# f_eqf(x): trained equilibrium v field
# h_omega(z): noise level readout estimator
# sigma_thresh: stopping threshold
# C_s: number of cleanup steps
# eta(sigma): sampling warp
sigma_hat = ones(num_frames)
x = randn(sample_shape)

# adaptive early stopping
while max(sigma_hat) > sigma_thresh:
    v_hat, activations = f_eqf(x)
    sigma_hat = h_omega(activations)
    x = x - eta(sigma_hat) * v_hat

# cleanup refinement
for j in range(C_s):
    v_hat, activations = f_eqf(x)
    sigma_hat = h_omega(activations)
    x = x - sigma_hat / (C_s - j) * v_hat

return x
\end{lstlisting}
\end{algorithm}

\begin{table}[p]
\centering
\caption{\textbf{Adaptive early stopping experiments on Minecraft with \EqF.} For adaptive runs, the stopping noise level induces an empirical number of function evaluations $N$. Higher setting of $\mu$ for NAG makes the adaptive stopping condition trigger sooner. All runs use $c$-function as the $\eta$ warp. These are the values plotted in Figure~\ref{fig:minecraft_adaptive_early_stopping}.}
\label{tab:minecraft_adaptive_early_stopping}
\resizebox{0.85\textwidth}{!}{
\begin{tabular}{lllcc}
\toprule
\textbf{Inference Mode} & \textbf{Sampler} & \textbf{Stopping Noise Level $\sigma_{\text{thresh}}$} & \textbf{$N$} & \textbf{FVD$\downarrow$} \\
\midrule
Adaptive Early Stopping & \EqF GD & 0.7 & $27.7$ & $161.67$ \\
 & & 0.5 & $38.9$ & $94.87$ \\
 & & 0.3 & $49.8$ & \best{$86.70$} \\
 & & 0.2 & $55.4$ & $89.88$ \\
 & & 0.1 & $67.6$ & $94.75$ \\
\midrule
Adaptive Early Stopping & \EqF NAG $\mu$=0.1 & 0.7 & $26.0$ & $161.59$ \\
 & & 0.5 & $36.0$ & $92.41$ \\
 & & 0.3 & $45.7$ & \best{$86.55$} \\
 & & 0.2 & $50.5$ & $89.94$ \\
 & & 0.1 & $57.4$ & $98.12$ \\
\midrule
Adaptive Early Stopping & \EqF NAG $\mu$=0.2 & 0.7 & $24.1$ & $167.80$ \\
 & & 0.5 & $33.1$ & $93.92$ \\
 & & 0.3 & $41.8$ & $91.95$ \\
 & & 0.2 & $46.1$ & \best{$89.65$} \\
 & & 0.1 & $52.1$ & $94.04$ \\
\midrule
Adaptive Early Stopping & \EqF NAG $\mu$=0.3 & 0.7 & $22.3$ & $171.65$ \\
 & & 0.5 & $30.1$ & $97.53$ \\
 & & 0.3 & $37.9$ & \best{$90.90$} \\
 & & 0.2 & $41.5$ & $97.11$ \\
 & & 0.1 & $46.7$ & $102.44$ \\
\midrule
Fixed & \EqF GD & 0 & 25 & $151.78$ \\
 & & 0 & 50 & $107.05$ \\
 & & 0 & 75 & $99.22$ \\
 & & 0 & 100 & $98.99$ \\
\midrule
Fixed & \EqF NAG $\mu$=0.1 & 0 & 25 & $170.55$ \\
 & & 0 & 50 & $114.91$ \\
 & & 0 & 75 & $98.20$ \\
 & & 0 & 100 & $100.16$ \\
\midrule
Fixed & \EqF NAG $\mu$=0.2 & 0 & 25 & $212.75$ \\
 & & 0 & 50 & $116.03$ \\
 & & 0 & 75 & $102.12$ \\
 & & 0 & 100 & $97.79$ \\
\midrule
Fixed & \EqF NAG $\mu$=0.3 & 0 & 25 & $318.11$ \\
 & & 0 & 50 & $131.33$ \\
 & & 0 & 75 & $105.81$ \\
 & & 0 & 100 & $99.70$ \\
\midrule
Fixed & Flow Matching & 0 & 25 & $222.67$ \\
 & & 0 & 50 & $165.41$ \\
 & & 0 & 75 & $150.82$ \\
 & & 0 & 100 & $146.77$ \\
\bottomrule
\end{tabular}
}
\end{table}

\subsection{Budget-Adaptive Full Results}

The full results for the budget-adaptive experiments from Section~\ref{5:experiments} on the Minecraft, Re10K, and Droid datasets are detailed in Table~\ref{tab:apdx_full_budget_adaptive}.

\subsection{Inpainting Results on RealEstate10K}
\label{apd:inpainting}
We present an additional set of results on an inpainting task on the RealEstate10K dataset. The model is given frame 1 and the final 4 frames (corresponding to the first and last latent frames). It must denoise the intermediate frames, constituting 49 total frames. The given frames are treated as context in the same context-conditioned style, where they have zero noise and the remaining frames do have noise. The task is not autoregressive, and the model has all the information about the end of the denoising window, making it much easier for the denoising model. We find that on this easier task, the inpainting is essentially perfect, and the models have comparable metric scores (PSNR, FVD, VBench) over 256 generated videos per seed with 5 seeds. This is in contrast to the more difficult open-ended and autoregressive task presented in the main text. The results are presented in Table~\ref{tab:re10k_inpaint}.

\begin{figure}[p!]
\centering

\begin{minipage}{\textwidth}
\centering
\captionof{table}{
Full results table for FVD across different numbers of sampling steps, using the budget-adaptive sampling algorithm for \EqF on Minecraft, Re10K, and Droid. \EqF Adaptive NAG and \EqF NAG are run with $\mu=0.1$. FM is run with Euler sampling. Runs are across 3 seeds.
}
\label{tab:apdx_full_budget_adaptive}

\scriptsize
\setlength{\tabcolsep}{5pt}
\renewcommand{\arraystretch}{1.03}

\textbf{Minecraft: FVD $\downarrow$}

\vspace{0.15em}
\resizebox{\linewidth}{!}{%
\begin{tabular}{l ccccc}
\toprule
Sampling steps & 10 & 20 & 30 & 40 & 50 \\
\midrule
\EqF Adaptive NAG
& $\mathbf{271.22 \pm 2.19}$
& $\mathbf{126.29 \pm 4.23}$
& $\mathbf{103.16 \pm 6.33}$
& $\mathbf{99.59 \pm 1.92}$
& $\mathbf{91.45 \pm 0.92}$ \\
\EqF NAG
& $952.46 \pm 11.91$
& $246.89 \pm 6.65$
& $144.55 \pm 4.64$
& $109.50 \pm 5.19$
& $92.08 \pm 0.30$ \\
FM
& $570.30 \pm 3.72$
& $240.81 \pm 11.47$
& $170.53 \pm 6.55$
& $145.77 \pm 5.74$
& $135.37 \pm 1.74$ \\
\bottomrule
\end{tabular}%
}

\vspace{0.55em}

\textbf{Minecraft: VBench $\uparrow$}

\vspace{0.15em}
\resizebox{\linewidth}{!}{%
\begin{tabular}{l ccccc}
\toprule
Sampling steps & 10 & 20 & 30 & 40 & 50 \\
\midrule
\EqF Adaptive NAG
& $\mathbf{0.770 \pm 0.001}$
& $0.772 \pm 0.002$
& $0.772 \pm 0.002$
& $0.772 \pm 0.001$
& $0.771 \pm 0.000$ \\
\EqF NAG
& $0.750 \pm 0.001$
& $\mathbf{0.773 \pm 0.001}$
& $\mathbf{0.774 \pm 0.001}$
& $\mathbf{0.774 \pm 0.001}$
& $\mathbf{0.774 \pm 0.000}$ \\
FM
& $0.755 \pm 0.001$
& $0.766 \pm 0.000$
& $0.767 \pm 0.001$
& $0.768 \pm 0.001$
& $0.769 \pm 0.000$ \\
\bottomrule
\end{tabular}%
}

\vspace{1.55em}

\textbf{Re10K: FVD $\downarrow$}

\vspace{0.15em}
\resizebox{\linewidth}{!}{%
\begin{tabular}{l ccccc}
\toprule
Sampling steps & 10 & 20 & 30 & 40 & 50 \\
\midrule
\EqF-FT Adaptive NAG
& $\mathbf{165.15 \pm 0.53}$
& $\mathbf{149.58 \pm 3.54}$
& $\mathbf{134.20 \pm 2.20}$
& $\mathbf{130.19 \pm 1.23}$
& $\mathbf{127.69 \pm 1.57}$ \\
\EqF-FT NAG
& $244.41 \pm 2.58$
& $184.11 \pm 1.19$
& $172.56 \pm 1.54$
& $168.67 \pm 2.62$
& $166.73 \pm 2.98$ \\
FM-FT
& $169.13 \pm 4.47$
& $161.09 \pm 3.16$
& $155.34 \pm 3.96$
& $148.92 \pm 5.56$
& $145.42 \pm 5.93$ \\
\bottomrule
\end{tabular}%
}

\vspace{0.55em}

\textbf{Re10K: VBench $\uparrow$}

\vspace{0.15em}
\resizebox{\linewidth}{!}{%
\begin{tabular}{l ccccc}
\toprule
Sampling steps & 10 & 20 & 30 & 40 & 50 \\
\midrule
\EqF-FT Adaptive NAG
& $\mathbf{0.757 \pm 0.000}$
& $\mathbf{0.759 \pm 0.000}$
& $\mathbf{0.760 \pm 0.000}$
& $\mathbf{0.761 \pm 0.001}$
& $\mathbf{0.761 \pm 0.000}$ \\
\EqF-FT NAG
& $0.748 \pm 0.000$
& $0.757 \pm 0.000$
& $0.758 \pm 0.001$
& $0.758 \pm 0.001$
& $0.759 \pm 0.000$ \\
FM-FT
& $\mathbf{0.757 \pm 0.000}$
& $\mathbf{0.759 \pm 0.000}$
& $\mathbf{0.760 \pm 0.000}$
& $0.760 \pm 0.000$
& $\mathbf{0.761 \pm 0.000}$ \\
\bottomrule
\end{tabular}%
}

\vspace{1.55em}

\textbf{Droid: FVD $\downarrow$}

\vspace{0.15em}
\resizebox{\linewidth}{!}{%
\begin{tabular}{l ccccc}
\toprule
Sampling steps & 10 & 20 & 30 & 40 & 50 \\
\midrule
\EqF-FT Adaptive NAG
& $\mathbf{150.89 \pm 6.32}$
& $\mathbf{119.40 \pm 3.20}$
& $\mathbf{112.82 \pm 3.70}$
& $\mathbf{111.36 \pm 4.74}$
& $\mathbf{107.38 \pm 0.82}$ \\
\EqF-FT NAG
& $223.89 \pm 4.50$
& $145.92 \pm 4.96$
& $125.47 \pm 4.26$
& $116.96 \pm 2.72$
& $113.58 \pm 3.51$ \\
FM-FT
& $207.59 \pm 6.70$
& $133.86 \pm 4.66$
& $117.99 \pm 6.09$
& $111.56 \pm 5.25$
& $108.69 \pm 3.71$ \\
\bottomrule
\end{tabular}%
}

\vspace{0.55em}

\textbf{Droid: VBench $\uparrow$}

\vspace{0.15em}
\resizebox{\linewidth}{!}{%
\begin{tabular}{l ccccc}
\toprule
Sampling steps & 10 & 20 & 30 & 40 & 50 \\
\midrule
\EqF-FT Adaptive NAG
& $\mathbf{0.805 \pm 0.001}$
& $\mathbf{0.810 \pm 0.001}$
& $\mathbf{0.810 \pm 0.003}$
& $\mathbf{0.811 \pm 0.003}$
& $\mathbf{0.812 \pm 0.002}$ \\
\EqF-FT NAG
& $0.791 \pm 0.002$
& $0.805 \pm 0.000$
& $0.807 \pm 0.001$
& $0.810 \pm 0.002$
& $0.810 \pm 0.002$ \\
FM-FT
& $0.789 \pm 0.003$
& $0.805 \pm 0.002$
& $0.808 \pm 0.000$
& $0.809 \pm 0.001$
& $0.809 \pm 0.000$ \\
\bottomrule
\end{tabular}%
}
\end{minipage}

\vspace{1.8em}

\begin{minipage}{\textwidth}
\centering
\captionof{table}{Inpainting Results with \EqF on Re10K}
\label{tab:re10k_inpaint}

\scriptsize
\resizebox{0.75\linewidth}{!}{%
\begin{tabular}{llccc}
\toprule
\textbf{Method}
& \textbf{Sampler}
& \textbf{PSNR$\uparrow$}
& \textbf{FVD$\downarrow$}
& \textbf{VBench$\uparrow$} \\
\midrule
\EqF-FT & NAG, $\mu=0.05$
& \best{$20.34 \pm 0.06$}
& $37.28 \pm 1.14$
& \best{$0.7738 \pm 0.0002$} \\

\EqF-FT & GD
& $20.32 \pm 0.06$
& $37.82 \pm 1.05$
& $0.7737 \pm 0.0003$ \\

FM-FT & Euler
& $20.30 \pm 0.04$
& \best{$35.29 \pm 0.65$}
& $0.7736 \pm 0.0004$ \\
\bottomrule
\end{tabular}%
}
\end{minipage}

\vspace{1.8em}

\begin{minipage}{\textwidth}
\centering
\captionof{table}{
Equilibrium Finetuning on Minecraft.
Autoregressive Minecraft generation results across Equilibrium
Finetuning settings compared to a Flow Matching model with continued
training. All share an equivalent budget of 100k extra steps.
All are run with 250 sampling steps.
}
\label{tab:mc_eq_finetune}

\scriptsize
\resizebox{0.5\linewidth}{!}{%
\begin{tabular}{lllcc}
\toprule
\textbf{Method}
& \textbf{$\eta$ Warp}
& \textbf{Sampler}
& \textbf{$\mu$}
& \textbf{FVD$\downarrow$} \\
\midrule
\EqF & $c$ Function & NAG   & 0.2 & \best{$74.54$} \\
EqM  & $c$ Function & NAG   & 0.1 & $75.98$ \\
FM   & $c$ Function & Euler & --  & $100.48$ \\
\bottomrule
\end{tabular}%
}
\end{minipage}

\end{figure}

\section{Experiment and Dataset Details}
\label{apd:expt_details}

In this section, we provide details on the experimental testbeds for the three video datasets Minecraft, RealEstate10K, and Droid. All experiment hyperparameters are provided in Table~\ref{tab:training_details}. We also provide additional experiment details for the ImageNet experiments. All model variants are evaluated on comparable training budgets; FLOP and runtime analysis are included at the end of this section, Subsection~\ref{D.6:compute_analysis}. For more information on the geometry of autoregressive generation schedules (including stride length described below), see Appendix~\ref{apd:inference_sched_math}. 

\begin{table*}[!p]
\centering
\small
\caption{Training settings for Minecraft, Re10K, and Droid across models. The Rectified Flow schedule is from \cite{liu2022flow} and the Cosine schedule is from \cite{nichol2021improved}. Minecraft is trained from scratch. For Re10K, we finetune Wan T2V 1.3B \citep{wan2025wanopenadvancedlargescale}. For Droid, we finetune Wan TI2V 5B.}
\label{tab:training_details}
\renewcommand{\arraystretch}{1.2}
\resizebox{\textwidth}{!}{
\begin{tabular}{@{}lccc@{}}
\toprule
\textbf{Config}
& \textbf{Minecraft}
& \textbf{Re10K}
& \textbf{Droid} \\
\midrule

\textbf{Training} \\
\quad Effective Batch Size
& 8
& 8
& 16 \\
\quad Learning Rate
& 2e-4 (Linear Warmup)
& 1e-4 (Linear Warmup)
& 1e-5 (Linear Warmup) \\
\quad Warmup Steps
& 2{,}000
& 2{,}000
& 2{,}000 \\
\quad Weight Decay
& 1e-3
& 1e-3
& 1e-3 \\
\quad Training Steps
& 580k
& 100k
& 60k \\
\quad GPU Usage
& 1$\times$H200
& 2$\times$H200
& 4$\times$H200 \\
\quad Optimizer
& Adam, betas=(0.9, 0.99)
& Adam, betas=(0.9, 0.99)
& Adam, betas=(0.9, 0.99) \\
\quad Training Strategy
& Distributed Data Parallel
& Distributed Data Parallel
& Distributed Data Parallel \\
\quad Precision
& Bfloat16
& Bfloat16
& Bfloat16 \\
\quad EMA for Eval
& 0.9999
& 0.9999
& 0.9996 \\
\quad Noise/Context Strategy
& Random Independent
& Random Independent
& Random Length Context \\

\midrule
\textbf{Equilibrium Forcing (\EqF)} \\
\quad Objective Target
& $\seq v$
& $\seq v$
& $\seq v$ \\
\quad Noise Schedule
& Rectified Flow
& Rectified Flow
& Rectified Flow \\

\midrule
\textbf{Flow Matching} \citep{liu2022flow} \\
\quad Objective Target
& $\seq v$
& $\seq v$
& $\seq v$ \\
\quad Noise Schedule
& Rectified Flow
& Rectified Flow
& Rectified Flow \\

\midrule
\textbf{Equilibrium Matching} \citep{wang2025equilibrium} \\
\quad Objective Target
& $c(\seq \sigma) \odot \seq v$
& --
& -- \\
\quad Noise Schedule
& Rectified Flow
& --
& -- \\
\quad Equilibrium Multiplier $\lambda$
& 4
& --
& -- \\

\midrule
\textbf{Diffusion} \citep{sohldickstein2015diffusion} \\
\quad Objective Target
& $\seq v$
& --
& -- \\
\quad Noise Schedule
& Cosine
& --
& -- \\
\quad Loss Weighting
& Sigmoid
& --
& -- \\

\midrule
\textbf{VAE Details} \\
\quad Input Dimension
& $50 \times 256 \times 256 \times 3$
& $49 \times 256 \times 256 \times 3$
& $49 \times 384 \times 640 \times 3$ \\
\quad Latent Dimension
& $50 \times 32 \times 32 \times 4$
& $13 \times 32 \times 32 \times 16$
& $13 \times 24 \times 40 \times 48$ \\
\quad Temporal Downsampling
& 1
& 4
& 4 \\
\quad Spatial Downsampling
& 8
& 8
& 16 \\
\quad VAE Type
& Framewise
& Causal
& Causal \\
\quad VAE Parameters
& 83 M
& 126 M
& 705 M \\

\midrule
\textbf{Model} \\
\quad Total Parameters
& 95.3 M
& 1.3 B
& 5 B \\
\quad \# Attention Heads
& 12
& 12
& 24 \\
\quad Head Dimension
& 64
& 128
& 128 \\
\quad \# Layers
& 10
& 30
& 30 \\
\quad Time Embed Dimension
& 256
& 256
& 256 \\
\quad Condition Embed Type
& Action
& Camera Pose
& UMT5 Language Prompt \\
\quad Condition Embed Dimension
& 768
& --
& 4096 (UMT5) / 3072 (Model) \\
\bottomrule
\end{tabular}
}
\end{table*}

\subsection{Evaluation Metrics}

\paragraph{Frechet Video Distance (FVD).} FVD is the primary metric that we report to measure the quality of autoregressively generated video, with low scores being better \citep{unterthiner2018towards}. FVD first extracts features from generated and true samples through an image feature extractor I3D \citep{carreira2017quo}, and then calculates the Frechet distance in the latent space between the sets of samples \citep{unterthiner2018towards}. Autoregressive video generation does not have a ground truth to compare to once the video is long enough, so FVD is particularly helpful for assessing long horizon consistency and stability over generation.

\paragraph{VBench.} We report VBench to evaluate the semantic quality of generated video on axes such as frame quality, temporal consistency, and nontrivial motion \citep{huang2024vbench}. VBench provides a granular, human-aligned metric to assess the tradeoff of consistency and video dynamics, allowing us to quantify whether \EqF improves visual quality and long range consistency while still capturing the motion of the video. We utilize the same mix of VBench sub-metrics as  \cite{song2025historyguidedvideodiffusion}. The overall metric we report is computed as the weighted average of the normalized VBench submetrics: aesthetic quality, imaging quality, subject consistency, background consistency, temporal flickering, motion smoothness, and dynamic degree, where dynamic degree has weight 0.5 and all other submetrics have weight 1.

\paragraph{Frechet Inception Distance (FID).} FID \citep{heusel2017gans} is a distributional metric similar to FVD, widely adopted to measure generation quality for images. We use this metric for the additional ImageNet experiments. 

\subsection{Minecraft}
\label{apd:mc_expt_details_settings}

\paragraph{Dataset Details.} 
\label{apd:minecraft_dataset_details}
The Minecraft Dataset \citep{yan2023temporally} is a collection of 200k videos collected from the Minecraft video game, each with a length of 300 frames. We employ a similar setup to \cite{song2025historyguidedvideodiffusion}, where each frame is upsampled to 256 $\times$ 256 pixels. Every video is coupled with an action sequence, making this an action-conditioned generation task. Each frame's action is one of three choices: forward, turn left, or turn right. All videos were collected in the same biome, so the data is relatively homogeneous with respect to the scene, compared to all possible variants of video from the Minecraft game. The egocentric controller rotates along with the turning actions, causing parts of the scene to come in and out of view, necessitating a strong generative model to adhere to the changes and changing context.

Following standard practice, all denoising models are trained in the VAE latent space \citep{rombach2022highresolutionimagesynthesislatent}. We use the pretrained VAE on the Minecraft dataset from \cite{song2025historyguidedvideodiffusion}, which is a framewise VAE without temporal downsampling. 

\paragraph{Standard Inference Setting.} The standard inference setting is a 300 frame rollout. We use 25 ground truth frames to provide context for the generation of 275 future frames. The context window and active sampling horizon are both 25 frames long, forming a 50-frame sliding window. We report FVD and VBench scores over 256 generated videos per seed with 5 seeds for runs in our primary experiments (Section~\ref{5:experiments}). We use 250 sampling steps and an emission stride of 1 frame at a time. The inference-time variant experiments in Section~\ref{5.3:inference_minecraft} use 1 seed.

\paragraph{Budget-Adaptive Inference Setting.}
The adaptive inference setting uses a 300 frame rollout with a sliding window length of 50, with 25 context frames and 25 active sampling horizon frames. The emission stride is 5, and the number of sampling steps ranges from 10 to 50 with increments of 10. The results are averaged across 3 seeds.

\paragraph{Adaptive Early Stopping Inference Setting.}
The adaptive early stopping inference setting length is also a 300-frame rollout. The sliding window is still 50 frames, but there are 40 context frames and 10 in the active horizon. The maximum possible number of sampling steps is 100, and the emission stride is 10 frames. We report FVD scores over 256 generated videos per seed with 1 seed per run. Due to the different inference setting, the metric scores are not easily comparable with the other inference settings.

\paragraph{Online Replanning Inference Setting.}
The online replanning inference setting uses a 300 frame rollout with a sliding window length of 50, with 25 context frames and 25 active sampling horizon frames. The emission stride is 5 frames at a time and we vary the threshold for replanning $\tau$, the range of which was calibrated through hyperparameter search. We report the FVD between the evaluation tensor and true videos.  We use a maximum of 50 inference steps per inner denoising loop, though the algorithm may take fewer steps if it decides to not renoise or to renoise existing predictions to an intermediary noise level.

\subsection{RealEstate10K}

\paragraph{Dataset Details.} 
\label{apd:re10k_dataset_details}
The Re10K dataset consists of curated real estate property tour video clips introduced by \cite{zhou2018stereo}. The camera moves through 3D space, exposed to the model through camera pose annotations. We employ the dataset as a testbed for finetuning a large open source Flow Matching model Wan 2.1 T2V 1.3B \citep{wan2025wanopenadvancedlargescale} into an \EqF model, dropping the noise conditioning as we finetune. Details on equilibrium finetuning are available in Appendix~\ref{apd:eqfft}. This dataset serves as an ideal finetuning benchmark for Wan because the benchmark demands more realism and heterogeneity than Minecraft over an autoregressive rollout. The dataset does not contain captions, so we use a null caption for every clip as input to the T2V model. The dataset contains 58k videos after filtering out videos shorter than 49 frames; the dataset is downsampled to $256\times256$ resolution. We train on clips by choosing a starting frame in the video, resulting in 5.9 million clips for training.

We directly use the VAE from the Wan 2.1 model, which has temporal padding to handle the first frame, meaning the conversion turns $N$ video frames into $\lfloor (N-1) / 4 \rfloor + 1$ latent frames.

\paragraph{Standard Inference Setting.} The standard inference setting is a 189-frame rollout, using 37 frames as the context to generate 152 frames. Due to the temporal downsampling of the VAE, we clarify the number of latent frames here in addition to video frames. The sliding window is 49 video frames (13 latent frames), and there are 37 video frames (10 latent frames) in the context and 12 video frames (3 latent frames) in the active window at any given time. We report FVD and VBench scores over 256 generated videos per seed with 5 seeds for runs in our primary experiments. We follow the default setting of Wan 2.1 and use 50 sampling steps per generation round \citep{wan2025wanopenadvancedlargescale}. The emission stride is 12 video frames (3 latent frames) at a time.

\paragraph{Budget-Adaptive Inference Setting with Less Context.} The adaptive inference setting is a 189-frame rollout, using 29 context frames (8 latent frames) and 20 frames (5 latent frames) in the active horizon. The sliding window is 49 video frames (13 latent frames) and the emission stride is 4 video frames (1 latent frame) at a time. The number of sampling steps varies from 10 to 50 with increments of 10.

\subsection{Droid Robotics Dataset}
\paragraph{Dataset Details.}
The Droid Robotics dataset is a large scale video dataset of robots executing language-specified manipulation of objects in diverse settings \citep{khazatsky2024droid}. See the original paper for more details on how the video dataset was collected. We use the text prompts from the Large Video Planner paper, which are preprocessed with Gemini Flash to describe the motion of the robot in the video with more detail than the original prompts \citep{chen2025large}. This dataset provides us with a realistic benchmark where spatial consistency and object interactions must be predicted with accurate robot dynamics, in contrast with the other two datasets where the environment is static and the camera is dynamic. The resolution is resampled to $384 \times 640$, and we resample video to be 49 frames long so that the text prompt corresponds to a completed action. We use the set of videos that were already filtered by \cite{chen2025large}, resulting in 140k training clips of length 49 each.

We finetune the large scale open source Flow Matching model Wan 2.2 TI2V 5B into an \EqF model on the Droid dataset, following the same finetuning strategy as for the RealEstate10K dataset (details on equilibrium finetuning are available in Appendix~\ref{apd:eqfft}). We directly utilize the UMT5 encoder for text prompt encoding and the VAE from Wan 2.2. The TI2V model has a native image conditioning pathway, but we condition on context using the unified self-attention window. To train without the image conditioning pathway, we turn off the image cross attention layers during training and inference. We also note that the 5B model is not a mixture-of-experts for different noise levels, unlike the Wan 2.2 14B model. Following \cite{chen2025large}, instead of random independent noise levels per frame, we train with random length clean context, to match the inference setting better. A random context length from 0 to 6 temporal latent tokens is sampled. Two noise levels are sampled, one for the context length and one for the rest of the tokens. With probability 0.5, the context is set to noise level 0 instead of the sampled noise level; if it is set to 0, then there is no loss applied on the context tokens.

\paragraph{Adaptive Inference Setting.}
The inference setting for Droid uses a single generation round with 49 frames in the window. The context is 13 video frames (4 latent frames), and the active window is 36 video frames (9 latent frames). The number of sampling steps varies from 10 to 50 with increments of 10. The emission stride is 36 video frames, since there is just one round. For visualization we use 21 video frames as context (6 latent frames) and an active window of 28 video frames (7 latent frames).

\subsection{ImageNet}
\label{apd:imagenet_details} 
We train on the ImageNet-1K dataset for 80 epochs with the XL$/2$ size with center crop and random horizontal flip \citep{5206848}. The model is trained with the SD VAE \citep{rombach2022highresolutionimagesynthesislatent}, with $v$-prediction, 10\% class label dropout, rectified flow path, total batch size 256, AdamW with learning rate 1e-4 and betas (0.9, 0.999), weight decay of 0, EMA weight of 0.9999, and with tf32 precision. The model is trained to directly predict the unmodulated velocity. The EqM and FM models are trained with identical settings. 

We use an identical inference and evaluation setup to the Equilibrium Matching paper \citep{wang2025equilibrium}. Different from EqM, we apply \EqF's inference-time $\eta$ warp. We report FID on 50k generated images with 250 sampling steps and no classifier free guidance.

\subsection{Denoising Model and Noise Level Predictor Compute Comparison}
\label{D.6:compute_analysis}

In this section, we provide wall clock and GFLOP compute comparisons across models on the Minecraft dataset. Table~\ref{tab:forward_compute} shows that FM and \EqF backbones have identical GFLOP requirements, since \EqF is implemented as the same model as FM with the noise level condition always set to zero. The readout $h_\omega$ requires only $0.30$ GFLOPs, less than $1/10{,}000$ of a denoiser forward pass, while $g_\phi$ requires approximately $0.89\%$ of the backbone compute.

We additionally measure wall clock sampling time on Minecraft in Table~\ref{tab:sampling_runtime}. The parameter count ratio between $h_\omega$ and the denoiser is largest on Minecraft compared with Re10K and Droid, and thus provides the most conservative setting. Inference procedures like NAG do not require additional model forward passes, so \EqF GD and \EqF NAG are identical. The noise level prediction models add only negligible additional runtime requirements. 

\begin{table*}[t]
    \centering
    \begin{minipage}[t]{0.50\textwidth}
        \centering
        \caption{Backbone and noise level predictor computational cost per forward pass on Minecraft.}
        \label{tab:forward_compute}
        \begin{tabular}{@{}lr@{}}
            \toprule
            \textbf{Model} & \textbf{GFLOPs/forward} \\
            \midrule
            FM backbone  & 6895.50 \\
            \EqF backbone & 6895.50 \\
            Noise level predictor $g_\phi$     &   61.39 \\
            Noise level readout $h_\omega$   &    0.30 \\
            \bottomrule
        \end{tabular}
    \end{minipage}
    \hfill
    \begin{minipage}[t]{0.46\textwidth}
        \centering
        \caption{Wall clock time per sampling step on Minecraft, 1xH200 with 50 frames.}
        \label{tab:sampling_runtime}
        \begin{tabular}{@{}lr@{}}
            \toprule
            \textbf{Method} & \textbf{Runtime (ms/step)} \\
            \midrule
            FM Euler              & 34.41 \\
            \EqF GD                & 34.46 \\
            \EqF NAG               & 34.46 \\
            \EqF NAG $+\,h_\omega$ & 34.86 \\
            FM Euler $+\,g_\phi$  & 35.10 \\
            \bottomrule
        \end{tabular}
    \end{minipage}
\end{table*}

\section{Details on Equilibrium Finetuning}
\label{apd:eqfft}

In this section, we discuss details for finetuning pretrained Flow Matching models into models such as \EqF that do not condition on the noise level. To the best of our knowledge, equilibrium models have only been trained from scratch in prior work. This makes equilibrium finetuning a novel approach, also applicable to other frameworks like EqM, and thus it is relevant to further expose implementation details. We report equilibrium finetuning across multiple model scales and on each of the datasets. Details of training noise-level predictors on top of finetuned models are available in Appendix~\ref{apd:noise_from_data}.

\subsection{Minecraft Equilibrium Finetuning}
\label{apd:dropout_eq_mc}
We use the Minecraft dataset as a proof of concept for testing the ability to finetune a noise-conditioned model into an equilibrium model. As the initialization for this task, we use the Flow Matching checkpoint trained for 580k steps that served as the baseline for Minecraft experiments. Our results suggest that the \EqF objective alone is sufficient for finetuning, not requiring engineering tricks, curricula, or EqM-style decomposing the target based on the noise level to incentivize the equilibrium landscape.

\paragraph{Equilibrium Finetuning Proof of Concept.}
We perform equilibrium finetuning by continuing training on a Minecraft Flow Matching checkpoint for 100k more steps. We induce an equilibrium landscape by simply zeroing out the noise conditioning signal $\seq{\sigma} = \seq{0}$ and using the \EqF objective from Equation~\ref{eq:eqf_loss}. We further experiment with equilibrium finetuning by using the EqM objective for 100k steps, decomposing the target as $c(\seq{\sigma})\seq{v}$ using the $c$-function used by the EqM paper and explained in Appendix~\ref{apd:sampler_warp_schedules}. Note the simplicity of the \EqF objective in the Minecraft task: the finetuned model only needs to learn the same denoising function without noise level conditioning, while the EqM objective induces a new loss weighting and requires the model to modulate its output magnitude based on the noise level. For fair comparison, we continue training the Flow Matching model for an additional 100k steps as well. After continued training ends, for \EqF we repeat the process of training a readout predictor $h_\omega$ on top of the finetuned models' activations in order to enable closed loop sampling.

\paragraph{Results on Minecraft.}
We report inference results on the identical experimental setting from Section~\ref{5:experiments} on Minecraft in Table~\ref{tab:mc_eq_finetune}. We find that both NAG (reported with the best setting of $\mu$) and closed loop sampling improve results, consistent with our results training from scratch.

\paragraph{Other Finetuning Strategies.}
While we found that simply dropping out the noise level as a finetuning method to be the most effective, we experimented with other strategies as well. We document these other strategies for the sake of the community.

\begin{enumerate}
    \item The simple dropout method explained above has noise level dropout probability $p=1$. We experiment with dropout probabilities $p<1$, where $p$ decides whether we condition on the noise level as in Flow Matching, or pass in zero. We found that in this case, the model took longer to converge and was further unable to fully ``forget'' the noise conditioning signal. Inference carried out by conditioning the model on $\seq{\sigma}=\seq{0}$, (as is standard in \EqF and EqM implementations), produced divergent samples because the model failed to learn an equilibrium field. When running inference by passing in scheduled noise levels (as in standard Flow Matching inference), sample quality was comparatively better, which is evidence of the failure to forget the conditioning signal. Since standard Flow Matching models rely deeply on the noise conditioning, these results suggest that the loss landscape basin of Flow Matching models is shaped by their noise conditioning, relying on substantial changes at finetuning time to escape the local minimum of the new \EqF objective. We further experimented with curricula for the dropout probability, noticing similar results where the model is unable to forget the nonequilibrium landscape of its original training objective.
    \item We also experimented with annealing an auxiliary distribution over a noise conditioning signal $\seq{\sigma}'$ such that at the first training iteration $\seq{\sigma}'=\seq{\sigma}$ with probability 1 and at the end of training they are fully independent. At the end of training, since the condition and the true noise level are independent, the model may learn to ignore the noise condition. We implement this by resampling the condition signal $\seq{\sigma}'$ from a uniform distribution centered on the true noise level $\seq{\sigma}$ such that the uniform anneals into the standard on the entire interval. We experimented with similar annealing schedules for the Beta distribution to encourage resampling concentration on $\seq{\sigma}=0$. While more effective than partial dropout, the results were inferior to simple full dropout training, not justifying the added complexity of the curriculum.

\end{enumerate}

\subsection{Re10K Equilibrium Finetuning}
\label{apd:dropout_eq_re10k}

In this section, we discuss details of finetuning Wan 2.1 T2V 1.3B into a noise unconditional model using the \EqF objective. The dataset is camera-conditioned and we adopt pose processing from \cite{song2025historyguidedvideodiffusion}, introducing the conditioning signal into the AdaLN stream \citep{peebles2023scalable}.

With insights from Minecraft finetuning in Appendix~\ref{apd:dropout_eq_mc} we proceed to employ simple $p=1$ dropout of the noise conditioning signal to finetune Wan into \EqF on the Re10K dataset; we refer to this as the \EqF-FT model. We also fine tune Flow Matching as a noise-conditional baseline with the same computational budget, denoted FM-FT. 

We elucidate training dynamics by plotting the training and validation loss (validating an EMA model with an EMA weight of $0.9999$) against training iteration in Figure~\ref{fig:re10k_train}. The loss decreases rapidly for the baseline Flow Matching model (FM-FT), while generative quality with standard inference-time noise conditioning lags behind (partially due to the EMA). On the other hand, the \EqF-FT model must forget the noise conditioning signal at the beginning of training, leading to the model incurring a higher training and validation loss at the start of training. The generation quality of \EqF-FT after only 10k steps is substantially worse because we run inference on the model with zero noise conditioning, and the model has not yet learned the equilibrium landscape. However, by 20k steps, the model has learned to effectively drop its reliance on noise conditioning and catches up to the Flow Matching baseline, after which they have similar training dynamics. Training is carried out for 100k steps, over which validation FVD continues to improve. Note that the inference results in Figure~\ref{fig:re10k_train}, run for the purpose of validation during training, are strictly open loop and not adaptive.

\begin{figure}[t]
    \centering
    \includegraphics[width=\linewidth]{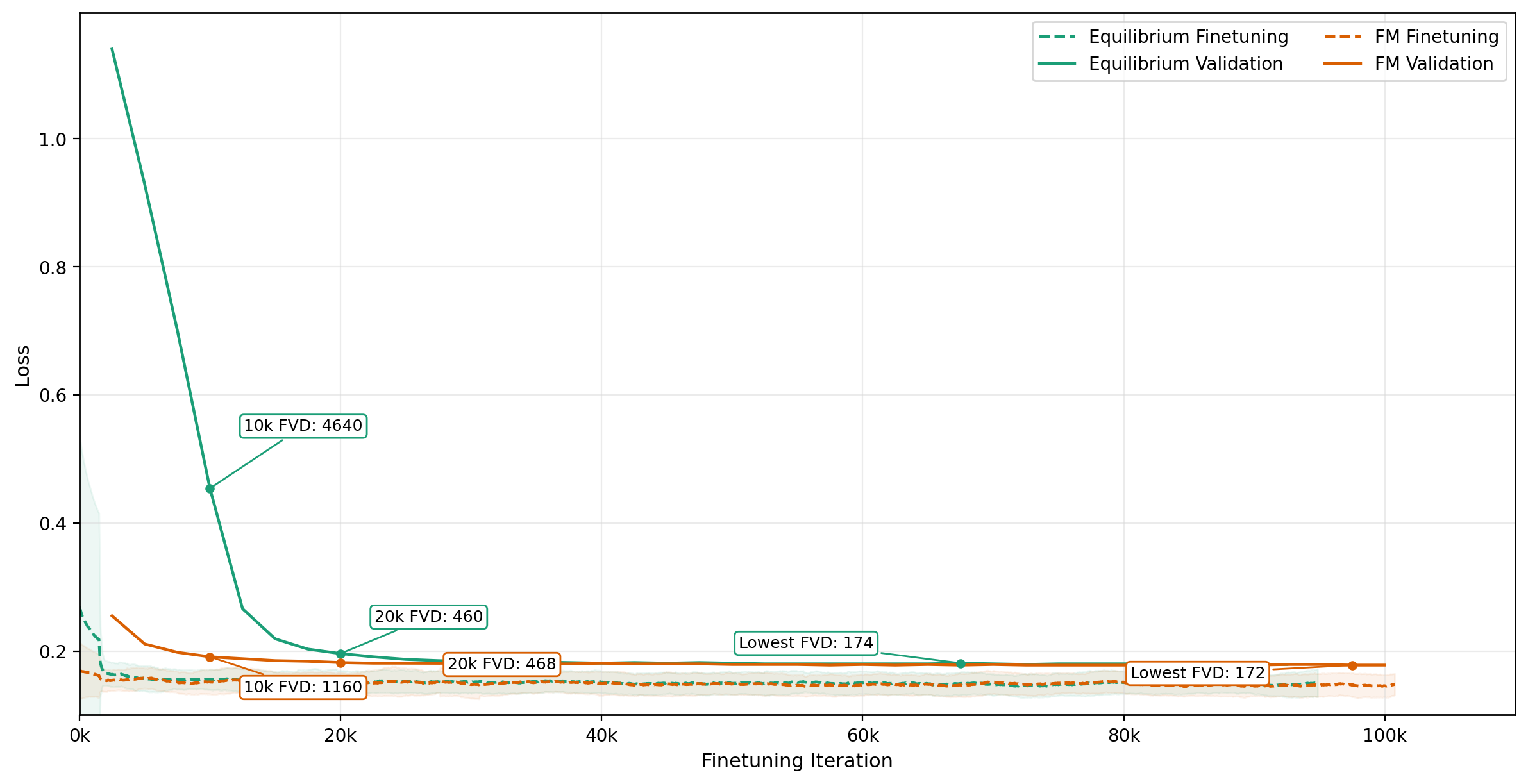}
    \caption{Re10K Validation Loss and FVD across equilibrium finetuning steps.}
    \label{fig:re10k_train}
\end{figure}

\subsection{Droid Equilibrium Finetuning}
\label{apd:dropout_eq_droid}
To finetune Wan 2.2 TI2V 5B into a noise unconditional model using the \EqF objective, we follow the same procedure as with Minecraft and Re10K. Namely, we employ simple $p = 1$ dropout of the noise conditioning signal to create the \EqF-FT model, and finetune the Flow Matching model with the same settings to yield the FM-FT model. Models are trained for 60k steps; it has been observed that denoising quality continues to improve with continued training for denoising models even after the model loss converges, and it is likely that further continuing training would yield qualitative and quantitative improvements in this setting as well.

\section{Details on Estimating Noise Levels from Data or \EqF Activations}
\label{apd:noise_from_data}

In this section, we provide neural network architecture and training details for models that predict noise levels from the data or from \EqF activations on the Minecraft dataset. We refer to auxiliary networks $g_\phi$ as \textit{noise level predictors} when they estimate the noise level directly from a noisy frame, and we refer to \textit{readout predictors} $h_\omega$ when they bootstrap off an \EqF model's activations when evaluated on a noisy frame. Noise level predictors $g_\phi$ are used in our noise level divergence analyses as well as in an effort to close the loop on noise level divergence for standard Flow Matching inference. On the other hand, readout predictors $h_\omega$ are primarily used by \EqF for closed-loop inference via the warp $\eta$.

\subsection{Architecture Design}

\paragraph{Prediction of Noise Level Directly from Noisy Data.}
The noise level predictor $g_\phi$ on the Minecraft dataset is a framewise CNN with 600k parameters trained to predict the noise level of a frame that has been noised under the Flow Matching noising process. We employ framewise training because autoregressive inference algorithms typically use a rolling denoising window in which contiguous chunks of frames appear at different noise levels. To train, we draw a single frame
$x_t \in \mathbb{R}^{C \times H \times W}$ from the data, sample
$\epsilon \sim \mathcal{N}(0,I)$ and $\sigma \sim \mathcal{U}[0,1]$, and construct the noised frame
\begin{align}
    x_t^\sigma
    &= (1 - \sigma) x_t + \sigma \epsilon .
    \label{eq:single_frame_noise}
\end{align}
The network predicts the logit of the noise level, with objective
\begin{align}
    \mathcal{L}_{g}
    &=
    \mathbb{E}_{x_t, \epsilon, \sigma}
    \left[
    \left\|
    g_\phi(x_t^\sigma) - \operatorname{logit}(\sigma)
    \right\|_2^2
    \right].
\end{align}
We apply the loss in logit space because it provides a more balanced supervision signal near $\sigma \approx 0$ and $\sigma \approx 1$, compared with directly applying squared error on $[0,1]$. We convert the logit prediction to a valid noise level by applying a
sigmoid,
\begin{align}
    \hat{\sigma}_\phi(x_t^\sigma)
    &=
    \operatorname{sigmoid}\!\left(g_\phi(x_t^\sigma)\right).
\end{align}
In Section~\ref{3:flow_analysis}, we slightly abuse notation by writing the
estimated true noise level as $\tilde{\sigma} \approx g_\phi(x_t^\sigma)$, where the
sigmoid transformation is implicit. We evaluate the model compared to the readout predictor $h_\omega$ below. 

The model is able to train accurately on the noising process because the noise level is identifiable from the noisy data, as Gaussian shells theoretically concentrate in high dimensions \citep{kadkhodaie2026blind}. In other words, this concentration removes the ambiguity in the data distribution $p(\sigma | x^\sigma)$ that the model was trained on; though that distribution for different values of $\sigma$ technically has support everywhere, the model is trained to predict the mode which matches the distribution in practice. Further, though intermediate samples at a particular noise level during inference do not necessarily match the marginal distribution from the forward noising process on which $g_\phi$ is trained, they overlap sufficiently when sampling with a well-trained flow matching backbone such that the noise level predictor remains accurate during the inference process. Evidence of this is established by the result that closed loop sampling using the $g_\phi$-estimated noise level leads to performance benefits for Flow Matching inference in Section~\ref{3:flow_analysis} and Appendix~\ref{apd:fixed_fm_with_g}. Finally, since $g_\phi$'s training prediction error is low (as explained in the next subsection), we may assume that the error reported in Section~\ref{3:flow_analysis} represents a consequential difference in the noise level condition, rather than the error of $g_\phi$. Training dynamics of the noise level prediction model are elaborated upon in Appendix~\ref{train_dynamics_nlp}.

\paragraph{Prediction of the Noise Level from \EqF Activations.}

The readout predictor $h_\omega$ is a framewise CNN with 260k parameters. Like the noise level predictor, it is trained in a framewise manner to predict the noise levels under a Flow Matching noising process. However, the readout predictor relies on a pretrained \EqF model's ability to implicitly estimate $p(\seq{\sigma} \vert \seq{x}^{\seq{\sigma}})$ to minimize the training objective (Section~\ref{4.3:eta_schedule_explanation}). Given a pretrained $f_\EqF$ denoiser, we train the readout predictor by extracting $f_\EqF$'s activations $z_t^\sigma$ on noisy data constructed as in Equation~\ref{eq:single_frame_noise} and bootstrapping from model activations as follows: 
\begin{align}
    \mathcal{L}_{h}
    &=
    \mathbb{E}_{x_t, \epsilon, \sigma}
    \left[
    \left\|
    h_\omega(z_t^\sigma)
    -
    \operatorname{logit}(\sigma)
    \right\|_2^2
    \right],
    \qquad
    \text{where } (\hat{v}_t^\sigma, z_t^\sigma)
    =
    f_{\EqF}(x_t^\sigma).
    \label{eq:readout_noise_level_loss}
\end{align}

The readout prediction is also converted to a valid noise level by applying the sigmoid.

\subsection{Noise Level Predictor and Readout Training Dynamics on Minecraft}

We compare the training dynamics of noise level predictors with readout predictors. The results support the hypothesis that \EqF models must internally estimate the noise level of a sample to minimize the objective, as we find it far easier to train an estimator of the noise level of data by bootstrapping a readout predictor $h_\omega$ from \EqF activations compared to training a noise level predictor $g_\phi$ from the raw data directly.

\label{train_dynamics_nlp}

On the Minecraft dataset, readout predictor $h_\omega$ converges in only 10k steps, whereas the noise level predictor $g_\phi$ is trained for 100k steps due to significantly slower convergence and doesn't reach as low of a loss. The training loss is displayed in Figure~\ref{fig:nle}. We further found that $g_\phi$ requires larger model capacity to adequately solve the prediction task, using 600k parameters, while $h_\omega$ is less than half the size. These results, especially the fast convergence of the noise level estimator, support our hypothesis that the activations of the \EqF model must already contain a latent estimate of the noise level in order to minimize the objective (Section~\ref{4.2:noise_conc}).

\begin{figure}[tb]
    \centering
    \includegraphics[width=0.99\linewidth]{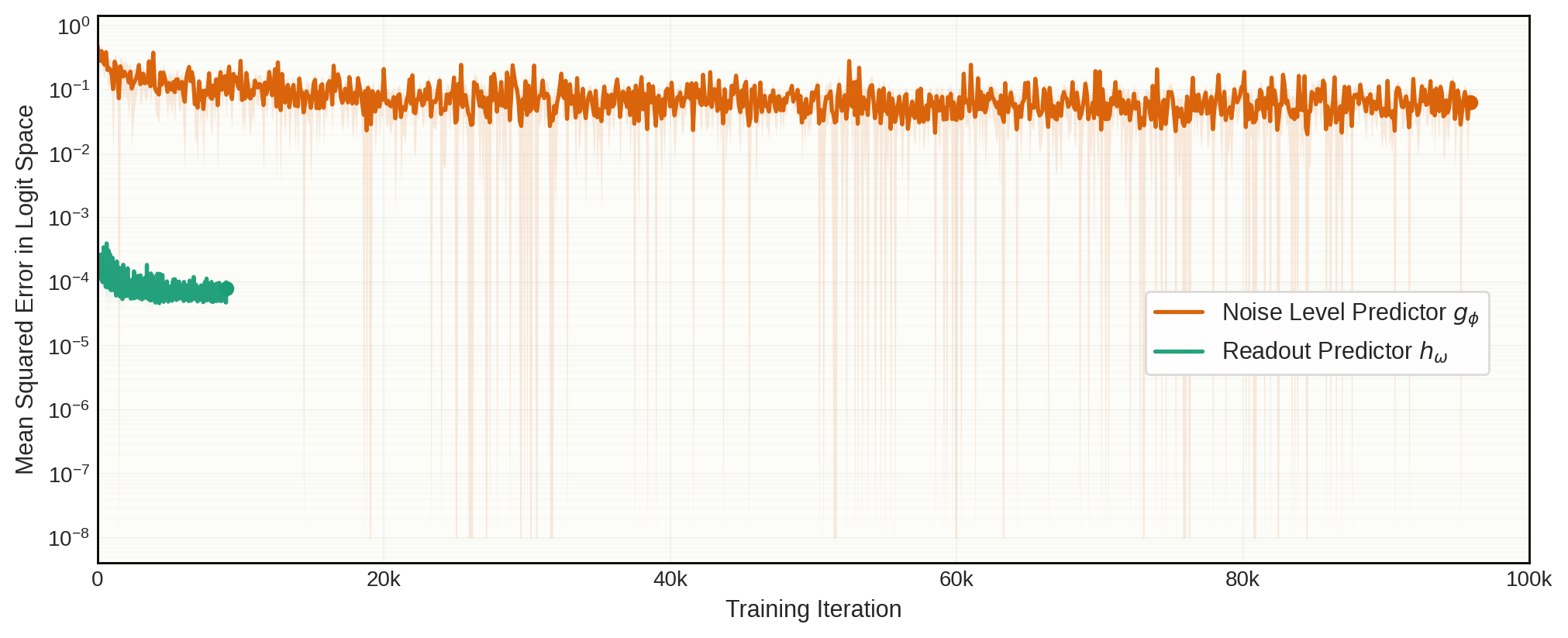}
    \caption{Training loss for noise level estimation. The readout predictor $h_\omega$, trained on \EqF activations, converges substantially faster than the framewise predictor $g_\phi$ trained directly from noisy data.}
    \label{fig:nle}
\end{figure}

The order of magnitude difference in loss is primarily attributable to accuracy near the low noise boundary $\sigma \approx 0$. Since the predictors are trained in logit space, small absolute errors in $\sigma$ near the boundary are amplified by the logit transform. The corresponding error in noise-level space still means they are quite accurate. This effect is visible in the marginalized residuals in Figure~\ref{fig:margred}: both predictors are accurate enough to serve their intended roles in our analysis, with $g_\phi$ providing a reliable external estimate of the noise level directly from noisy data and $h_\omega$ providing a reliable estimate from \EqF activations. The remaining difference is concentrated near $\sigma \approx 0$, where the readout predictor has noticeably smaller residuals and where logit-space errors are most heavily magnified. This improved accuracy at the boundary is especially relevant for our use of $h_\omega$ in closed-loop sampling and adaptive stopping, since those mechanisms depend on accurately detecting proximity to the clean-data manifold. 

\begin{figure}[t]
    \centering
    \begin{minipage}{0.49\linewidth}
        \centering
        \includegraphics[
            width=\linewidth,
            trim=0 0 0 2cm,
            clip
        ]{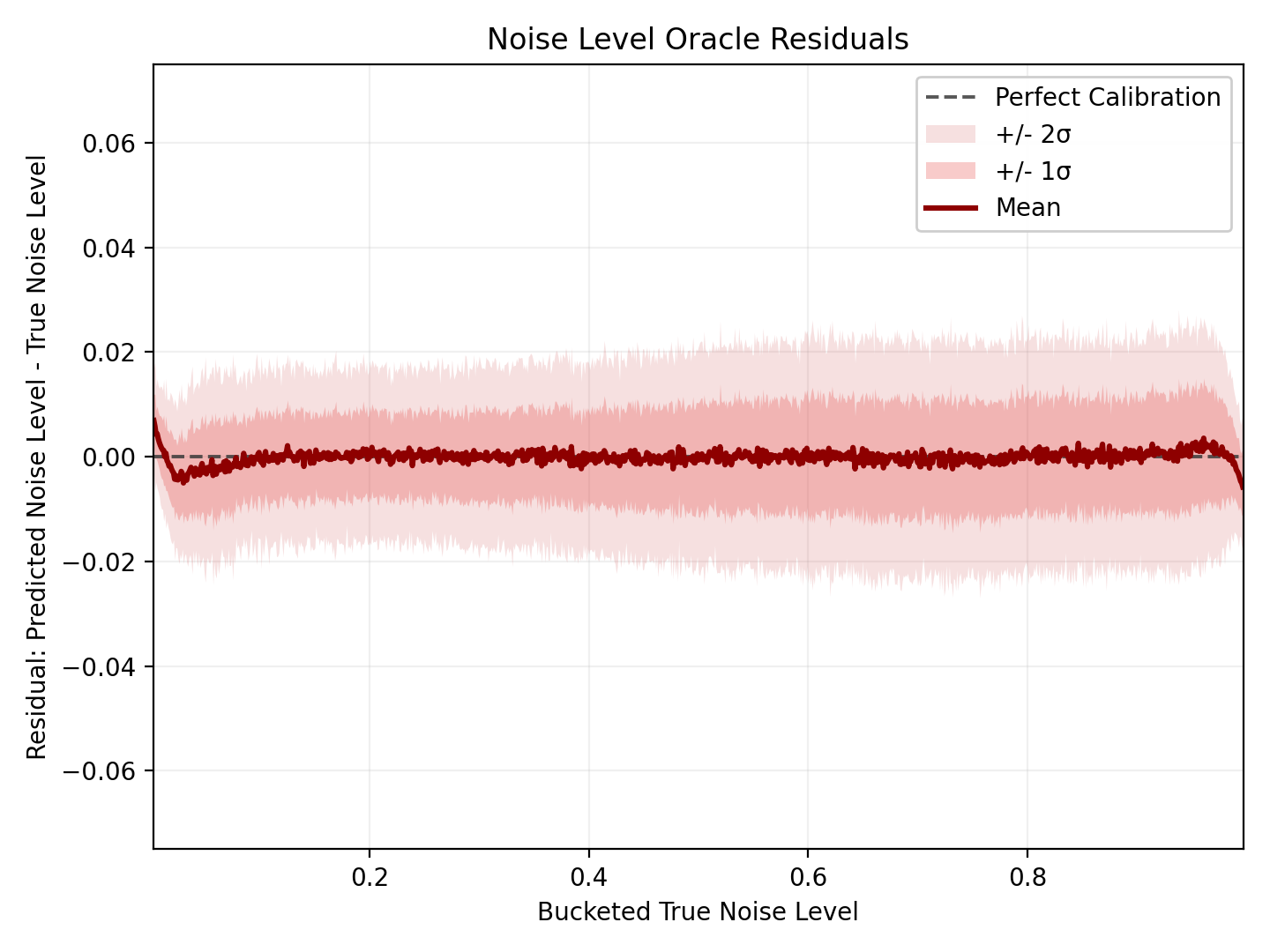}
    \end{minipage}
    \hfill
    \begin{minipage}{0.49\linewidth}
        \centering
        \includegraphics[
            width=\linewidth,
            trim=0 0 0 2cm,
            clip
        ]{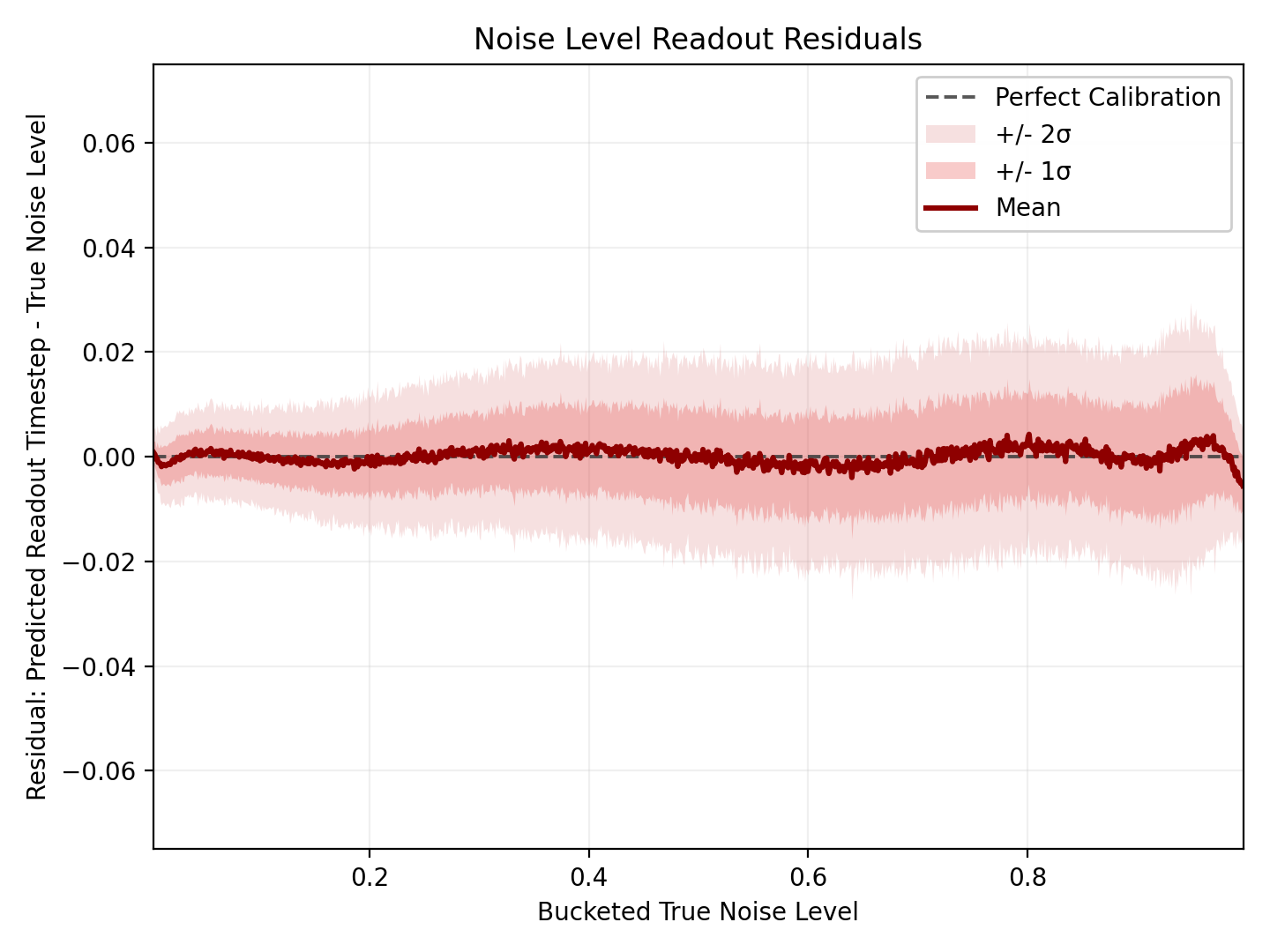}
    \end{minipage}
    \caption{Marginalized residuals of noise level estimates from the framewise noise level predictor $g_\phi$ trained directly on noisy data (left) and the readout predictor $h_\omega$ trained on \EqF activations (right). Both predictors are accurate, but $h_\omega$ is especially accurate near the low-noise boundary.}
    \label{fig:margred}
\end{figure}

\subsection{Training Details of Noise Level Predictor on Re10K and Droid Datasets}
Here, we elaborate on training details for the noise level predictor $h_\omega$ for equilibrium finetuned models. On the 1.3B Re10K model, we train a 460k parameter readout predictor for an additional 5k steps. On the 5B Droid model, we train a 860k parameter readout predictor for an additional 10k steps. Both architectures are the same: the denoising backbone is frozen, we mean pool over the spatial dimensions of the model activations, apply a shared LayerNorm, apply a linear projection down to 256 dimensions, combine the layers using learned softmax mixing weights, and then apply a framewise MLP. The target is the logit of the noise level for each frame, trained with an MSE loss. Similar to the training dynamics on the Minecraft dataset, we find that the model converges quickly on the noise level readout task.

\section{Details on Sampler Warp Schedules}
\label{apd:sampler_warp_schedules}

In this section, we describe the inference-time warp functions used to shape the sampling dynamics of \EqF. The warp in \EqF is used to modulate the magnitude of the learned equilibrium velocity field during sampling without changing the training objective, providing a unified way to compare transport-style ODE reparameterization functions common in the literature with functions that induce attractor-style dynamics that vanish near the data manifold \citep{esser2024scaling}. We first define the relationship between $\rho$, a function that reparameterizes solver time $s$ to noise levels $\sigma$, and a warp $\eta$ for mapping from noise levels to step sizes. The different function definitions are necessary to unify understanding between what is proposed here with other ODE reparameterizations in the literature, as well as to understand normalization factors necessary for making the inference procedure principled. We then define the tradeoffs between warps that are transports and those that induce attractor landscapes. Finally, we give the exact warp definitions used in our experiments. 

\subsection{Sampling Schedules and Warp Functions}

We first define $s \in [0,1]$ as solver time, where $s=0$ corresponds to the beginning of sampling at pure noise, and $s=1$ to the end of sampling at clean data. Note this is the opposite direction of how the noise level $\sigma$ is defined: $\sigma=1$ is full noise, and $\sigma=0$ is clean data. A time-reparameterized noise schedule is a monotonically decreasing function
\begin{align}
\rho : [0,1] \to [0,1],
\qquad
\rho(0)=1,
\qquad
\rho(1)=0,
\end{align}
where $\rho(s)$ denotes the scheduled noise level $\sigma$ at solver time $s$. Evenly spaced solver steps in $s$ can therefore correspond to nonuniform steps in noise level $\rho(s)$, as is common in ODE samplers for Flow Matching \citep{esser2024scaling}.

The corresponding warp function $\eta$ specifies the local update multiplier applied to the denoising field at each noise level. Along a trajectory $\rho(s)$, we define $\eta$ as the positive denoising speed through noise-level space:
\begin{align}
\eta : [0,1] \to \mathbb{R},
\qquad
\eta(\sigma) = \eta(\rho(s))
:=
-\frac{d\rho(s)}{ds}.
\end{align}

For $\rho$ to define a valid transport, it must bring the probability mass from the initial noise level to the final noise level, satisfying 
\begin{align}
\int_0^1 \eta(\rho(s))\,ds
&=
\int_0^1 \left(-\frac{d\rho(s)}{ds}\right)\,ds
\\
&=
\rho(0)-\rho(1)
=
1.
\end{align}

If we instead define $\eta$ directly as a function of $\sigma$, we can recover the induced trajectory over solver time $\rho$ by solving the ODE
\begin{align}
\frac{d\rho(s)}{ds}
=
-\eta(\rho(s)),
\qquad
\rho(0)=1.
\label{eq:eta_induced_rho_trajectory}
\end{align}

Separating variables in Equation~\ref{eq:eta_induced_rho_trajectory} gives
$ds = -d\rho/\eta(\rho)$. Therefore, the solver time required to travel from noise level $\rho=1$ to a terminal noise level $\sigma_{\mathrm{end}}$ is
\begin{align}
T_\eta(\sigma_{\mathrm{end}})
=
\int_{\sigma_{\mathrm{end}}}^{1}
\frac{du}{\eta(u)}.
\end{align}
When specifying a warp directly, we separate its shape from its overall scale by writing
\begin{align}
\eta(\sigma)
=
A\tilde{\eta}(\sigma),
\end{align}
where $\tilde{\eta}$ is an unnormalized warp shape and $A>0$ is a scalar normalization factor. This normalization factor is important to make sure that when we define the shape of a warp $\tilde{\eta}$, that it also defines a valid sampling process. Imposing the unit-time traversal condition $T_\eta(\sigma_{\mathrm{end}})=1$ gives
\begin{align}
1
=
\int_{\sigma_{\mathrm{end}}}^{1}
\frac{du}{A\tilde{\eta}(u)}
=
\frac{1}{A}
\int_{\sigma_{\mathrm{end}}}^{1}
\frac{du}{\tilde{\eta}(u)},
\qquad
\Longrightarrow
\qquad
A
=
\int_{\sigma_{\mathrm{end}}}^{1}
\frac{du}{\tilde{\eta}(u)}.
\end{align}
Thus, $\tilde{\eta}$ determines the relative allocation of computation across noise levels, while $A$ fixes the total traversal time, together defining $\eta$. In the main text, namely Equation~\ref{eq:sampler_update}, we slightly abuse notation and also include a factor inversely proportional to the number of sampling steps $N$ in the $\eta$ function for simplicity of exposition. In this section, it is expressed as the normalized warp without the scaling by the number of steps.

\subsection{Transport and Attractor Warps}
\label{apd:transport_and_attractor_warps}

The behavior of $\eta$ near the data boundary determines whether the induced dynamics are transport-style or attractor-style. Transport-style warps remain nonzero as $\sigma \to 0$, so the sampler reaches the data manifold by following a prescribed trajectory whose endpoint is $\sigma=0$. In this case, the dynamics need not slow down near clean data; successful sampling depends on the solver intersecting the data manifold at the scheduled endpoint.

Attractor-style warps instead satisfy $\eta(\sigma) \to 0$ as $\sigma \to 0$. When using an attractor-style warp with \EqF sampling, the warped field $\eta(\sigma) f_{\EqF}(x)$ vanishes near clean data, turning sampling into a fixed-point search. This attractor interpretation further motivates the use of gradient-based optimization tools such as Nesterov Accelerated Gradient, which are designed for convergent dynamics. We empirically demonstrate NAG works better with attractor warps in Table~\ref{tab:minecraft_inference_expts}. Since attractor warps vanish at the boundary, reaching exactly $\sigma=0$ requires infinite continuous time; in practice, we normalize the trajectory down to a finite terminal noise level $\sigma_{\mathrm{end}}>0$.

Figure~\ref{fig:sampler_warp_conversions} illustrates the three equivalent views of a warp schedule. Subfigure~\ref{fig:sampler_warp_conversions}(a) shows the warp $\eta(\sigma)$ as a noise-dependent update multiplier. Subfigure~\ref{fig:sampler_warp_conversions}(b) shows the induced noise trajectory $\rho(s)$ over solver time. Subfigure~\ref{fig:sampler_warp_conversions}(c) shows the composed update multiplier $\eta(\rho(s))$ actually applied over solver time. In open-loop sampling this composition determines the predetermined step-size sequence, whereas in closed-loop \EqF sampling we evaluate $\eta(\hat{\sigma})$ using the estimated current noise level.

\subsection{Detailed Instantiations of Warp Functions}
\label{apd:detailed_instantiations_of_warp_functions}

In this section we explain the instantiations of the warp functions used in our experiments. The identity warp and the EqM $c$-function are specified directly as noise-dependent warp functions $\eta(\sigma)$. In contrast, SD3 and linear log-SNR are naturally specified as solver-time trajectories $\rho(s)$ and converted into warp functions using $\eta(\rho(s))=-d\rho(s)/ds$.

\paragraph{Transport Warp: Identity.}
The simplest transport warp is the identity warp,
\begin{align}
\eta_{\mathrm{id}}(\sigma)
=
1.
\end{align}
This assigns constant denoising speed at every noise level. Using the induced trajectory conversion in Equation~\ref{eq:eta_induced_rho_trajectory}, we obtain
\begin{align}
\frac{d\rho(s)}{ds}
=
-1,
\qquad
\rho_{\mathrm{id}}(s)
=
1-s.
\end{align}
Since $\eta_{\mathrm{id}}(0)=1$, the update multiplier does not vanish near the data boundary, making this a transport-style warp.

\paragraph{Transport Warp: SD3.}
We also consider the time-reparameterization used in SD3 \citep{esser2024scaling}. SD3's schedule denotes a shift parameter $r > 0$. The SD3 trajectory (where $s=0$ is noise and $s=1$ is data) is 
\begin{align}
\rho_{\mathrm{SD3}}(s)
=
\frac{r(1-s)}
{r-(r-1)s}.
\end{align}
This already satisfies the endpoint constraints
\begin{align}
\rho_{\mathrm{SD3}}(0)
=
1,
\qquad
\rho_{\mathrm{SD3}}(1)
=
0,
\end{align}
so it defines a valid unit-time transport schedule without additional normalization. To express the induced warp, we compute the positive denoising speed through noise-level space:
\begin{align}
\eta_{\mathrm{SD3}}(\rho_{\mathrm{SD3}}(s))
:=
-\frac{d\rho_{\mathrm{SD3}}(s)}{ds}
=
\frac{r}
{\left(r-(r-1)s\right)^2}.
\end{align}
This expression is currently written as a function of solver time $s$. To rewrite it as a function of the current noise level $\sigma$, we invert the schedule:
\begin{align}
\sigma
=
\frac{r(1-s)}
{r-(r-1)s}
\qquad
\Longrightarrow
\qquad
s
=
1
-
\frac{\sigma}
{r-(r-1)\sigma}.
\end{align}
Substituting this inverse into the denoising speed gives
\begin{align}
\eta_{\mathrm{SD3}}(\sigma)
=
\frac{\left(r-(r-1)\sigma\right)^2}{r}.
\end{align}
For $r>1$, the boundary values are
\begin{align}
\eta_{\mathrm{SD3}}(1)
=
\frac{1}{r},
\qquad
\eta_{\mathrm{SD3}}(0)
=
r.
\end{align}
Thus, SD3 slows down near pure noise and increases the update multiplier near the data manifold. Since $\eta_{\mathrm{SD3}}(0)\neq0$, it is a transport-style warp. Consistent with this transport boundary behavior, our experiments find that NAG is less stable with SD3 than with attractor-style warps.

\paragraph{Attractor Warp: EqM $c$-Function.}
We next consider the $c$-function shape introduced by Equilibrium Matching \citep{wang2025equilibrium}. EqM uses this function as a training-time target modulation, whereas in the \EqF framework this is an option for the shape of the inference-time warp. EqM defines
\begin{align}
c_{\lambda,\alpha}(\sigma)
=
\lambda
\min
\left\{
1,
\frac{\sigma}{1-\alpha}
\right\}
=
\begin{cases}
\lambda \dfrac{\sigma}{1-\alpha},
& 0 \leq \sigma \leq 1-\alpha,
\\[1ex]
\lambda,
& 1-\alpha < \sigma \leq 1.
\end{cases}
\end{align}

In \EqF, we use the unscaled shape $c_{1,\alpha}$ as an unnormalized warp shape, with $\alpha=0.8$:
\begin{align}
\tilde{\eta}_{c,\alpha}(\sigma)
=
c_{1,\alpha}(\sigma).
\end{align}
Applying the unit-time normalization derived above to ensure that the induced trajectory finishes in 1 unit of solver time gives
\begin{align}
\eta_{c,\alpha}(\sigma)
=
A_\alpha c_{1,\alpha}(\sigma),
\qquad
A_\alpha
=
\int_{\sigma_{\mathrm{end}}}^{1}
\frac{du}{c_{1,\alpha}(u)}.
\end{align}
Since $c_{1,\alpha}(\sigma)\to0$ as $\sigma\to0$, this warp induces attractor-style dynamics and is normalized over a finite terminal noise level $\sigma_{\mathrm{end}}>0$. In EqM, this normalization was not considered, and instead the magnitude scale $\lambda$ was swept over as an empirical training-time hyperparameter. The empirically optimal $\lambda$ and inference-time learning rate scale (called $\eta$ in their paper) combination that EqM ends up with (equivalent to setting $A_\alpha=1.7$) is in fact close to the analytical value of $A_\alpha=1.85$, when the final noise level is taken to be $\sigma_{\mathrm{end}}=0.001$.

\paragraph{Attractor Warp: Linear log-SNR.}
Finally, we consider a warp induced by linear motion in log-SNR space \citep{lu2022dpm}. This schedule is naturally specified by first defining a trajectory in log-SNR and then converting that trajectory back to noise-level space. For the Rectified Flow interpolation, the log signal-to-noise ratio is
\begin{align}
\lambda(\sigma)
=
\log \mathrm{SNR}(\sigma)
=
2\log\frac{1-\sigma}{\sigma}.
\end{align}
Solving for $\sigma$ in terms of a log-SNR value $\ell$ gives
\begin{align}
\lambda^{-1}(\ell)
=
\frac{1}{1+\exp(\ell/2)}.
\end{align}
Moving from noise to data corresponds to increasing log-SNR: as $\sigma\to1$, $\lambda(\sigma)\to-\infty$, while as $\sigma\to0$, $\lambda(\sigma)\to+\infty$. We therefore define a linear trajectory in log-SNR space between finite endpoints $\lambda_{\min}$ and $\lambda_{\max}$:
\begin{align}
\bar{\lambda}(s)
&=
\lambda_{\min}
+
s(\lambda_{\max}-\lambda_{\min}),
\qquad
s\in[0,1],
\\
\rho_{\log\mathrm{SNR}}(s)
&=
\lambda^{-1}(\bar{\lambda}(s))
=
\frac{1}{1+\exp(\bar{\lambda}(s)/2)}.
\end{align}

The corresponding positive denoising speed is
\begin{align}
\eta_{\log\mathrm{SNR}}(\rho_{\log\mathrm{SNR}}(s))
:=
-\frac{d\rho_{\log\mathrm{SNR}}(s)}{ds}
=
\frac{\lambda_{\max}-\lambda_{\min}}{2}
\rho_{\log\mathrm{SNR}}(s)
\left(1-\rho_{\log\mathrm{SNR}}(s)\right).
\end{align}
Equivalently, as a direct function of the current noise level $\sigma$,
\begin{align}
\eta_{\log\mathrm{SNR}}(\sigma)
=
\frac{\lambda_{\max}-\lambda_{\min}}{2}
\sigma(1-\sigma).
\end{align}
This warp vanishes as $\sigma\to0$, so it induces attractor-style dynamics near the data manifold. It also vanishes as $\sigma\to1$, allocating smaller updates near the pure-noise endpoint.

\subsection{Relation Between $\eta$ Warp Function and Budget-Adaptive Inference.}
The budget-adaptive inference algorithm presented in Algorithm~\ref{alg:budget_adaptive} defines an update rule with $\eta$ taken to be the identity warp. Budget-adaptivity calibrates the update scaling factor so that inference can expect to finish at the data manifold based on the most recent noise level estimate. This achieves a similar effect to an attractor warp, helping prevent inference with acceleration techniques like NAG from overshooting. Further exploration of parameterizations that incorporate the number of remaining steps $N - i$ into more general $\eta$ functions is an interesting direction for future work.

\begin{figure}[t]
    \centering
    \includegraphics[width=1\linewidth]{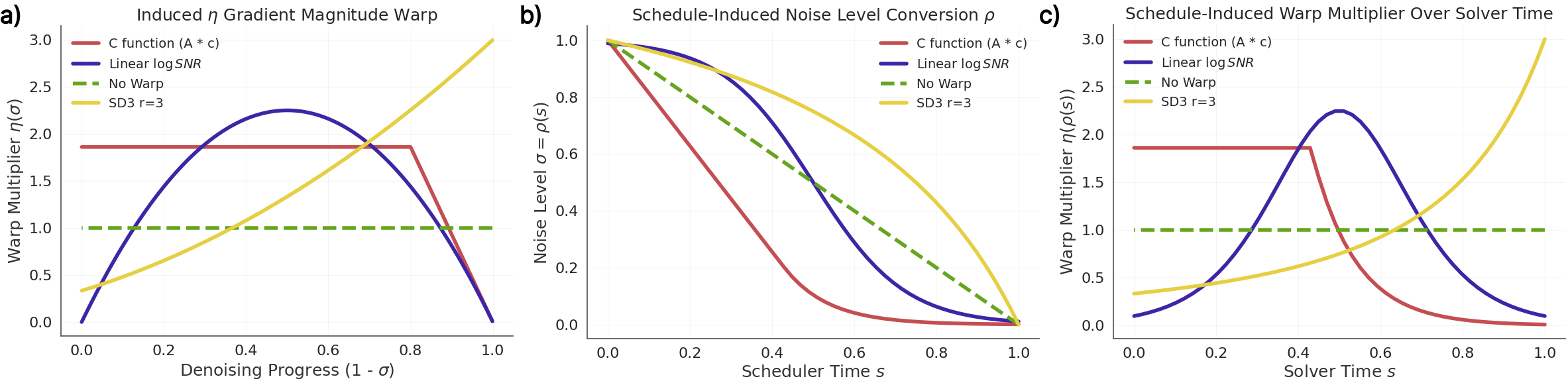}
    \caption{
    \textbf{Conversions between noise level, discretized solver time, and induced warp.}
    \textbf{a)} Conversion between $\sigma$ and update multiplier under different choices of the warp function $\eta$.
    \textbf{b)} Induced conversion between solver time $s$ and noise level $\sigma$ through the trajectory $\rho$. Equally sized steps in solver time can correspond to different changes in noise level.
    \textbf{c)} Warp multiplier as a function of solver time through the composition $\eta(\rho(s))$. Due to the vanishing update multiplier for attractor-style choices of $\eta$, equally sized steps in solver time take increasingly smaller steps as the sampler approaches data.
    }
    \label{fig:sampler_warp_conversions}
\end{figure}

\section{Details on Gradient Based Inference with Denoising Video Generative Models}
\label{apd:inference_details}

Nesterov Accelerated Gradient (NAG) is a method for accelerated function optimization. It is a classical algorithm that leverages momentum and a lookahead step with theoretical guarantees over gradient descent in general smooth convex function optimization \citep{nesterov1983method,pmlr-v28-sutskever13}. The algorithm is for fixed-point iteration, making it a suitable equilibrium landscape optimizer when using an inference warp $\eta$ such that $\eta (\widehat{\seq{\sigma}}) f_{\EqF}(\seq{x}) = \seq{0}$ for $\seq{x}$ on the data manifold. We empirically confirm in Section~\ref{5:experiments} that when the warp is chosen to not lead to such an attractor-style field at the data manifold, NAG produces subpar samples as expected, because the algorithm is prevented from converging. We provide a simple NAG sampling algorithm with \EqF for one sample step in Algorithm~\ref{alg:nag_sampling_step}; note that the signs of gradient terms align with the convention for the training vector $\seq{v}=\seq{\epsilon}-\seq{x}$. Future work may consider accelerated sampling with other fixed point iteration methods. 

\begin{algorithm}[t]
\caption{Nesterov Accelerated Gradient Sampling Step with \EqF}
\label{alg:nag_sampling_step}

\KwIn{
model $f_\EqF$, readout $h_\omega$, sample $\seq{x}^{(i)}$,
momentum $\seq{m}^{(i)}$, warp $\eta(\cdot)$,
momentum coefficient $\mu$
}

$\seq{x}_{\mathrm{look}}^{(i)}
\leftarrow
\seq{x}^{(i)}
-
\mu \seq{m}^{(i)}$\;

$\widehat{\seq{v}}^{(i)}, \seq{z}^{(i)}
\leftarrow
f_\EqF(\seq{x}_{\mathrm{look}}^{(i)})$\;

$\widehat{\seq{\sigma}}^{(i)}
\leftarrow
h_\omega(\seq{z}^{(i)})$\;

$\seq{m}^{(i+1)}
\leftarrow
\mu \seq{m}^{(i)}
+
\eta(\widehat{\seq{\sigma}}^{(i)})
\odot
\widehat{\seq{v}}^{(i)}$\;

$\seq{x}^{(i+1)}
\leftarrow
\seq{x}^{(i)}
-
\seq{m}^{(i+1)}$\;

\Return{$\seq{x}^{(i+1)}, \seq{m}^{(i+1)}$}\;
\end{algorithm}
\section{Autoregressive Inference Procedures for Standard Denoising Video Generative Models}
\label{apd:inference_sched_math}

In this section, we elaborate on the static inference algorithms used by standard Flow Matching and Diffusion video models during inference, such as those used by \cite{chen2024diffusionforcingnexttokenprediction,ruhe2024rolling,song2025historyguidedvideodiffusion,hafner2025training}. \EqF instead relies on its own estimate of progress to the data manifold within a closed loop, leveraging gradient-based solvers to propose data-adaptive steps and possessing unique early stopping capabilities (Section~\ref{4.3:eta_schedule_explanation}, Algorithm~\ref{alg:adaptive_early_stopping}). However, some aspects, such as the bake-in period, are also shared by \EqF. Though the autoregressive inference style we utilize here has fundamental weaknesses in preserving information across repeated inference rounds, we isolate our study to the denoising and inference procedure, and leave the memory issue to future work \citep{lillemark2026flow}.

\paragraph{Framewise Noise Level Training Enables Stable Autoregressive Rollouts.}
Rolling inference algorithms correlate the noise level of a frame with its index in the video, with lower noise levels mapping to earlier frames and higher noise levels to later frames. This capability is enabled by training with different noise levels per frame, instead of having just one noise level for all frames. Diffusion Forcing training assigns independent noise levels per frame \citep{chen2024diffusionforcingnexttokenprediction}, generally preferred by practitioners since it enables a variety of inference schedules to be `within training distribution' \citep{oasis2024, hafner2025training}. Other methods like Rolling Diffusion bake in a specific inference schedule during training, such as one resembling a staircase with increasing noise levels per index \citep{ruhe2024rolling,cachay2025elucidated}.

\paragraph{Inference Schedule Parameter Definitions.}
These inference schedules specify a global geometry that is predetermined prior to inference, and we expand upon this here in order to explain the baseline inference algorithm for easier comparison. It is static, fully defining the number of steps per frame and the noise levels that will be visited. An inference schedule is defined by three fundamental quantities: the length of the prediction horizon at each denoising step $H$; the stride length of a group of frames that will be emitted together, denoted $S$; and the number of denoising steps the model will take for a frame, denoted as the depth $D$. The algorithm, as presented, can be run in an infinite loop so that any amount of frames can be generated.

\paragraph{Global and Local Constraints of Inference Schedules.}
The inference algorithm alternates between an outer loop operation of sliding forward the denoising horizon and the inner loop refinement of the active window. The outer loop can run indefinitely for infinite autoregressive generation. The stride breaks the prediction horizon into $G=H/S$ groups of frames, where each group remains at the same noise level during inference and has $S$ frames. Each iteration of the outer loop symmetrically moves forward by $S$ frames, popping $S$ clean frames that have finished denoising, appending $S$ noisy frames at the end, and shifting the active window forward by $S$ frames to continue the rollout. In between these pop-shifts, the inner loop of the algorithm takes exactly $K=D/G$ denoising steps in order to satisfy the global consistency of $D$ steps per frame; intuitively, splitting the horizon into $G$ groups means that we need to take $K$ steps for each pop-shift.

\paragraph{Noise Level Schedule Matrix.}
The number of denoising steps $D$ determines a set of noise levels that each frame will visit during inference, $\{\sigma^{(d)} \}_{d=0}^D$ where $\sigma^{(0)} = 0$ and $\sigma^{(D)}=1$. These noise levels may be spaced out over the interval $[0, 1]$ in essentially any manner. Common approaches include spacing them out evenly, and extending the EDM inference schedule from static image generation to each chunk of video \citep{cachay2025elucidated}. We refer to choices of these visited noise levels as warps of the solver time $s$ (see Appendix~\ref{apd:sampler_warp_schedules}). In our exposition of the algorithm, for simplicity we can assume the noise levels are evenly spaced. At a given iteration during the steady state of the algorithm (after the bake-in period), the noise level transitions for $K$ inner steps for the active horizon are
\begin{align}
[
\sigma^{(K)} \cdot \mathbf{1},
\dots,
\sigma^{((G-1)K)} \cdot \mathbf{1},
\sigma^{(D)} \cdot \mathbf{1}
]
\quad
\rightarrow
\quad
[
\sigma^{(0)} \cdot \mathbf{1},
\dots,
\sigma^{((G-2)K)} \cdot \mathbf{1},
\sigma^{((G-1)K)} \cdot \mathbf{1}
],
\label{eq:rolling_noise_transition}
\end{align}
where each multiplication by $\mathbf{1}$ yields a group of $S$ frames that all have the same noise level. After that transition, the first group of $S$ frames have reached the clean boundary and are `emitted.' The horizon is then pop-shifted forward and a new group of fully noisy frames with noise level $\sigma^{(D)}$ is appended to recover the same steady-state noise geometry.

\paragraph{Bake-In Period for Initial Conditions.}
The rolling schedule described above is only well-defined once the local horizon has reached its steady-state triangular noise geometry. However, at initialization all $H$ frames in the horizon are sampled from pure noise,
so the noise schedule begins as
\begin{align}
[
\sigma^{(D)} \cdot \mathbf{1},
\sigma^{(D)} \cdot \mathbf{1},
\dots,
\sigma^{(D)} \cdot \mathbf{1}
],
\end{align}

The bake-in period constructs this steady-state geometry through $G-1$ masked bake-in iterations. At bake-in iteration $g \in \{1,\dots,G-1\}$, the model evaluates the full context and prediction horizon, but the denoising update is applied only to the first $g$ groups of the horizon. Thus, the first group receives $(G-1)K$ bake-in steps, the second group receives $(G-2)K$ steps, and so on, while the final group remains at pure noise. After bake-in, the horizon therefore has noise geometry
\begin{align}
[
\sigma^{(K)} \cdot \mathbf{1},
\sigma^{(2K)} \cdot \mathbf{1},
\dots,
\sigma^{((G-1)K)} \cdot \mathbf{1},
\sigma^{(D)} \cdot \mathbf{1}
].
\end{align}
The first ordinary inference iteration then applies $K$ steps to the entire horizon, bringing the first group to the clean boundary before it is emitted.

\paragraph{Autoregressive Context Conditioning and Full Algorithm.}
The final specification of the autoregressive inference algorithm determines how context $\seq{c}$ of length $C$ guides the generation of the active prediction horizon $\seq{x}$. Similar to History Guidance \citep{song2025historyguidedvideodiffusion}, we treat the conditioning frames to be unified within the same self-attention window as $\seq{x}$. The noise level conditioning is injected into the token space through AdaLN on a per frame basis \citep{peebles2023scalable, song2025historyguidedvideodiffusion}. 

The schedules explained in Equation~\ref{eq:rolling_noise_transition} define the transitions of noise levels in the active horizon. Denoting the first token in the active window to have index $1$, the full input to the model is formed from concatenating the context (denoted with indices for clarity) $\seq{c}_{-C+1:0}$ with the active horizon $\seq{x}_{1:H}$. The unified prediction horizon is $[\seq{c}_{-C+1:0}, \seq{x}_{1:H}]$, with the noise levels fixed to $\seq{0}_{-C + 1:0}$ for the context, and to $\seq{\sigma}_{1:H}$ for the active horizon. 

The outer loop and masked inner loop can be found in Algorithms~\ref{alg:outer_rolling_inference} and~\ref{alg:inner_denoising_loop}, respectively. In the algorithm, we define the function $\textsc{Emit}$ to slide the $S$ most recent cleanly generated frames to the context. During the bake-in period, only the prefix selected by the bake-in mask is updated. After the steady state is reached, each clean group is emitted and appended to the context, and the window slides forward by $S$ context frames, thereby replacing the oldest $S$ context frames.

\begin{algorithm}[t]
\caption{Rolling Inference with Masked Bake-In}
\label{alg:outer_rolling_inference}
\KwIn{
Model $f_{\mathrm{FM}}$, context length $C$, horizon $H$, stride $S$,
denoising steps $D$, inner steps $K$, initial context
$\seq{c}_{-C+1:0}$
}
$M \leftarrow H/S$\;
$\seq{x}_{1:H} \sim \mathcal N(0,I)$\;
$\seq{\sigma}_{1:H} \leftarrow \ones$\;
\tcp{Construct the initial denoising staircase, bake-in period}
\For{$b \leftarrow 1$ \KwTo $M-1$}{
    $\seq{m}_{1:H}
    \leftarrow
    [\ones_{bS},\,\zeros_{H-bS}]$\;
    $(\seq{x},\seq{\sigma})
    \leftarrow
    \textsc{MaskedInnerDenoise}
    (f_{\mathrm{FM}},\seq{c},\seq{x},\seq{\sigma},D,K,\seq{m})$\;
}
\tcp{Ordinary rolling generation}
\While{\textnormal{NotDone}}{
    $(\seq{x},\seq{\sigma})
    \leftarrow
    \textsc{MaskedInnerDenoise}
    (f_{\mathrm{FM}},\seq{c},\seq{x},\seq{\sigma},D,K,\ones_H)$\;
    $\textsc{Emit}(\seq{x}_{1:S})$\;
    $\seq{c}
    \leftarrow
    [\seq{c}_{S+1:C},\seq{x}_{1:S}]$\;
    $\seq{x}_{1:H-S} \leftarrow \seq{x}_{S+1:H}$\;
    $\seq{\sigma}_{1:H-S} \leftarrow \seq{\sigma}_{S+1:H}$\;
    $\seq{x}_{H-S+1:H}\sim\mathcal N(0,I)$\;
    $\seq{\sigma}_{H-S+1:H}\leftarrow\ones$\;
}
\end{algorithm}

\begin{algorithm}[t]
\caption{Masked Inner Denoising}
\label{alg:inner_denoising_loop}
\KwIn{
Model $f_{\mathrm{FM}}$, context $\seq{c}$, horizon $\seq{x}$,
noise levels $\seq{\sigma}$, denoising steps $D$, inner steps $K$,
update mask $\seq{m}$
}
\For{$k \leftarrow 1$ \KwTo $K$}{
    $\widehat{\seq{v}}
    \leftarrow
    f_{\mathrm{FM}}
    ([\seq{c},\seq{x}],[\zeros_C,\seq{\sigma}])$\;
    $\Delta\seq{\sigma}\leftarrow\frac{1}{D}\seq{m}$\;
    $\seq{x}\leftarrow
    \seq{x}-\Delta\seq{\sigma}\odot\widehat{\seq{v}}_{1:H}$\;
    $\seq{\sigma}\leftarrow
    \seq{\sigma}-\Delta\seq{\sigma}$\;
}
\Return{$(\seq{x},\seq{\sigma})$}\;
\end{algorithm}

\section{Online Replanning Inference Details}
\label{apd:online_replanning_inference}

\begin{figure}[t]
    \centering
    \includegraphics[width=\linewidth]{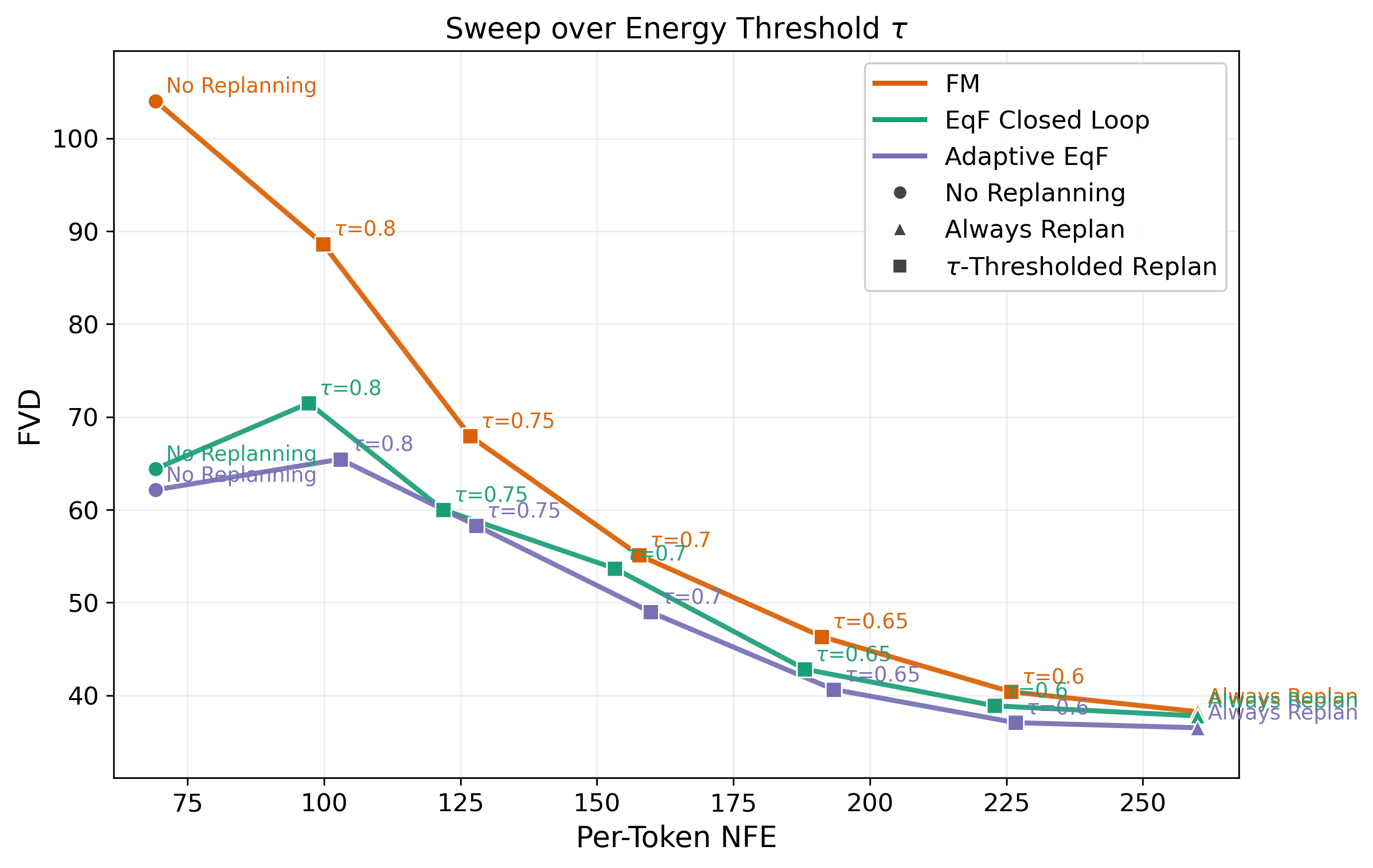}
    \caption{Online replanning on Minecraft. The replanning threshold $\tau$ controls when the retained future horizon is discarded and regenerated. $\tau=\infty$ never replans, while $\tau=0$ always replans. Intermediate thresholds trade off reuse of the current plan against responsiveness to new observations.}
    \label{fig:planning_threshold}
\end{figure}

We introduce an online replanning experiment intended to more closely reflect downstream applications of video generation models, such as world modeling in a closed loop environment. In such settings, an agent predicts a future video trajectory using a video generation model as a world model, turning the imagined future into an executable action \citep{du2023learning,chen2025large}. After execution, the agent receives a new observation from the environment (of a length less than the full predicted trajectory). The remaining prediction was generated before this new observation was made, and may therefore no longer be consistent with the new full ground truth state. An efficient planner should try to optimize the tradeoff between quality and cost for replanning. Reusing the previous plan reduces generation cost, but risks propagating predictions that were generated under stale observations. Regenerating the entire horizon is more responsive to new information, but requires additional sampling compute to regenerate all those frames.

We study this tradeoff with a new inference setting using a denoising-based compatibility score inspired by Adaptive Online Replanning with Diffusion Models, a paradigm in which a diffusion-based state-action plan can be re-evaluated as observations are streamed \citep{zhou2023adaptive}. In our setting, we isolate the video generation component, treating actions as fixed from the dataset. The compatibility score evaluates whether the existing prediction remains consistent with the newly observed context. When the score indicates that the prediction remains compatible, the planner retains the existing future and generates only the newly appended portion of the horizon. Otherwise, it discards the previous prediction and regenerates the entire future. We find that \EqF provides a stronger quality/compute tradeoff than noise-conditional Flow Matching across replanning thresholds. 

\subsection{Clean Buffer, Evaluation Buffer, and Compatibility Score}

The replanning setting differs from the standard rolling inference setting we use in other experiments. Typically, generated frames are both evaluated as predictions, and reused as context for subsequent generation. Here, the model maintains a predicted future video plan, commits a short prefix of that plan to the evaluation buffer, and then receives the corresponding ground-truth observations to use as context for subsequent generation. Below, we describe how the algorithm maintains the state, and how the compatibility score is defined and used to decide whether to regenerate.

\paragraph{Planning State and Observation Feedback.}
We use the notation of Appendix~\ref{apd:inference_sched_math}. For a generation round, let $C$ denote the context length, $H$ the prediction horizon, and $S$ the number of frames observed at each replanning step. $S$ also denotes the number of frames committed to the evaluation buffer at each step. For each outer replanning step $\ell$, let $r_\ell = \ell \cdot S$ denote the number of frames committed before step $\ell$. The algorithm maintains the evaluation buffer $\seq{x}^{\mathrm{eval}}$, which stores the predictions used for downstream evaluation, and a working buffer of clean frames $\seq{b}^{(\ell)}_{-C+1:H}$. The context
$\seq{b}^{(\ell)}_{-C+1:0}$ contains the most recent $C$ ground-truth observations, while
$\seq{b}^{(\ell)}_{1:H}$ contains the current predicted future plan of fully denoised frames. Here, clean means that they are not noisy frames; only the context is ground truth, while the future plan consists of denoised predictions. The conditioning sequence, which for our experiments is the ground truth action sequence, is denoted as $\seq{y}^{\mathrm{data}}$. 

At each step $\ell$, the first $S$ frames of the current working buffer are committed to the evaluation buffer,
\begin{align}
\seq{x}^{\mathrm{eval}}_{r_\ell+1:r_\ell+S}
=
\seq{b}^{(\ell)}_{1:S}.
\end{align}
The corresponding ground-truth observations are then revealed and written into the working buffer,
\begin{align}
\seq{b}^{(\ell)}_{1:S}
=
\seq{x}^{\mathrm{gt}}_{r_\ell+1:r_\ell+S}.
\end{align}
After this update,
$\seq{b}^{(\ell)}_{-C+1:S}$ forms an observed prefix of length $C+S$, while
$\seq{b}^{(\ell)}_{S+1:H}$ is the existing future plan generated before the new observations were available. The replanning decision evaluates whether this existing future prediction remains compatible with the updated observed prefix.

\paragraph{Compatibility Score.}
We measure whether the retained future remains consistent with the updated observed context. Specifically, we temporarily noise the existing prediction and evaluate the model’s denoising error while conditioning on the newly observed prefix.
At replanning step $\ell$, we evaluate at $M$ probe noise levels, $\sigma_m\in[0,1]$. For each probe level, the observed prefix remains clean and only the existing future is noised: 
\begin{align}
\seq{\sigma}^{(m)}_{-C+1:H}
=
\left[
\zeros_{C+S},
\sigma_m\ones_{H-S}
\right].
\end{align}
The clean working buffer $\seq{b}^{(\ell)}$ is used to construct both the temporary noised value and its corresponding velocity target. In the same denoising style as Equation~\ref{eq:data_gen_process}, for sampled Gaussian noise $\seq{\epsilon}^{(m)}$, we construct
\begin{align}
\seq{b}^{(\ell),\seq{\sigma}^{(m)}}_{-C+1:H}
&=
(\ones-\seq{\sigma}^{(m)}_{-C+1:H})\odot\seq{b}^{(\ell)}_{-C+1:H}
+
\seq{\sigma}^{(m)}_{-C+1:H}\odot\seq{\epsilon}^{(m)},
\\
\seq{v}^{(\ell,m)}
&=
\seq{\epsilon}^{(m)}-\seq{b}^{(\ell)}.
\end{align}

We define the compatibility score as
\begin{align}
\mathcal{S}^{(\ell)}
=
\frac{1}{M}
\sum_{m=1}^{M}
\operatorname{MSE}_{S+1:H}
\Bigl(
f_\theta\!\left(
\seq{b}^{(\ell),\seq{\sigma}^{(m)}}_{-C+1:H},
\seq{\sigma}^{(m)}_{-C+1:H}
\,;\,
\seq{y}^{\mathrm{data}}_{r_\ell-C+1:r_\ell+H}
\right),
\seq{v}^{(\ell,m)}
\Bigr),
\label{eq:planning_compatibility}
\end{align}
where $\operatorname{MSE}_{S+1:H}$ averages over all elements of the existing future frames. Lower values of $\mathcal{S}^{(\ell)}$ indicate that the existing prediction is more compatible with the newly observed prefix. In other words, when the denoising loss is low even with the new observed frames, it is a proxy for the model estimating that the frames are consistent with each other. The noise level condition is included in this equation for clarity, as used by noise-conditional baselines like Flow Matching; for \EqF it is ignored.

\paragraph{Plan Reuse and Regeneration.}
A threshold $\tau$ converts the compatibility score into a replanning decision.

\emph{Plan reuse.}
If $\mathcal{S}^{(\ell)}<\tau$, the existing future prediction is considered compatible with the new observations. The existing predicted $H-S$ frames are shifted forward, $S$ fresh noisy frames are appended to restore the full prediction horizon, and only these newly appended frames are denoised. The existing prediction therefore remains unchanged and its next $S$ frames are committed at the next replanning round.

\emph{Plan regeneration.}
If $\mathcal{S}^{(\ell)}\geq\tau$, the existing prediction is considered incompatible with the new observations. The retained $H-S$ frames are discarded and replaced by full noise, $S$ additional noisy frames are appended, and the complete $H$-frame future horizon is regenerated.

The compatibility probes are temporary and do not modify the clean working buffer. The complete procedure is provided in Algorithms~\ref{alg:closed_loop_planning},\ref{alg:planning_energy},\ref{alg:masked_planning_denoise}. Note that the function \textsc{MaskedDenoise} in Algorithm~\ref{alg:masked_planning_denoise} is replaced by a specialized \EqF sampler when appropriate, restricted to the frames indicated by the mask $\seq{m}_{1:H}$.

\subsection{Planning on Minecraft}
\label{apd:planning_minecraft_results}

\paragraph{Experimental Setup.}
We evaluate the online replanning procedure on Minecraft using the dataset and model settings described in Appendix~\ref{apd:mc_expt_details_settings}. We compute the compatibility score using $M=5$ evenly spaced probe noise levels, which we found to provide stable estimates comparable to larger values of $M$. We sweep the replanning threshold $\tau$ over a range calibrated to the observed compatibility scores and include the two boundary values of $\tau$. Setting $\tau=0$ causes the model to replan after every newly observed stride, while $\tau=\infty$ always retains the previous prediction and generates only the newly appended frames. Intermediate thresholds adaptively trade off between these two behaviors.

\paragraph{Quality/Compute Tradeoff.}
Figure~\ref{fig:planning_threshold} reports prediction quality as a function of replanning cost. Specifically, we measure the average per-frame number of function evaluations (NFE) to achieve an overall FVD. Always replanning produces the strongest prediction quality, as the complete future horizon is regenerated after every new observation, but requires the most computation. Never replanning minimizes regeneration cost but can retain predictions that are no longer consistent with the observed trajectory. Intermediate thresholds provide a smooth tradeoff by regenerating the horizon only when its compatibility score exceeds $\tau$. Across this sweep, Budget-Adaptive \EqF achieves the strongest quality/compute tradeoff, while the Flow Matching baseline exhibits the weakest tradeoff.

\paragraph{Mixed-Context Prediction.}
The retained horizon can contain frames generated before the newest observations were available, while newly appended frames are generated after incorporating those observations. The resulting prediction may therefore combine frames produced under different context states and can become internally inconsistent. This represents an out-of-distribution denoising setting because both Flow Matching and \EqF are trained on noisy versions of temporally consistent videos. 
We hypothesize that \EqF is more robust to this mismatch because it does not require an externally supplied noise level to accurately describe the effective state of each frame. In contrast, a supplied noise level that does not match the noisy video state may compound the extent to which the input is out-of-distribution, making Flow Matching's denoising error higher. This interpretation is consistent with the noise condition mismatch analyzed in Appendix~\ref{sec_bowls}.

\paragraph{Alternative Renoising Policies.}
Following prior work on adaptive online replanning \citep{zhou2023adaptive}, we also evaluated policies that partially renoise the existing prediction rather than either preserving it or fully restarting the horizon. These included pyramidal profiles, in which the noise level increases across the retained horizon, and flat profiles, in which all retained frames are renoised to the same intermediate level. We additionally considered hybrid policies interpolating between pyramidal renoising and full restart. These variants did not consistently improve over full restart and were generally less stable across thresholds. We therefore report the simpler restart-based replanning procedure, which provides the clearest quality/compute tradeoff.

We would like to emphasize here that due to the action condition coming from the ground truth data instead of being predicted, the discrepancy between the predicted future and the ground truth observation may be less than what it would be in a more realistic full planning setting. We hope this work can further inspire future work that explores these quality/cost tradeoffs in more realistic world modeling settings.


\begin{algorithm}[t]
\caption{Closed-Loop Planning Inference with Observation Feedback. \textsc{CompatibilityScore} is implemented in Algorithm~\ref{alg:planning_energy}. \textsc{MaskedDenoise} is implemented in Algorithm~\ref{alg:masked_planning_denoise}.}
\label{alg:closed_loop_planning}

\KwIn{
Model $f_\theta$, ground-truth video $\seq{x}^{\mathrm{gt}}_{-C+1:R}$,
conditioning sequence $\seq{y}^{\mathrm{data}}$, context length $C$, horizon $H$,
observation stride $S$, rollout length $R$ (divisible by $S$), denoising steps $D$, compatibility threshold $\tau$,
probe noise levels $\{\sigma_{\mathrm{probe}}^{(m)}\}_{m=1}^M$
}

\KwOut{Evaluation buffer $\seq{x}^{\mathrm{eval}}_{1:R}$}

$r \leftarrow 0$\;
$\seq{x}^{\mathrm{eval}}_{1:R} \leftarrow \emptyset$\;

\tcp{Initialize the working buffer with ground truth context and a noisy future}
$\seq{b}_{-C+1:0} \leftarrow \seq{x}^{\mathrm{gt}}_{-C+1:0}$\;
$\seq{b}_{1:H} \sim \mathcal N(0,I)$\;

\tcp{Bootstrap an initial clean plan over the full horizon, initialize active mask}
$\seq{m}_{1:H} \leftarrow \ones$\;
$\seq{b}_{-C+1:H}
\leftarrow
\textsc{MaskedDenoise}
(f_\theta,\seq{b}_{-C+1:H},\seq{m}_{1:H},D,
\seq{y}^{\mathrm{data}}_{r-C+1:r+H})$\;

\While{$r < R$}{

    \tcp{Commit generated frames to the evaluation buffer}
    $\seq{x}^{\mathrm{eval}}_{r+1:r+S}
    \leftarrow
    \seq{b}_{1:S}$\;

    \tcp{Reveal ground-truth observations and overwrite the clean buffer}
    $\seq{b}_{1:S}
    \leftarrow
    \seq{x}^{\mathrm{gt}}_{r+1:r+S}$\;

    \tcp{Score whether the retained future plan is compatible with the new observations}
    $\mathcal E
    \leftarrow
    \textsc{CompatibilityScore}
    (f_\theta,\seq{b}_{-C+1:H},S,
    \{\sigma_{\mathrm{probe}}^{(m)}\}_{m=1}^M,
    \seq{y}^{\mathrm{data}}_{r-C+1:r+H})$\;

    \tcp{Slide the clean ground-truth context}
    $\seq{b}_{-C+1:0}
    \leftarrow
    [\seq{b}_{-C+1+S:0},\seq{x}^{\mathrm{gt}}_{r+1:r+S}]$\;

    \tcp{Slide the existing future prediction}
    $\seq{b}_{1:H-S}
    \leftarrow
    \seq{b}_{S+1:H}$\;

    $r \leftarrow r+S$\;

    \eIf{$\mathcal E < \tau$}{

        \tcp{Reuse the existing prediction; only generate the newly appended stride}
        $\seq{b}_{H-S+1:H} \sim \mathcal N(0,I)$\;
        $\seq{m}_{1:H}
        \leftarrow
        [\zeros_{H-S},\ones_{S}]$\;

    }{
        \tcp{Discard and regenerate the complete future horizon}
        $\seq{b}_{1:H}\sim\mathcal N(0,I)$\;
        $\seq{m}_{1:H}\leftarrow\ones$\;
    }

    \tcp{Denoise only the frames indicated by the mask}
    $\seq{b}_{-C+1:H}
    \leftarrow
    \textsc{MaskedDenoise}
    (f_\theta,\seq{b}_{-C+1:H},\seq{m}_{1:H},D,
    \seq{y}^{\mathrm{data}}_{r-C+1:r+H})$\;
}

\Return{$\seq{x}^{\mathrm{eval}}_{1:R}$}\;

\end{algorithm}

\begin{algorithm}[t]
\caption{Compatibility Score from a Clean Working Buffer}
\label{alg:planning_energy}

\KwIn{
Model $f_\theta$, clean buffer $\seq{b}_{-C+1:H}$, observation stride $S$,
probe noise levels $\{\sigma_{\mathrm{probe}}^{(m)}\}_{m=1}^M$,
conditions for the active window $\seq{y}_{-C+1:H}$
}
\KwOut{Compatibility score $\mathcal E$}

$\mathcal E \leftarrow 0$\;

\For{$m \leftarrow 1$ \KwTo $M$}{

    \tcp{The clean buffer is not modified; this is only a temporary noised probe}
    $\seq{\epsilon}^{(m)}_{S+1:H}\sim\mathcal N(0,I)$\;

    \tcp{Observed prefix stays clean; only retained predictions are noised}
    $\seq{\sigma}^{(m)}_{-C+1:S}\leftarrow\zeros_{C+S}$\;
    $\seq{\sigma}^{(m)}_{S+1:H}
    \leftarrow
    \sigma_{\mathrm{probe}}^{(m)}\ones_{H-S}$\;

    $(\seq{b}^{\seq{\sigma}^{(m)}})_{-C+1:S}
    \leftarrow
    \seq{b}_{-C+1:S}$\;

    $(\seq{b}^{\seq{\sigma}^{(m)}})_{S+1:H}
    \leftarrow
    (\ones-\seq{\sigma}^{(m)}_{S+1:H})
    \odot
    \seq{b}_{S+1:H}
    +
    \seq{\sigma}^{(m)}_{S+1:H}
    \odot
    \seq{\epsilon}^{(m)}_{S+1:H}$\;

    $\seq{v}^{(m)}_{S+1:H}
    \leftarrow
    \seq{\epsilon}^{(m)}_{S+1:H}
    -
    \seq{b}_{S+1:H}$\;

    $\seq{u}^{(m)}_{-C+1:H}
    \leftarrow
    f_\theta(
    \seq{b}^{\seq{\sigma}^{(m)}}_{-C+1:H},
    \seq{\sigma}^{(m)}_{-C+1:H}
    \,;\,
    \seq{y}_{-C+1:H}
    )$\;

    \tcp{MSE averaged over all temporal, spatial, and channel dimensions in the existing future}
    $\mathcal E
    \leftarrow
    \mathcal E
    +
    \operatorname{MSE}\!\left(
        \seq{u}^{(m)}_{S+1:H},
        \seq{v}^{(m)}_{S+1:H}
    \right)$\;
}

$\mathcal E \leftarrow \frac{1}{M}\mathcal E$\;

\Return{$\mathcal E$}\;

\end{algorithm}

\begin{algorithm}[t]
\caption{Masked future denoising for planning. Initialized as fixed-step Euler schedule for simplicity.}
\label{alg:masked_planning_denoise}

\KwIn{
Model $f_\theta$, buffer $\seq{b}_{-C+1:H}$, binary active mask $\seq{m}_{1:H}$,
denoising steps $D$, conditioning window $\seq{y}_{-C+1:H}$
}

\tcp{Clean context and inactive planned frames have zero noise}
$\seq{\sigma}_{-C+1:0}\leftarrow\zeros_C$\;
$\seq{\sigma}_{1:H}\leftarrow\seq{m}_{1:H}$\;

$\Delta\seq{\sigma}_{1:H}
\leftarrow
\frac{1}{D}\seq{m}_{1:H}$\;

\For{$d \leftarrow 1$ \KwTo $D$}{

    $\seq{u}_{-C+1:H}
    \leftarrow
    f_\theta(
    \seq{b}_{-C+1:H},
    \seq{\sigma}_{-C+1:H}
    \,;\,
    \seq{y}_{-C+1:H}
    )$\;

    \tcp{Only noised future frames are updated}
    $\seq{b}_{1:H}
    \leftarrow
    \seq{b}_{1:H}
    -
    \Delta\seq{\sigma}_{1:H}
    \odot
    \seq{u}_{1:H}$\;

    $\seq{\sigma}_{1:H}
    \leftarrow
    \seq{\sigma}_{1:H}
    -
    \Delta\seq{\sigma}_{1:H}$\;
}

\Return{$\seq{b}_{-C+1:H}$}\;

\end{algorithm}



\end{document}